\newacro{vic}[VIC]{Variable Impedance Control}
\newacro{pi2}[PI\textsuperscript{2}]{Policy Improvement with Path Integrals}
\newacro{dmp}[DMP]{Dynamic Movement Primitive}
\newacro{pdmp}[PDMP]{Periodic \ac{dmp}}
\newacro{cmp}[CMP]{Compliant Movement Primitive}
\newacro{seds}[SEDS]{Stable Estimator of Dynamical Systems}
\newacro{sds}[SDS]{Stable Dynamical Systems}
\newacro{ilc}[ILC]{Iterative  Learning  Control}
\newacro{gp}[GP]{Gaussian Process}
\newacro{gpr}[GPR]{Gaussian Process Regression}
\newacro{gmm}[GMM]{Gaussian Mixture Model}
\newacro{gmr}[GMR]{Gaussian Mixture Regression}
\newacro{ppc}[PPC]{Passivity-Preservation Control}
\newacro{tpgmm}[TP-GMM]{Task-Parameterized \ac{gmm}}
\newacro{lfd}[LfD]{Learning from Demonstration}
\newacro{il}[IL]{Imitation Learning}
\newacro{wls}[WLS]{weighted least-squares}
\newacro{spd}[SPD]{Symmetric Positive Definite}
\newacro{emg}[EMG]{Electromyography}
\newacro{vil}[VIL]{Variable Impedance Learning}
\newacro{vilc}[VILC]{Variable Impedance Learning Control}
\newacro{ai}[AI]{Artificial Intelligent}
\newacro{sea}[SEA]{Series Elastic Actuation}
\newacro{dof}[DoF]{Degree of Freedom}
\newacro{vmp}[VMP]{Via-points Movement Primitive}
\newacro{lwr}[LWR]{Locally Weighted Regression}
\newacro{rl}[RL]{Reinforcement Learning}
\newacro{auv}[AUV]{Autonomous Underwater Vehicle}
\newacro{uav}[UAV]{Unmanned Areal Vehicle}
\newacro{cmaes}[CMA-ES]{Covariance Matrix Adaptation-Evolution Strategies}
\newacro{ccdmp}[CC-DMP]{Coordinate Change-\acp{dmp}}
\newacro{gpdmp}[GPDMP]{ Global Parametric Dynamic Movement Primitive}
\newacro{bbo}[BBO]{Black-Box Optimization}
\newacro{dmpbbo}[DMPBBO]{\ac{dmp} \ac{bbo}}
\newacro{ros}[ROS]{Robotic Operating System}
\newacro{cnn}[CNN]{Convolutional Neural Network}
\newacro{nn}[NN]{Neural Network}
\newacro{aedmp}[AEDMP]{AutoEncoded \ac{dmp}}
\newacro{power}[PoWER]{Policy Learning by Weighting Exploration with the Returns}
\newacro{pd}[PD]{Proportional Derivative}
\newacro{momp}[MoMP]{Mixture of Motor Primitives}
\newacro{promp}[ProMP]{Probabilistic Movement Primitives}
\newacro{hrl}[HRL]{Hierarchical \ac{rl}}
\newacro{kmp}[KMP]{Kernelized Movement Primitive}
\newacro{rbf}[RBF]{Radial Basis Function}
\newacro{rbfnn}[RBF-NN]{\ac{rbf}-\ac{nn}}
\newacro{mle}[MLE]{Maximum Likelihood Estimation}
\newacro{qpdmp}[QP-DMP]{Unit Quaternion-based Periodic \ac{dmp}}
\newacro{rmpdmp}[RMP-DMP]{Riemannian Metric-based Periodic \ac{dmp}}
\newacro{ds}[DS]{Dynamical System}
\newacro{rmp}[RMP]{Riemannian Motion Policy}
\newacro{fdm}[FDM]{Fast Diffeomorphic Matching}
\newacro{uq}[UQ]{Unit Quaternion}
\newacro{ours}[SDS-RM]{Stable Dynamical System on Riemannian Manifolds}
\newacro{cr}[CR]{Convergence Rate}
\newacro{rmse}[RMSE]{Root Mean Square Error}
\newacro{tt}[TT]{Training Time}
\newacro{ts}[TS]{Tangent Space}
\newacro{rgmm}[R-GMM]{Riemannian \ac{gmm}}
\newacro{eflow}[E-FLOW]{Euclideanizing Flows}
\newacro{iflow}[I-FLOW]{Imitation Flow}
\newcommand{\bm}[1]{\boldsymbol{\mathbf{#1}}}
\newcommand{\Kdemo}{{\bm{K}^{demo}}}
\newcommand{\q}{\bm{q}}
\newcommand{\LogQ}{\text{Log}^{q}}
\newcommand{\bP}{\bm{a}}
\newcommand{\bQ}{\bm{b}}
\newcommand{\gauss}{\mathcal{N}}
\newcommand{\riM}{\mathcal{M}}
\newcommand{\tsPsi}{\mathcal{T}_{\boldfrak{m}}\mathcal{M}}
\newcommand{\tsHatPsi}{\mathcal{T}_{\boldfrak{a}}\mathcal{M}}
\newcommand{\LogHatPsi}[1]{\text{Log}_{\boldfrak{a}}\left(#1\right)}
\newcommand{\LogHatPsiPsi}{\text{Log}_{\boldfrak{a}}\left(\boldfrak{g}\right)}
\newcommand{\LogPsi}[1]{\text{Log}_{\boldfrak{m}}\left(#1\right)}
\newcommand{\ExpPsi}[1]{\text{Exp}_{\boldfrak{m}}\left(#1\right)}
\newcommand{\LogG}{\text{Log}_{\boldfrak{g}}}
\newcommand{\Log}{\text{Log}}
\newcommand{\ExpG}{\text{Exp}_{\boldfrak{g}}}
\newcommand{\bFg}{\boldfrak{g}}
\newcommand{\bFp}{\boldfrak{a}}
\newcommand{\bFq}{\boldfrak{b}}
\newcommand{\bFm}{\boldfrak{m}}
\newcommand\norm[1]{\left\lVert#1\right\rVert}
\newcommand\expm[1]{\text{expm}(#1)}
\newcommand\logm[1]{\text{logm}(#1)}
\newcommand{\boldfrak}[1]{\bm{\mathfrak{#1}}}
\newcommand{\trsp}{{^{\top}}}
\newcommand{\figref}[1]{Fig.~\hyperref[#1]{\ref*{#1}}}
\newcommand{\figsref}[1]{Figures~\hyperref[#1]{\ref*{#1}}}
\newcommand{\Figref}[1]{Figure~\hyperref[#1]{\ref*{#1}}}
\newcommand{\tabref}[1]{Tab.~\hyperref[#1]{\ref*{#1}}}
\newcommand{\secref}[1]{Sec.~\hyperref[#1]{\ref*{#1}}}
\newcommand{\Secref}[1]{Section~\hyperref[#1]{\ref*{#1}}}
\newcommand{\algoref}[1]{Algorithm~\hyperref[#1]{\ref*{#1}}}
\newcommand{\wrt} {{wrt}~} %
\newcommand{\eg} {{e.g.,}~} %
\newcommand{\ie} {{i.e.,}~} %
\newcommand{\etal}{\MakeLowercase{{et al.}}}
\newlength{\Oldarrayrulewidth}
\definecolor{darkgreen}{rgb}{0.0,0.49,0.19}
\newcommand{\panda}{Franka Emika Panda}
\DeclareFontFamily{U}{eur}{\skewchar\font'177}
\DeclareFontShape{U}{eur}{m}{n}{%
	<-6> eurm5 <6-8> eurm7 <8-> eurm10}{}
\DeclareFontShape{U}{eur}{b}{n}{%
	<-6> eurb5 <6-8> eurb7 <8-> eurb10}{}
\DeclareSymbolFont{ugrf@m}{U}{eur}{m}{n}
\DeclareMathSymbol{\upomega}{\mathord}{ugrf@m}{"21}
\newtheorem{theorem}{\bf Theorem}
\newtheorem{remark}{\bf Remark}
\journal{Robotics and Autonomous Systems}
\begin{document}
\begin{frontmatter}

\title{Learning Stable Robotic Skills on Riemannian Manifolds}
\author[1]{Matteo Saveriano\corref{cor1}}
\ead{Matteo.Saveriano@unitn.it}
\author[2]{Fares~J.~Abu-Dakka}
\ead{fares.abu-dakka@aalto.fi}
\author[2]{Ville Kyrki}
\ead{ville.kyrki@aalto.fi}
\cortext[cor1]{Corresponding author}
\address[1]{Automatic Control Lab, Department of Industrial Engineering, University of Trento, Trento, Italy}
\address[2]{Intelligent Robotics Group, Department of Electrical Engineering and Automation, Aalto University, Finland}

%



\begin{abstract}
In this paper, we propose an approach to learn stable dynamical systems evolving on Riemannian manifolds. The approach leverages a data-efficient procedure to learn a diffeomorphic transformation that maps simple stable dynamical systems onto complex robotic skills. By exploiting mathematical tools from differential geometry, the method ensures that the learned skills fulfill the geometric constraints imposed by the underlying manifolds, such as unit quaternion (UQ) for orientation and symmetric positive definite (SPD) matrices for impedance{, while preserving the convergence to a given target.} 
{The proposed approach is firstly tested in simulation on a public benchmark, obtained by projecting Cartesian data into UQ and SPD manifolds, and compared with existing approaches.}  
Apart from evaluating the approach on a public benchmark, several experiments were performed on a real robot performing bottle stacking in different conditions and a drilling task in cooperation with a human operator. The evaluation shows promising results in terms of learning accuracy and task adaptation capabilities.  
\end{abstract}
	
\begin{keyword}
Learning from Demonstration \sep Learning stable dynamical systems \sep Riemannian manifold learning
\end{keyword}

\end{frontmatter}
\section{Introduction}
Robots that successfully operate in {smart manufacturing} 
have to be capable of precisely controlling their behavior in terms of movements and physical interactions. In this respect, {modern industrial and} service robots need flexible representations of such intended behaviors in terms of motion, impedance, and force skills {(see Fig.~\ref{fig:first_img})}. Developing such representations is then a key aspect to speed-up the integration of robotic solutions in social and industrial environments. 

{Learning approaches have the possibility to unlock the full potential of smart robotic solutions. Among the others,} the \ac{lfd} paradigm \cite{schaal1999imitation} aims at developing learning solutions that allow the robot to enrich its skills via human guidance. Among the several existing approaches \cite{billard2016learning, Ravichandar2020Recent}, the idea of encoding robotic skills into stable \acp{ds} has gained interest in the \ac{lfd} community~\cite{Ijspeert2013Dynamical, khansari2011learning, zadeh2014learning, neumann2015learning, perrin2016fast, blocher2017learning, duan2017fast, lemme2014neural}. This class of \ac{lfd} approaches assumes that the demonstrations are observations of the time evolution of a \ac{ds}. \acp{ds} are flexible motion generators that allow, for example, to encode continuous and discrete motions~\cite{khadivar2021learning}, to reactively avoid possible collisions~\cite{ginesi2021dynamic, saveriano2017human, saveriano2019learning}, or to update the underlying dynamics in an incremental fashion~\cite{kronander2015incremental, saveriano2018incremental}.

Although \ac{ds}-based representations have several interesting properties, most works assume that data are observations of a Euclidean space. However, this is not always the case in robotics where data can belong to a non-Euclidean space. Typical examples of non-Euclidean data are orientation trajectories represented as rotation matrices and \acp{uq}, and \ac{spd} matrices for quantities such as inertia, impedance gains, and manipulability, which all belong to Riemannian manifolds. Applying standard arithmetic tools from Euclidean geometry on Riemannian manifolds leads to inaccuracies and incorrect representations, which can be avoided if the proper mathematical tools developed for Riemannian manifolds are used~\cite{do1992riemannian}.   
\begin{figure}[t]
\includegraphics[width=\columnwidth]{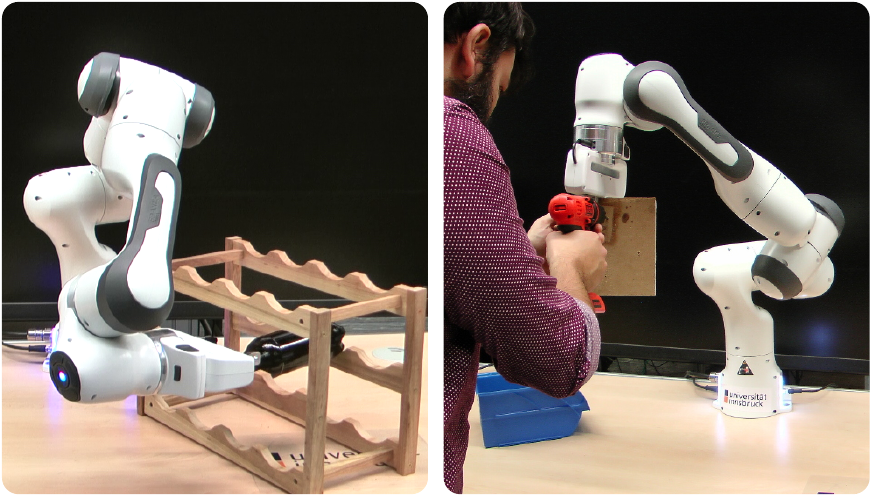}
\caption{Experimental setups involving a robot (Franka Emika Panda) and Riemannian data. (Left) The robot stacks a bottle in a rack. This task requires adaptation of position and orientation (\ac{uq}). (Right) The robot performs a collaborative drilling with a human. This task exploits stiffness (\ac{spd} matrix) modulation.}
\label{fig:first_img}
\end{figure}
%
%
%
\begin{table*}[t!]
     \centering
	\caption{{Comparison among state-of-the-art approaches for \ac{lfd} on Riemannian manifolds and \ac{ours}.}}
	\vspace{0.1cm}
     \label{tab:comparison_appraoches}
     \resizebox{\textwidth}{!}{%
     {\renewcommand\arraystretch{1.3} 
 	\begin{tabular}{ cccccccc }
 	\toprule
 	 & End-Point & Multiple demos & Multimodal & Time-independent & Data-efficiency & Accuracy$^+$ & Training time\\
 	 \midrule 
     DMP \cite{Ude2014Orientation, abudakka2020Geometry} & \checkmark & - & - & - &  high & medium & low\\
     R-GMM \cite{calinon20gaussians} & \checkmark & \checkmark  & - & - & high & medium & low\\
     %
     %
     FDM$^\dagger$ \cite{perrin2016fast} & \checkmark & - & - &  \checkmark  & high & medium & low\\
     E-FLOW \cite{rana2020euclidean} & \checkmark & \checkmark &  \checkmark  &  \checkmark& low & medium* & high\\
     I-FLOW \cite{urain2020imitationflow, urain2021learning} & \checkmark & \checkmark &  \checkmark  &  \checkmark & low & medium* & high\\
     \ac{ours} (ours) & \checkmark & \checkmark &  \checkmark & \checkmark & high  &  high & low\\
     \bottomrule
     \multicolumn{8}{l}{{\footnotesize $^\dagger$FDM is not designed to work on Riemannian manifolds. However, our formulation allows their use to learn a diffeomorphism in TS.}}\\
     \multicolumn{8}{l}{{\footnotesize *E-FLOW and I-FLOW need an hyper-parameters search to reach high accuracy. However, performing an hyper-parameters search requires a GPU}}\\
     \multicolumn{8}{l}{{\footnotesize cluster and it is beyond the scope of this paper. With trial-and-error we found an hyper-parameters configuration that gives a medium accuracy.}}
 \end{tabular}
 }}
 \end{table*}
%

In this paper, we propose
\begin{itemize}
    \item a novel geometry-aware approach to encode demonstrations evolving on a Riemannian manifold into a stable dynamical system (\textit{\ac{ours}}).
    \item mathematical foundations for \ac{ours} to work in any Riemannian manifold. In the paper, we provided two manifolds as case studies: (\emph{i}) \ac{uq}, and (\emph{ii}) \ac{spd} manifolds.
    \item an extension of the LASA handwriting dataset~\cite{khansari2011learning}, a popular benchmark in \ac{ds}-based \ac{lfd}, to generate \ac{uq} and \ac{spd} trajectories. The resulting \textit{Riemannian LASA} dataset, publicly available at \url{gitlab.com/geometry-aware/riemannian-lasa- dataset}, will serve as a benchmark for a quantitative comparison of newly developed approaches.
\end{itemize}
To this end, \ac{ours} leverages the concept of diffeomorphism (a bijective, continuous, and continuously differentiable mapping with a continuous and continuously differentiable inverse), to transform simple and stable (base) dynamics into complex robotic skills. Building on tools from Riemannian geometry, we first present rigorous stability proofs for the base and the diffeomorphed \acp{ds}. We then present a data-efficient approach that leverages a \ac{gmm}~\cite{cohn1996active} to learn the diffeomorphic mapping. As a result, we obtain a \ac{ds} that accurately represents data evolving on Riemannian manifolds and that preserves the convergence to a given target as well as the geometric structure of the data.

The rest of the paper is organized as follows. \Secref{sec:related} presents the related literature. Basic concepts of Riemannian geometry are given in~\secref{sec:background}. \Secref{sec:proposed} provides the theoretical foundations of \ac{ours}. In~\secref{sec:learning}, we present an approach to learn stable skills via diffeomorphic maps. \ac{ours} is evaluated on a public benchmark and compared against a state-of-the-art approach in~\secref{sec:validation}. Experiments on a real robot (Franka Emika Panda) are presented in~\secref{sec:robot_experiment}. \Secref{sec:concl} states the conclusion and proposes further research directions.

\section{Related Works}
\label{sec:related}
%
%
This section presents the related works in the field of \ac{lfd} both in general and with a specific focus on \ac{ds}-based and geometry-aware learning approaches. {For a clear comparison, the main features of \ac{ours} and of existing approaches are summarized in \tabref{tab:comparison_appraoches}.}
 
\textbf{\acf{lfd}} provides an user friendly framework that allows non-roboticists to teach robots and enables robots to autonomously perform new tasks based on that human demonstration~\cite{schaal1999imitation}. Over the last couple of decades many \ac{lfd} approaches have been developed \cite{billard2016learning, Ravichandar2020Recent}. 
\ac{lfd} approaches can be categorized in two main groups depending on the underlying learning strategy: 
\begin{enumerate*}[label={(\textit{\roman*})}]
	\item deterministic approaches that try to reproduce the demonstrations with a function approximator, \eg a neural network \cite{seker2019conditional, bahl2020neural}, and 
	\item probabilistic approaches that learn a probability distribution from the demonstrations. Examples of this category include \ac{gmm} \cite{reynolds2009gaussian}, \ac{tpgmm} \cite{calinon2014task}, \ac{promp} \cite{Paraschos2013Probabilistic}, and \ac{kmp} \cite{Huang2020Toward}.
\end{enumerate*}

In both groups, it is possible to learn and retrieve a \textit{static} or a \textit{dynamic} mapping between input and output. In a static mapping the current input, \eg the time, is mapped into the desired output, \eg the robot's joint angles, while a \ac{ds} maps the input, \eg the robot joint angles, into its time derivative(s), \ie the joint velocities. 
In this paper, we focus on learning stable \acp{ds} from data belonging to Riemannian manifolds. However, in order to provide a more comprehensive review of the existing literature, we also highlight some recent Riemannian-based approaches that learn static mappings.  
Zeestraten~\etal~\cite{zeestraten2017approach} exploited Riemannian metrics to learn orientation trajectories with \ac{tpgmm}. Huang~\etal~\cite{Huang2020Toward} proposed to train \ac{kmp} in the tangent space of unit quaternion trajectories. Kinaesthetic is used to estimate full stiffness matrices of an interaction tasks, which subsequently used to learn force-based variable impedance profiles using Riemannian metric-based \ac{gmm} / \ac{gmr} \cite{AbuDakka2018Force}. Later the authors proposed to train \ac{kmp} on the tangent space of \ac{spd} matrices \cite{abudakka2021probabilistic}. Jaquier~\etal~\cite{jaquier2017gaussian} formulated a tensor-based \ac{gmm}/\ac{gmr} on \ac{spd} manifold, which later has been exploited in manipulability transfer and tracking problem~\cite{jaquier2020geometry}. {Calinon~\cite{calinon20gaussians} extended the \ac{gmm} formulation to a variety of manifolds including \ac{uq} and \ac{spd}. We name this approach \ac{rgmm} and compare its performance againt \ac{ours} in Sec.~\ref{sec:validation}.}

\textbf{Encoding manipulation skills into stable \acp{ds}} is achieved by leveraging time-dependent or time-independent \acp{ds}.
\acp{dmp} \cite{Ijspeert2013Dynamical} are a prominent approach to encode robotic skills into time-dependent representations. The classical \ac{dmp} formulation has been extended in different ways \cite{saveriano2021dynamic}. 
Among the others, extensions to Riemannian manifolds are relevant for this work. Abu-Dakka~\etal. extend classical \acp{dmp} to encode discrete~\cite{AbuDakka2015Adaptation} and periodic~\cite{abudakka2021Periodic} unit quaternion trajectories, while the work in~\cite{Ude2014Orientation} also considers rotation matrices. The stability of orientation \acp{dmp} is shown in~\cite{saveriano2019merging}. In~\cite{abudakka2020Geometry}, \acp{dmp} are reformulated to generate discrete \ac{spd} profiles. \acp{dmp} exploit a time driven forcing term, learned from a single demonstration, to accurately reproduce the desired skill. The advantage of having a time driven forcing term is that the learning scale well to high-dimensional spaces---as the learned term only depends on a scalar input. The disadvantage is that the generated motion is forced to follow the stereotypical (\ie demonstrated) one and generalizes poorly outside the demonstration area \cite{neumann2015learning}. Alternative approaches~\cite{saveriano2018incremental, saveriano2020energy} learn a state-depended forcing term, but they still use a vanishing time signal to suppress the forcing term and retrieve asymptotic stability. The tuning of this vanishing signal affects the reproduction accuracy\footnote{The reader is referred to \cite{Ijspeert2013Dynamical, saveriano2021dynamic} for a thorough discussion on advantages and disadvantages of time-dependent and time-independent \ac{ds} representations.}. 

The \ac{seds}~\cite{khansari2011learning} is one of the first approaches that learn stable and autonomous \acp{ds}. It exploits Lyapunov theory to derive stability constraints for a \ac{gmr}-\ac{ds}. The main limitations of \ac{seds} are the relatively long training time and the reduced accuracy on complex motions. The loss of accuracy is caused by the stability constraints and it is also referred as the \textit{accuracy vs stability dilemma}~\cite{neumann2015learning}. To alleviate this issue, the approach in~\cite{duan2017fast} derives weak stability constraints for a neural network-based \ac{ds}. Contraction theory~\cite{lohmiller1998contraction} is used in~\cite{blocher2017learning} to derive stability conditions for a \ac{gmr}-\ac{ds} in Euclidean space, and in~\cite{ravichandar2019learning} to encode stable \ac{uq} orientations. %
The extension of \ac{seds} in~\cite{zadeh2014learning} significantly reduces inaccuracies and training time by separately learning a (possibly) unstable \ac{ds} and a stabilizing control input. 

Alternative approaches leverage a diffeomorphic mapping between Euclidean spaces to accurately fit the demonstrations while preserving the stability. Neumann~\etal~\cite{neumann2015learning} learn a diffeomorphism to map the demonstrations into a space where quadratic stability constraints introduce negligible deformations. The approach is effective but needs a long training time{, as experimentally shown in}~\cite{saveriano2020energy}. In a similar direction, \ac{eflow}~\cite{rana2020euclidean} fits a diffeomorphism that linearizes the demonstrations as if they were generated by a linear \ac{ds}. The opposite idea, \ie transform straight lines into complex motions, is exploited \ac{fdm}~\cite{perrin2016fast} and by \ac{iflow}~\cite{urain2020imitationflow, urain2021learning}. \ac{fdm} uses a composition of locally weighted (with an exponential kernel) translations to rapidly fit a diffeomorphism from a single demonstration. {\ac{ours} builds on similar ideas, but it works on Riemannian data and with multiple demonstrations.} \ac{ours} is compared with \ac{fdm} in \secref{sec:validation}.  
{\ac{eflow} and \ac{iflow} exploit deep invertible neural networks to learn a diffeomorphic mapping. } 
Invertible neural networks are slightly more accurate than classical methods{, as experimentally shown in~\cite{urain2020imitationflow}. However, as shown in the experimental comparison carried out in \secref{sec:validation},  approaches based on deep networks require long training time and intense hyperparameters search} which represents a limitation in typical \ac{lfd} settings. Therefore, \ac{ours} leverages a mature and data-efficient approach in \ac{lfd} (\ac{gmm}/\ac{gmr}) to fit a diffeomorphism between \acp{ts}. {It is worth mentioning that neither \ac{eflow} nor \ac{iflow} have been tested on a Riemannian manifold different form the \ac{uq}, but our formulation is rather general and allows us to extend existing approaches to different manifolds.} 

Finally, the \ac{rmp} framework~\cite{ratliff2018riemannian, mukadam2020riemannian} consist of closed-loop controllers embedded in a second-order dynamics that exploits the Riemannian metric to optimally track a reference motion. In this respect, \ac{rmp} and \ac{ours} are complementary: \ac{ours} can be used to generate the desired motion and \ac{rmp} to optimally track it.

\section{Background}
\label{sec:background}

A Riemannian manifold $\riM$ is a smooth differentiable topological space, for each point of which $\boldfrak{m} \in \riM$, it is possible to compute a \acf{ts} $\tsPsi$ {(see Fig.~\ref{fig:ts}). The \ac{ts} is equipped with a positive definite inner product $\langle \bP, \bQ \rangle_{\bFm} \in \mathbb{R}$, where $\bP, \bQ \in \tsPsi$ and $\bFm \in \riM$ is the point where the \ac{ts} is computed. The inner product allows the definition of the notion of distance on the manifold.} 
Depending on the Riemannian manifold, points on the \ac{ts} can be {vectors, matrices, or more complex mathematical objects}. For examples, a \ac{ts} of \acp{uq} consists 3D vectors, while a \ac{ts} of \ac{spd} matrices consists of symmetric matrices. In this work, we indicate a point on the \ac{ts} using bold capital letters, \ie $\bP \in \tsPsi$.  

\begin{figure}[t]
	\centering
	\def\svgwidth{\linewidth}
	{\fontsize{8}{8}
		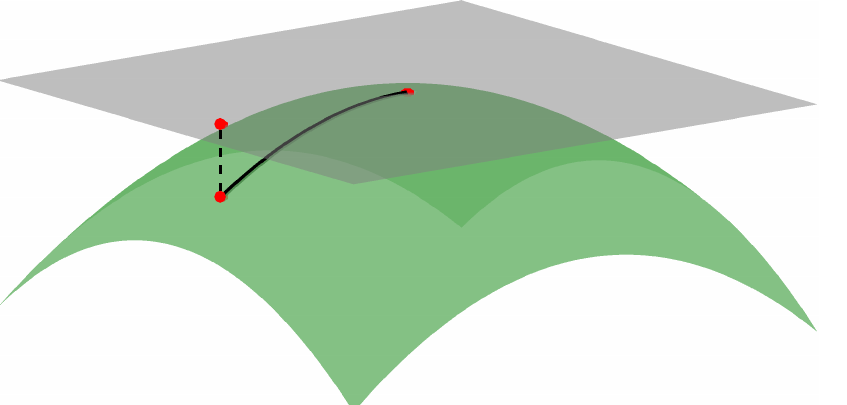}
	\caption{A Riemannian manifold $\riM$ (green surface) and its tangent space $\tsPsi$ (gray plane) centered at $\boldfrak{m}$. The logarithmic $\LogPsi{\cdot}$ and exponential $\ExpPsi{\cdot}$ maps move points from the manifold ($\boldfrak{a}$ and $\boldfrak{b}$) to the tangent space ($\bm{a}$ and $\bm{b}$) and vice-versa.}
	\label{fig:ts}
\end{figure}
The operator that transforms points from a Riemannian manifold to its tangent space is the \textit{logarithmic map} $\LogPsi{\cdot}:\,\riM\,\rightarrow\,\tsPsi$. Its inverse operator is called the \textit{exponential map} $\ExpPsi{\cdot}:\,\tsPsi\,\rightarrow\,\riM$. {The \textit{parallel transport} $\mathcal{T}_{\boldfrak{m}_1\rightarrow \boldfrak{m}_2}:\,\mathcal{T}_{\boldfrak{m}_1}\mathcal{M}\,\rightarrow\,\mathcal{T}_{\boldfrak{m}_2}\mathcal{M}$ moves elements between tangent spaces while preserving their angle.} 

{In case of \ac{uq} manifold, the operators are defines as
}
\begin{align}
	\ExpPsi{\bP} &= \boldfrak{m}\cos(\norm{\bP})+\frac{\bP}{\norm{\bP}}\sin(\norm{\bP}),\label{eq:expSm}\\
	\LogPsi{\boldfrak{a}} &= \frac{\boldfrak{a}-(\boldfrak{m}\trsp\boldfrak{a})\boldfrak{m}}{\norm{\boldfrak{a}-(\boldfrak{m}\trsp\boldfrak{a})\boldfrak{m}}}d(\boldfrak{m},\boldfrak{a}),	\label{eq:logSm} \\
	\begin{split}
	    \mathcal{T}_{\boldfrak{m}_1\rightarrow \boldfrak{m}_2}\left(\bP\right) &= \bP - \frac{\Log_{\boldfrak{m}_1}(\boldfrak{m}_2)\trsp\bP}{d(\boldfrak{m}_1,\boldfrak{m}_2)^2}\cdot\\&\quad\cdot\left(\Log_{\boldfrak{m}_1}(\boldfrak{m}_2)+\Log_{\boldfrak{m}_2}(\boldfrak{m}_1)\right)
	 \end{split}, \label{eq:transpSm}
\end{align}
where $d(\boldfrak{m},\bFp)=\arccos(\bFp\trsp\boldfrak{m})$ and $\norm{\cdot}$ is the Euclidean norm. 
\begin{table}[t!]
	\caption{Re-interpretation of basic standard operations in a Riemannian manifold \cite{Pennec2006}.}
	\label{tab:basicOperation}
	\resizebox{\columnwidth}{!}{  
		\begin{tabular}{lcc}
		\addlinespace[-\aboverulesep] 
    \cmidrule[\heavyrulewidth]{2-3}
			& \sc{Euclidean space}      &\sc{Riemannian manifold}       \\ \midrule
			\sc{Subtraction}		& $\overrightarrow{\bm{m}\bm{a}}=\bm{a}-\bm{m}$	& $\overrightarrow{\boldfrak{m}\boldfrak{a}}=\LogPsi{\boldfrak{a}}$             \\
			\sc{Addition}  		& $\bm{p} = \bm{m}+\overrightarrow{\bm{m}\bm{a}}$ & $\boldfrak{a}=\ExpPsi{\overrightarrow{\boldfrak{m}\boldfrak{a}}}$             \\
			\sc{Distance}		& $\text{dist}(\bm{m},\bm{a})=\parallel \bm{a}-\bm{m}\parallel$ & $\text{dist}(\boldfrak{m},\boldfrak{a})=\parallel\overrightarrow{\boldfrak{m}\boldfrak{p}}\parallel_{\boldfrak{m}}$              \\
			\sc{Interpolation}	& $\bm{m}(t)=\bm{m}_1+t \overrightarrow{\bm{m}_1\bm{m}_2}$ & $\boldfrak{m}(t)=\text{Exp}_{\boldfrak{m}_1}(t\overrightarrow{\boldfrak{m}_1\boldfrak{m}_2})$             \\ \bottomrule
		\end{tabular}
		}
\end{table}


In case of \ac{spd} manifold, the operators are defined as in~\cite{Pennec2006,sra2015conic}
\begin{align}
	\ExpPsi{\bP} =& \boldfrak{m}^\frac{1}{2}\text{expm}\Big(\boldfrak{m}^{-\frac{1}{2}}\bP\boldfrak{m}^{-\frac{1}{2}}\Big)\boldfrak{m}^\frac{1}{2},\label{eq:expSPD}\\
	\LogPsi{\bFp} =& 	\boldfrak{m}^\frac{1}{2}\text{logm}\Big(\bFm^{-\frac{1}{2}}\bFp\boldfrak{m}^{-\frac{1}{2}}\Big)\boldfrak{m}^\frac{1}{2},
	\label{eq:logSPD} \\
	\mathcal{T}_{\boldfrak{m}_1\rightarrow \boldfrak{m}_2}\left(\bP\right) &= \boldfrak{m}_2^\frac{1}{2}\boldfrak{m}_1^\frac{1}{2}\,\bP\,\boldfrak{m}_1^\frac{1}{2}\boldfrak{m}_2^\frac{1}{2}, \label{eq:transpSPD}
\end{align}
where $\expm{\cdot}$ and $\logm{\cdot}$ are the matrix exponential and logarithm functions respectively. Here, the affine-invariant distance~\cite{Pennec2006} is used to compute $d(\bFp, \boldfrak{m})$
\begin{equation}
d(\bFp, \boldfrak{m}) = \norm{\text{logm}\Big(\bFm^{-\frac{1}{2}}\bFp\boldfrak{m}^{-\frac{1}{2}}\Big)}_{F},
\label{eq:affine_invariant}
\end{equation}
where $\norm{\cdot}_{F}$ is the Frobenius norm.

{Other basic operations on manifolds can be computed as shown in Tab.~\ref{tab:basicOperation}.}  

\section{Diffeomorphed \ac{ds} on Manifolds}
\label{sec:proposed}
{Inspired by~\cite{rana2020euclidean},} we apply a diffeomorphism to a stable (base) \ac{ds} evolving on the Riemannian manifold (see \figref{fig:idea}). The diffeomorphism, learned from a set of demonstrations, deforms the trajectories of the base \ac{ds} to accurately match the  demonstrations. In order to effectively apply this idea, we first need to design a stable \ac{ds} evolving on the Riemannian manifold. This \ac{ds} provides the basic motion that connects the initial point to a desired goal. Second, we need to show that a diffeomorphic transformation preserves the stability of the base \ac{ds}. This result, known for the Euclidean space, extends to Riemannian manifolds as shown in this section.
\begin{figure}[t]
	\centering
	\def\svgwidth{\linewidth}
	{\fontsize{8}{8}
		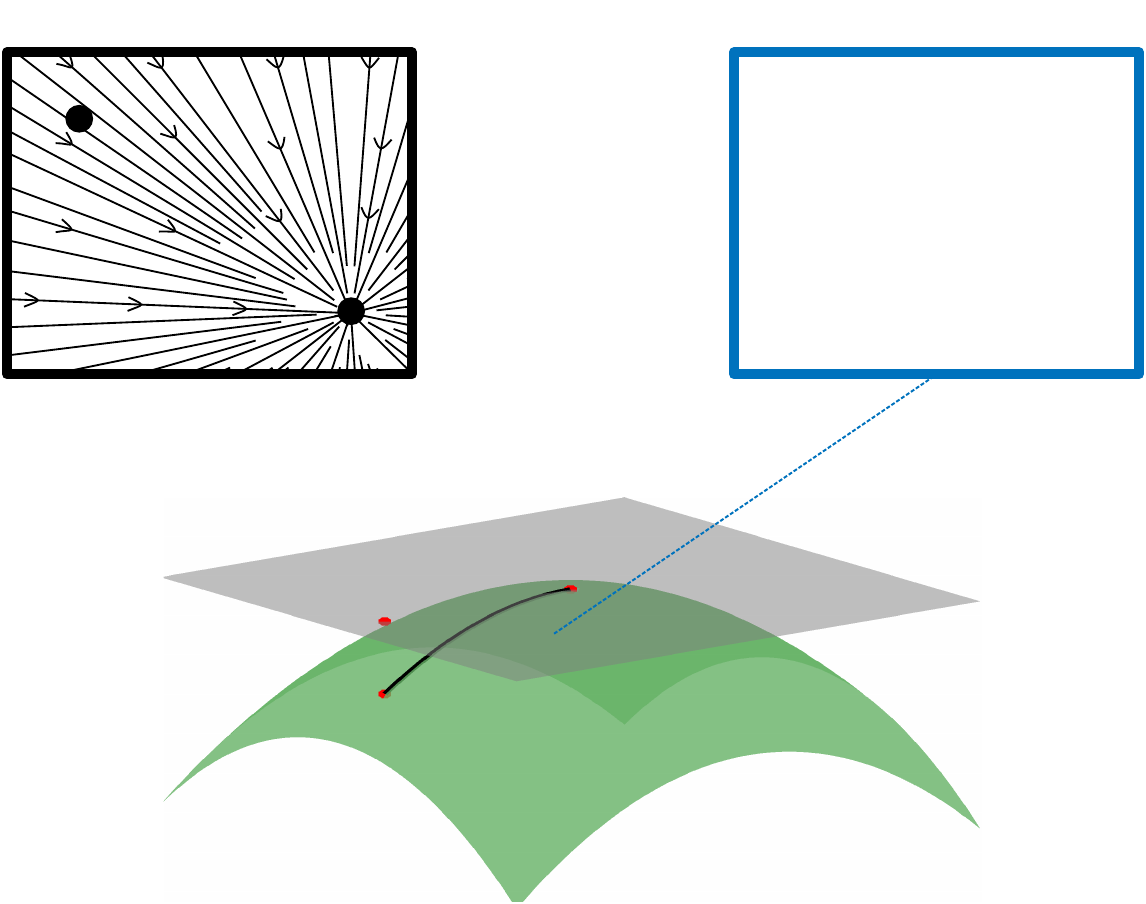}
	\caption{The idea of \ac{ours} is to learn a diffeomorphic mapping between \acp{ts} to transform the trajectories of a base \ac{ds} into complex motions. Motion on the \ac{ts} is projected back and forth to the underlying Riemannian manifold using exponential and logarithmic maps respectively.}
	\label{fig:idea}
\end{figure}

\subsection{Base \ac{ds} on Riemannian manifolds}
In Euclidean space, a linear \ac{ds} in the form 
\begin{equation}
    \dot{\bm{x}} = - k_{\bm{x}}(\bm{x} - \bm{g})
    \label{eq:linear_ds}
\end{equation}
is often used as base\footnote{Note that some authors assume, without loss of generality, that $\bm{g} = \bm{0}$.} \ac{ds}. Indeed, if the gain $k_{\bm{x}} > 0$, the linear \ac{ds} in~\eqref{eq:linear_ds} is globally exponentially stable~\cite{slotine1991applied}. This implies that the linear \ac{ds} generates trajectories connecting any initial point $\bm{x}(0) \in \mathbb{R}^n$ with any goal $\bm{g} \in \mathbb{R}^n$.

We seek for an ``equivalent'' of the stable linear dynamics for Riemannian manifolds.
In this work, the base \ac{ds} is the non-linear dynamics
\begin{equation}
    \dot{\bFp} = {k_{\bFp}\bP = k_{\bFp}\LogHatPsiPsi ~ \in \tsHatPsi},
    \label{eq:log_ds}
\end{equation}
where the logarithmic map is introduced in \secref{sec:background} and its expression depends on the considered Riemannian manifold. 
The non-linear \ac{ds} in~\eqref{eq:log_ds}, called a \textit{geodesic} \ac{ds}, shares similarities with the linear \ac{ds} in~\eqref{eq:linear_ds} and, for this reason, it is used in this work as base \ac{ds}. Indeed, {similarly to the term $(\bm{x} - \bm{g})$ in~\eqref{eq:linear_ds}, the logarithmic map in~\eqref{eq:log_ds}} represents the displacement between the current point $\bFp \in \riM$ and the goal $\bFg \in \riM$. {It is worth mentioning that in~\eqref{eq:log_ds} we consider the \ac{ts} at the current point $\bFp$ instead of a fixed \ac{ts} at the goal $\bFg$. As discussed in~\cite{calinon20gaussians}, working in a single \ac{ts} is inaccurate and introduces severe deformations in the learned trajectory. On the contrary, in our formulation we always place the \ac{ts} at the current point, therefore minimizing the distortion.}

{The base \ac{ds} in~\eqref{eq:log_ds} is also exponentially stable if the gain $k_{\bFp} > 0$.}
This implies that the base \ac{ds} in~\eqref{eq:log_ds} generates trajectories connecting any initial point $\bFp(0)$ with any goal $\bFg$ (see \figref{fig:log_ds_k}). {For completeness,} stability results are formally stated in Theorem~\ref{th:stab_log_ds} (asymptotic stability) and Remark~\ref{th:exp_stab_log_ds} (exponential stability). 
\begin{theorem}
\label{th:stab_log_ds}
{Let the gain  $k_{\bFp} > 0$. Assume also that $\Log_{\bFg}(\bFp) = - \mathcal{T}_{\bFg\rightarrow \bFp}\LogHatPsiPsi$, where $\mathcal{T}_{\bFg\rightarrow \bFp}$ is the parallel transport from $\bFg$ to $\bFp$. Under these assumptions, the \ac{ds} in~\eqref{eq:log_ds} has a globally (in its domain of definition) asymptotically stable equilibrium at $\bFg$.}
\end{theorem}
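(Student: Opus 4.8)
The plan is to build a Lyapunov argument around the squared Riemannian distance to the goal, mirroring the Euclidean proof of stability for~\eqref{eq:linear_ds} but replacing the Euclidean norm with the intrinsic metric. Concretely, I would take as Lyapunov candidate
\begin{equation}
V(\bFp) = \tfrac{1}{2}\, d(\bFp, \bFg)^2,
\label{eq:lyap_candidate}
\end{equation}
where $d(\cdot,\cdot)$ is the geodesic distance introduced in~\secref{sec:background}. On the domain where a unique minimizing geodesic to $\bFg$ exists, $V$ is smooth, strictly positive, proper (its sublevel sets are compact), and vanishes only at $\bFp = \bFg$. Hence it is a legitimate Lyapunov function and the natural generalization of the Euclidean $\tfrac12\norm{\bm{x}-\bm{g}}^2$ used to certify~\eqref{eq:linear_ds}.

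Next I would differentiate $V$ along the flow of~\eqref{eq:log_ds}. The key differential-geometric identity is that the gradient of the half-squared distance is the negative logarithmic map, $\operatorname{grad} V(\bFp) = -\LogHatPsiPsi$. Substituting the dynamics then gives
\begin{equation}
\dot V = \langle \operatorname{grad} V(\bFp),\, \dot{\bFp}\rangle_{\bFp} = -k_{\bFp}\,\norm{\LogHatPsiPsi}_{\bFp}^2 = -2\,k_{\bFp}\, V,
\label{eq:vdot}
\end{equation}
where I used $\norm{\LogHatPsiPsi}_{\bFp} = d(\bFp,\bFg)$. Since $k_{\bFp}>0$, the quantity $\dot V$ is strictly negative for every $\bFp \neq \bFg$. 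This is precisely where the hypothesis $\Log_{\bFg}(\bFp) = -\mathcal{T}_{\bFg\rightarrow \bFp}\LogHatPsiPsi$ enters: it encodes the reversibility of the minimizing geodesic joining $\bFp$ and $\bFg$, stating that the log seen from the goal is the parallel-transported negative of the log seen from the current point. This is exactly the condition under which the gradient identity holds and the steepest-descent direction of $V$ is collinear with $\dot{\bFp}$, reducing the curved-space computation to its Euclidean counterpart.

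With $V$ positive definite and $\dot V$ negative definite on the domain, Lyapunov's direct method yields asymptotic stability of $\bFg$; moreover the linear comparison $\dot V = -2 k_{\bFp} V$ in fact gives exponential convergence, which is what Remark~\ref{th:exp_stab_log_ds} will record. The qualifier ``globally (in its domain of definition)'' then accounts for the cut locus: for the \ac{spd} manifold, which is complete and of nonpositive curvature, the squared distance is smooth and geodesically convex everywhere, so convergence is truly global, whereas for the \ac{uq} sphere one must exclude the antipode of $\bFg$ where the logarithmic map is not single-valued.

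I expect the main obstacle to be the rigorous justification of the gradient-of-squared-distance identity $\operatorname{grad}\bigl(\tfrac12 d(\cdot,\bFg)^2\bigr) = -\LogHatPsiPsi$ together with the smoothness of $d(\cdot,\bFg)^2$ away from the cut locus. These require the standard machinery (the Gauss lemma and uniqueness of minimizing geodesics), and the parallel-transport assumption in the statement is precisely the device that supplies them cleanly, letting the rest of the argument proceed exactly as in the Euclidean setting.
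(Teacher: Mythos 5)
Your proposal is correct and follows essentially the same route as the paper: both use the squared geodesic distance to the goal as the Lyapunov candidate (the paper's $V(\bFp) = \langle \Log_{\bFg}(\bFp), \Log_{\bFg}(\bFp)\rangle_{\bFg}$ is your $2V$), and both invoke the parallel-transport/log-reversal assumption to relate the velocity field to the descent direction of that function. The only difference is presentational --- you carry out the derivative explicitly in $\tsHatPsi$ via the gradient-of-half-squared-distance identity, whereas the paper transports the velocity field to $\mathcal{T}_{\bFg}\mathcal{M}$ and defers the computation to the cited reference.
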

\begin{proof}
{Let us express the velocity field~\eqref{eq:log_ds} in $\bFg$ using parallel transport. By assumption, it holds that
\begin{equation}
    \mathcal{T}_{\bFg\rightarrow \bFp} \dot{\bFp} = - k_{\bFp}\Log_{\bFg}(\bFp) ~ \in \mathcal{T}_{\bFg}\mathcal{M}.
    \label{eq:log_ds_par_transport}
\end{equation}
To prove the stability of~\eqref{eq:log_ds_par_transport}, one can define the Lyapunov candidate function~\cite{pait2010properties}
 \begin{equation}
     V(\bFp) = \langle \bP, \bP \rangle_{\bFg} = \langle \Log_{\bFg}(\bFp), \Log_{\bFg}(\bFp) \rangle_{\bFg}
     \label{eq:lyapunov_candidate}
 \end{equation}
 and follow the same arguments outlined in~\cite[Section 4.2.2]{jaquier2020geometry} for \ac{spd} matrices.}
\end{proof}
{%
\begin{remark}
\label{rem:parallel_transport}
The assumption made in Theorem~\ref{th:stab_log_ds} that $\log_{\bFg}(\bFp) = - \mathcal{T}_{\bFg\rightarrow \bFp}\LogHatPsiPsi$ holds for any Riemannian manifold~\cite[Theorem 6]{fiori2021manifold}.
\end{remark}
}
{%
\begin{remark}
\label{rem:single_chart}
The results of Theorem~\ref{th:stab_log_ds} hold where the logarithmic map is uniquely defined, \ie in a region that does not contain points conjugate to $\boldfrak{g}$~\cite{pait2010properties}. For \ac{spd} matrices, this holds everywhere~\cite{Pennec2006}. Hence, Theorem~\ref{th:stab_log_ds} is globally valid on the manifold of \ac{spd} matrices. For unit $m$-sphere (including \ac{uq}), instead, the logarithmic map $\LogHatPsi{\cdot}$ is defined everywhere apart from the antipodal point $-\boldfrak{a}$~\cite{pennec2006intrinsic}.
\end{remark}
}

Even if it is not strictly required in our approach, it is interesting to show that, like the linear \ac{ds} in~\eqref{eq:linear_ds}, the \ac{ds} in~\eqref{eq:log_ds} is exponentially stable.
\begin{remark}
\label{th:exp_stab_log_ds}
Under the assumptions of Theorem~\ref{th:stab_log_ds}, the \ac{ds} in~\eqref{eq:log_ds} has a globally (in its domain of definition) exponentially stable equilibrium at $\bFg$~\cite[Section 4.2.2]{jaquier2020geometry}.
\end{remark}

\subsection{Diffeomorphed \ac{ds}}
{A diffeomorphic map or a diffeomorphism $\psi$ is a bijective, continuous, and with continuous inverse $\psi^{-1}$ change of coordinates. In this work, we assume that $\psi: \tsHatPsi \rightarrow \tsHatPsi$, \ie the diffeomorphism transforms a global coordinate $\bP \in  \tsHatPsi$ into another global coordinate $\bQ = \psi(\bP) \in  \tsHatPsi$. Further, the diffeomorphism $\psi$ is assumed to be bounded, \ie it maps bounded vectors into bounded vectors.}

{In order to match a set of demonstrations, we apply $\psi$ to the base \ac{ds} in~\eqref{eq:log_ds}. More in details, let us assume that $\bP = \LogHatPsiPsi$, and that the dynamics of the global coordinates $\dot{\bFp}$ is described by~\eqref{eq:log_ds}. By taking the time derivative of $\bFq$, we obtain the \ac{ds}~\cite{rana2020euclidean}}
\begin{equation}
    \dot{\bFq} = \frac{\partial \psi}{\partial \bP} \dot{\bFp} =k_{\bFp} \bm{J}_{\psi} (\psi^{-1}(\bQ)) \psi^{-1}(\bQ),
    \label{eq:diff_log_ds}
\end{equation}
where $\bm{J}_{\psi} (\cdot)$ is the \textit{Jacobian matrix} of $\psi$ evaluated at a particular point and the inverse mapping $\bP = \psi^{-1}(\bQ)$ is used to remove the dependency on $\bP$. Having assumed that $\psi$ is a bounded diffeomorphism, the right side of~\eqref{eq:diff_log_ds} satisfies the Lipschitz condition and, therefore, the \ac{ds} in~\eqref{eq:diff_log_ds} admits a unique solution.
The stability of the \textit{diffeomorphed dynamics} in~\eqref{eq:diff_log_ds} is stated by the following theorem.
\begin{theorem}
\label{th:stab_diff_ds}
The diffeomorphed \ac{ds} in~\eqref{eq:diff_log_ds} inherits the stability properties of the base \ac{ds} in~\eqref{eq:log_ds}. That is, if the base \ac{ds} is globally (in its domain of definition) asymptotically stable so is the  diffeomorphed \ac{ds}. 
\end{theorem}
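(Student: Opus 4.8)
The plan is to exploit the fact that, by the very construction in~\eqref{eq:diff_log_ds}, the diffeomorphism $\psi$ conjugates the two flows: if $\bP(t)$ is a trajectory of the base \ac{ds} in the coordinate $\bP = \LogHatPsiPsi \in \tsHatPsi$, then $\bQ(t) = \psi(\bP(t))$ is a trajectory of the diffeomorphed \ac{ds}, and conversely, since $\psi$ is a bijection. First I would locate the equilibrium: the base \ac{ds} vanishes at $\bP = \bm{0}$ (that is, $\bFp = \bFg$), so by bijectivity the diffeomorphed \ac{ds} has its unique equilibrium at $\bQ^* = \psi(\bm{0})$.

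The core step is to transport the Lyapunov function. Let $V(\bP)$ be the function certifying global asymptotic stability of the base \ac{ds} in Theorem~\ref{th:stab_log_ds}, expressed in the coordinate $\bP$. I would define the candidate $W(\bQ) := V\!\left(\psi^{-1}(\bQ)\right)$ for the diffeomorphed dynamics. Positive definiteness is immediate: $W(\bQ^*) = V(\bm{0}) = 0$, while for $\bQ \neq \bQ^*$ injectivity of $\psi^{-1}$ gives $\psi^{-1}(\bQ) \neq \bm{0}$, hence $W(\bQ) > 0$. For the decrease condition, the cleanest route is conjugacy: along any diffeomorphed trajectory $W(\bQ(t)) = V(\psi^{-1}(\psi(\bP(t)))) = V(\bP(t))$, so $\dot{W}(\bQ(t)) = \dot{V}(\bP(t)) < 0$ for $\bQ \neq \bQ^*$, directly inheriting the strict decrease established in Theorem~\ref{th:stab_log_ds}. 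Equivalently, one may differentiate $W$ by the chain rule, using $\partial W / \partial \bQ = (\partial V/\partial \bP)\,\bm{J}_{\psi}^{-1}$ together with $\dot{\bQ} = \bm{J}_{\psi}\dot{\bP}$ from~\eqref{eq:diff_log_ds}, so that the Jacobian factors cancel and $\dot{W} = \dot{V}$; this also relies on $\bm{J}_{\psi}$ being invertible, which holds because $\psi$ is a diffeomorphism.

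Finally, I would upgrade local to global asymptotic stability by checking that $W$ inherits radial unboundedness from $V$. This is exactly where the standing assumption that $\psi$ is a \emph{bounded} diffeomorphism is needed: since $\psi$ (and hence $\psi^{-1}$) maps unbounded sets to unbounded sets, $\norm{\bQ} \to \infty$ forces $\norm{\psi^{-1}(\bQ)} \to \infty$, and because $V$ is a squared Riemannian norm it is radially unbounded, so $W(\bQ) \to \infty$. Invoking the standard global Lyapunov theorem then yields the claim. The main obstacle I anticipate is precisely this transfer of the \emph{global} character rather than the mere local equivalence of the two equilibria: one must argue carefully that properness of the Lyapunov function survives the change of coordinates, and that the conclusion is understood on the same domain of definition as Theorem~\ref{th:stab_log_ds} (the region where the logarithmic map, and thus the coordinate $\bP$, is uniquely defined), pulled through the exponential map back onto the manifold.
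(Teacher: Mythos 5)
Your proof is correct and follows essentially the same route as the paper: both pull the Lyapunov function of the base DS back through the inverse diffeomorphism, taking $W(\bQ)=V\!\left(\psi^{-1}(\bQ)\right)=\langle \psi^{-1}(\bQ), \psi^{-1}(\bQ)\rangle$, and use the conjugacy of the two flows to transfer positive definiteness and the decrease condition. The only difference is that the paper delegates the verification to \cite{rana2020euclidean}, whereas you spell out the equilibrium identification, the cancellation of the Jacobian factors, and the properness argument where the boundedness assumption on $\psi$ (equivalently, that $\psi^{-1}$ sends unbounded sets to unbounded sets) is actually used.
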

\begin{proof}
{From the proof of Theorem~\ref{th:stab_log_ds}, it holds that $V(\bFp)$ defined in~\eqref{eq:lyapunov_candidate} is a Lyapunov function for the base \ac{ds} in~\eqref{eq:log_ds}. As shown in~\cite[Section 3.2]{rana2020euclidean}, the function $V_\psi(\bFq) = \langle \psi^{-1}(\bQ), \psi^{-1}(\bQ) \rangle_{\bFg'}$, where $\bFg'$ is the point where $\psi^{-1}(\LogHatPsi{\bFg}) = \bm{0}$, is a valid Lyapunov function for the diffeomorphed \ac{ds} in~\eqref{eq:diff_log_ds}.}
\end{proof}
\begin{remark}
\label{rem:diff_ds_goal}
Theorem~\ref{th:stab_diff_ds} states the convergence of the \ac{ds}~\eqref{eq:diff_log_ds} to the equilibrium $\bFg'$. This point may differ from the equilibrium $\bFg$ of the base \ac{ds}~\eqref{eq:log_ds}. However, in \ac{lfd}, we are interested in converging to a given goal---let's say  $\bFg$ for simplicity. Assuming that the inverse mapping $\psi^{-1}(\cdot)$ is identity at the goal, \ie \emph{$\psi^{-1}(\LogHatPsi{\bFg})=\LogHatPsi{\bFg}=\bm{0}$}, it is straightforward to show from Theorem~\ref{th:stab_diff_ds} that the \ac{ds}~\eqref{eq:diff_log_ds} also convergences to $\bFg$.
\end{remark}

{
Given the global coordinate $\bQ$, we compute the corresponding manifold point $\bFq$ through the exponential map as
\begin{equation}
    \bFq = \text{Exp}_{\bFp}(\bQ).
    \label{eq:projection}
\end{equation}
Recalling that the exponential map is a local diffeomorphism, the composite mapping $\text{Exp}_{\bFp}(\bQ) = \text{Exp}_{\bFp} \circ \psi(\bP)$ can be considered a diffeomorphism between manifolds. 
}
\section{Learning Stable Skills via Diffeomorphisms}
\label{sec:learning}
The stability theorems provided in \secref{sec:proposed} give solid theoretical foundations to our learning approach. In this section, we describe how training data are used to learn a diffeomorphism that maps the trajectory of the base \ac{ds} into a desired robotic skill. Moreover, we discuss how to apply our approach to two popular Riemannian manifolds, namely the unit quaternions and the \ac{spd} matrices. 

\subsection{Data pre-processing}
We aim at learning a diffeomophism $\psi(\cdot)$ that maps the trajectory $\bFp(t)$, solution of the base \ac{ds} in~\eqref{eq:log_ds}, into an arbitrary complex demonstration. To this end, let us assume the user provides a set of $D \geq 1$ demonstrations each containing $L$ points on a Riemannian manifold. Demonstrations are organized in the set $\boldfrak{B}=\left\lbrace \bFq^d_l \right\rbrace^{L,D}_{l=1, d=1}$, where each $\bFq^d_l \in \mathcal{M}$. We also assume that the demonstrations converge to the same goal ($\boldfrak{b}^1_L = \cdots = \bFq^D_L = \boldfrak{g}$) and that a sampling time $\delta t$ is known. {It is worth mentioning that, when orientation trajectories are collected from demonstrations with a real robot, it is needed to extract \acp{uq} from rotation matrices. This is because the robot's forward kinematics is typically expressed as a homogeneous transformation matrix~\cite{siciliano2009robotics}. While numerically extracting \acp{uq} from a sequence of rotation matrices, it can happen that the approach returns a quaternion at time $t$ and its antipodal at $t+1$. This is because antipodal \acp{uq} represent the same rotation. To prevent this discontinuity, one can check that the dot product $\boldfrak{q}_t\cdot \boldfrak{q}_{t+1} > 0$, otherwise replace $\boldfrak{q}_{t+1}$ with $-\boldfrak{q}_{t+1}$.}

Given the set of demonstrations $\boldfrak{B}$, we generate a set of $D$ base trajectories by {projecting~\eqref{eq:log_ds} on the manifold}. 
More in details, we set the initial condition $\bFp^d_1 = \bFq^d_1$ and {project the tangent space velocity on the manifold using the exponential map as: } 
{
\begin{equation}
    \quad\bFp^d_{l+1} =  \text{Exp}_{\bFp^d_l}\left(\delta t \,\dot{\bFp}^d_{l} \right)~\forall\,l,d
    \label{eq:integration}
\end{equation}
}
The time derivative $\dot{\bFp}^d_{l}$ is defined as in~\eqref{eq:log_ds}, and the exponential/logarithmic maps {for \ac{uq} and \ac{spd} manifolds} are defined as in~\secref{sec:background}.
	
The $D$ base trajectories are organized in a set $\boldfrak{A}=\left\lbrace \bFp^d_l \right\rbrace^{L,D}_{l=1, d=1}$. {In order to transform the datasets $\boldfrak{A}$ and $\boldfrak{B}$ into suitable training data we proceed as follows. We use the logarithmic map $\bP^d_l = \text{Log}_{\bFp^d_l}(\bFg),\,\forall l,d$ to project the goal $\bFg$ in each \ac{ts} placed at $\bP^d_l$. We use the logarithmic map $\bQ^d_l = \text{Log}_{\bFp^d_l}(\bFq^d_{l}),\,\forall l,d$ to project each point in $\boldfrak{B}$ in the \ac{ts} placed at $\bP^d_l$. As a result, we obtain the sets $\bm{\mathcal{A}}=\left\lbrace \bP^d_l \right\rbrace^{L,D}_{l=1, d=1}$ and $\bm{\mathcal{B}}=\left\lbrace \bQ^d_l \right\rbrace^{L,D}_{l=1, d=1}$. In other words, we have in $\bm{\mathcal{A}}$ the points from the base the DS~\eqref{eq:log_ds} that exponentially converge towards $\bFg$ and in $\bm{\mathcal{B}}$ their demonstrated values. Note that each $\bP^d_l$ and $\bQ^d_l$ is expressed in the same \ac{ts} to make them comparable.}
%

After this procedure, the learning problem becomes how to fit a mapping between  $\bm{\mathcal{A}}$ and $\bm{\mathcal{B}}$ while preserving the stability. Exploiting the theoretical results in Theorem~\ref{th:stab_diff_ds} and Remark~\ref{rem:diff_ds_goal}, this learning problem is solved by fitting a diffeomorphism  between  $\bm{\mathcal{A}}$ and $\bm{\mathcal{B}}$. The resulting approach is presented in the rest of this section.

\subsection{\ac{gmm}/\ac{gmr}-based diffeomorphism}
A \ac{gmm} \cite{cohn1996active} models the joint probability distribution $p(\cdot)$ between training data as a weighted sum of $K$ Gaussian components $\mathcal{N}(\cdot)$, \ie
\begin{equation}
p(\bP, \bQ|\bm{\Theta}_k) = \sum_{k=1}^K \pi_k \mathcal{N}(\bP, \bQ|\bm{\mu}_k, \bm{\Sigma}_k), 
\label{eq:gmm}
\end{equation}
where each $\bm{\Theta}_k = \{\pi_k, \bm{\mu}_k, \bm{\Sigma}_k\}$ contains learning parameters. The $K$ mixing weights $\pi_k$ satisfy $\sum_{k=1}^K \pi_k = 1$, while the means and covariance matrices are defined as
\begin{equation}
\bm{\mu}_k = \begin{bmatrix} \bm{\mu}_k^{a} \\ \bm{\mu}_k^{b} \end{bmatrix}
,  \quad  \bm{\Sigma}_k = \begin{bmatrix} \bm{\Sigma}_k^{aa} & \bm{\Sigma}_k^{ab} \\ \bm{\Sigma}_k^{ba} & \bm{\Sigma}_k^{bb}\end{bmatrix}.
\end{equation}

As shown in \cite{khadivar2021learning} for periodic \acp{ds} in Euclidean space, we can use conditioning and expectation on the joint distribution in~\eqref{eq:gmm} to compute the mapping $\psi(\bP)$ and its inverse $\psi^{-1}(\bQ)$. The sought mappings $\psi(\bP) = \mathbb{E}[p(\bQ|\bP)]$ and $\psi^{-1}(\bQ) = \mathbb{E}[p(\bP|\bQ)]$  are computed in closed-form using \ac{gmr} \cite{cohn1996active, calinon09book} as:
\begin{equation}
\begin{split}
    \psi(\bP) &= \sum_{k=1}^K h_k(\bP)\left(\bm{\mu}_k^{b} \right. \\ & \left. \quad+ \bm{\Sigma}_k^{ba}(\bm{\Sigma}_k^{aa})^{-1}(\bP - \bm{\mu}_k^{a}) \right) , \\
    h_k(\bP) &= \frac{\pi_k\mathcal{N}(\bP|\bm{\mu}_k^{a}, \bm{\Sigma}_k^{aa})}{\sum_{i=1}^K \pi_i\mathcal{N}(\bP|\bm{\mu}_i^{a}, \bm{\Sigma}_i^{aa})},
\end{split}
\label{eq:gmr_diffeo}
\end{equation}
and
\begin{equation}
\begin{split}
    \psi^{-1}(\bQ) &=  \sum_{k=1}^K h_k(\bQ)\left(\bm{\mu}_k^{a} \right. \\ & \left.\quad + \bm{\Sigma}_k^{ab}(\bm{\Sigma}_k^{bb})^{-1}(\bQ - \bm{\mu}_k^{b}) \right) , \\
    h_k(\bQ) &= \frac{\pi_k\mathcal{N}(\bQ|\bm{\mu}_k^{b}, \bm{\Sigma}_k^{bb})}{\sum_{i=1}^K \pi_i\mathcal{N}(\bQ|\bm{\mu}_i^{b}, \bm{\Sigma}_i^{bb})}.
\end{split}
\label{eq:gmr_inv_diffeo}
\end{equation}
It is worth noticing that since both $\psi(\bP)$ and its inverse $\psi^{-1}(\bQ)$ exist and are differentiable, $\psi(\bP)$ is a diffeomorphism.  

In order to build the \ac{ds} in~\eqref{eq:diff_log_ds}, we need to compute the Jacobian matrix $\bm{J}_{\psi} (\bP)$ which has the closed-form expression given in~\eqref{eq:gmr_jacobian}. For completeness, we provide the full derivation of $\bm{J}_{\psi} (\bP)$ in Appendix~\ref{app:gmr_jacobian}.
Note that the term $\hat{\psi}(\bP)$ in~\eqref{eq:gmr_jacobian} is already computed in~\eqref{eq:gmr_diffeo} and can be reused to speed-up the computation of the Jacobian.

	\begin{equation}
		\begin{split}
			\bm{J}_{\psi} (\bP) &= \sum_{k=1}^K h_k(\bP) \Biggl[ \bm{\Sigma}_k^{ba}(\bm{\Sigma}_k^{aa})^{-1} +\\ 
			&\ \, \Biggl(\sum_{i=1}^K h_i(\bP) \left(\bm{\Sigma}_i^{aa}\right)^{-1}(\bP - \bm{\mu}_i^{a}) - \\
			&\quad\ \ h_k(\bP) \left(\bm{\Sigma}_k^{aa}\right)^{-1}(\bP - \bm{\mu}_k^{a})\Biggr) \hat{\psi}(\bP)\trsp \Biggr], \\
			\hat{\psi}(\bP) &= \bm{\mu}_k^{b} + \bm{\Sigma}_k^{ba}(\bm{\Sigma}_k^{aa})^{-1}(\bP - \bm{\mu}_k^{a}).
		\end{split}
		\label{eq:gmr_jacobian}
	\end{equation}

\subsection{Point-to-point motion on Riemannian manifolds}\label{subsec:proposed}
The \ac{gmm}/\ac{gmr}-based diffeomorphism presented in the previous section does not explicitly consider that we aim at reproducing discrete motions, \ie motions with a specific initial and final point. In particular, there is no guarantee that the learned diffeomorphism is an identity at the goal, \ie that $\psi(\text{Log}_{\boldfrak{g}}(\boldfrak{g}))=\psi(\bm{0})=\psi^{-1}(\bm{0})=\bm{0}$, which is sufficient to guarantee that base and diffeomorphed \acp{ds} have the same goal (Remark~\ref{rem:diff_ds_goal}).
This property is of importance in \ac{ds}-based \ac{lfd}, as we are generally interested in converging to a given target that is independent from the learning process. Moreover, since base and diffeomorphed \acp{ds} have the same initial condition ($\bFp_0=\bFq_0$), it is also beneficial that the learned diffeomorphism is an identity at the initial point, \ie that {$\psi(\text{Log}_{\boldfrak{a}_0}(\bFg))=\bm{a}_0 = \psi^{-1}(\text{Log}_{\boldfrak{a}_0}(\bFq_0)) = \bm{b}_0$}, to prevent discontinuities in the initial velocity.

In order to force the diffeomorphism to be identity at the goal, we augment the learned \ac{gmm} with a ``small'' component placed at {$\text{Log}_{\boldfrak{g}}(\boldfrak{g})=\bm{0}$}. 
More in details, we augment the $K$ learned components of the \ac{gmm}~\eqref{eq:gmm} with $\pi_{K+1}$ and $\mathcal{N}(\bP, \bQ|\bm{\mu}_{K+1}, \bm{\Sigma}_{K+1})$ and set $\bm{\mu}_{K+1} = \bm{0}$ and $\bm{\Sigma}_{K+1} = 
k_{\mathcal{N}}\bm{I}$. We re-scale the priors from $1$ to $K$ as $\pi_k = \pi_k - \pi_{K+1}/K$ to ensure that the $K+1$ priors sum up to one. Conditioning with this new component makes points to be mapped arbitrarily close to the goal. The distance to the goal depends on the gain $k_{\mathcal{N}}$. In this work,  we set $k_{\mathcal{N}}=\num{1e-5}$ and $\pi_{K+1} = 0.01$.  
We use a similar strategy to enforce a smooth start. Given the initial point on the manifold {$\bFp_0=\bFq_0=\overline{\bFp}$}, we project it into the \ac{ts} and place a small Gaussian around it, \ie $\pi_0 = 0.01$ and $\mathcal{N}(\bP, \bQ|\bm{\mu}_{0}= {[\text{Log}_{\overline{\bFp}}(\bFg)\trsp,\bm{0}\trsp]}\trsp, \bm{\Sigma}_{0}=\num{1e-5}\bm{I})$. Conditioning with this new component ensures that {$\psi(\text{Log}_{\overline{\bFp}}(\bFg)) \approx \psi^{-1}(\text{Log}_{\overline{\bFp}}(\bFq_0)) \approx \text{Log}_{\overline{\bFp}}(\bFg)$}. 
 
The possibility to change the goal, even on-the-fly (goal switching), is one of the appealing properties of \ac{ds}-based skill representations. {Changing the goal in our \ac{ours} also is possible. However, as already known for other \ac{ds}-based approaches~\cite{abudakka2020Geometry}, switching the goal may cause jumps in the generated trajectories and consequently high acceleration of the robot. In order to do avoid this problem, we exploit a geodesic dynamics that smoothly varies the goal from $\bFg$ to $\bFg_{new}$. In formulas
\begin{equation}
    \dot{\bFg}_{new} = k_{\bFg} \Log_{\bFg}(\bFg_{new}),
    \label{eq:goal_switching}
\end{equation}
where $k_{\bFg}>0$ ensures convergence to $\bFg_{new}$ as stated in Theorem~\ref{th:stab_log_ds}.}
%
An example of this procedure applied to \ac{uq} is shown in \secref{subsec:stack}.

\begin{figure}[t]
	\centering
	\def\svgwidth{\columnwidth}{\fontsize{8}{8}
		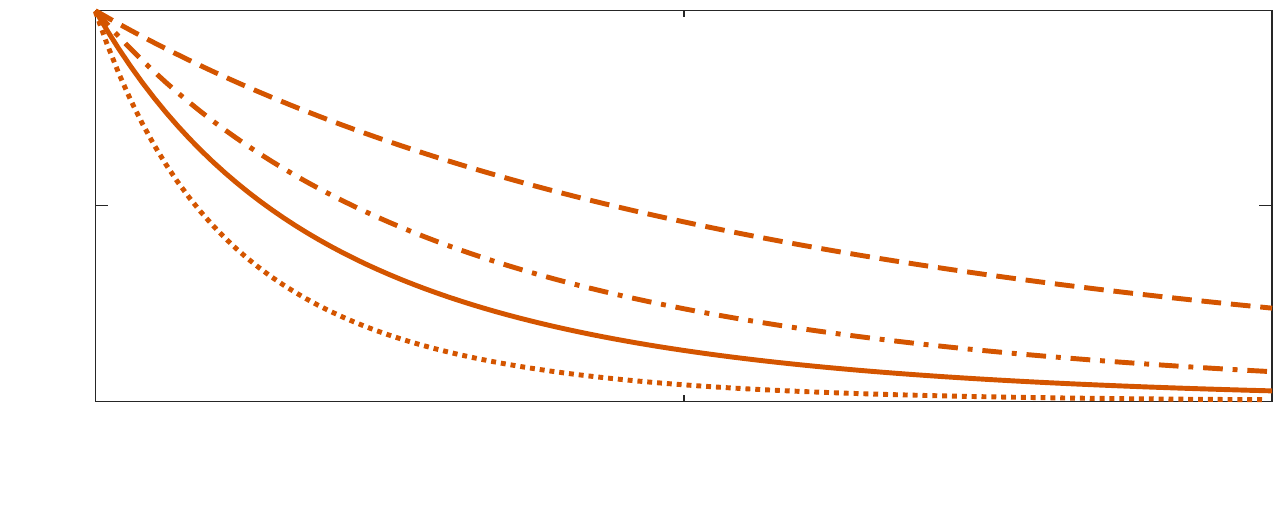}
	\caption{The convergence rate of the base \ac{ds} in~\eqref{eq:log_ds} depends on the gain $k_{\bFp} = \frac{k}{L \delta t}$. The figure shows the trajectory of the first entry $\mathfrak{a}_{11}$ of a $2\times 2$ \ac{spd} matrix for $\mathfrak{a}_{11}(0)=3$, $\mathfrak{g}_{11}=1$, $L=100$, $\delta t = 0.01\,$s, and $k \in [1,2,3,5]$.}
	\label{fig:log_ds_k}
\end{figure}

The tunable gain $k_{\bFq}$ controls the convergence rate of the base \ac{ds} in~\eqref{eq:log_ds} (see \figref{fig:log_ds_k}). Given the demonstrations $\boldfrak{B}=\left\lbrace \bFq^d_l \right\rbrace^{L,D}_{l=1, d=1}$ and the sampling time $\delta t$, we want that $\bFp^d_L$---obtained {using~\eqref{eq:integration}}---reaches $\bFq^d_L$ within a certain time. As shown in \figref{fig:log_ds_k}, too small values of $k_{\bFq}$ may fail to ensure that $\bFp^d_L \approx \bFq^d_L$. On the contrary, too large values of $k_{\bFq}$ cause a large part of the trajectory to be close to $\bFq^d_L$. This makes the learning problem harder as similar points need to be mapped into potentially different ones, \ie the data distribution tends to be multi-modal. Inspired by results from linear systems analysis, we can rewrite the control gain as $k_{\bFp} = \frac{k}{L \delta t}$. Dividing by $L \delta t$ makes the control gain independent from number of samples and training time. Therefore, the newly introduced parameter $k$ produces same results at different temporal scales. Given the initial point on the manifold $\bFp(0)$, one can choose how well the base trajectory has to reach the goal and define $k$ accordingly. 

The proposed approach to learn stable \acs{ds} on Riemannian manifolds is summarized in \algoref{alg:ds_riem}.
\begin{algorithm}[h!]
\caption{\ac{ours}}
	\label{alg:ds_riem}
	\begin{algorithmic}[1]
		\State{Pre-process data}
		\begin{itemize}
			\item[--] Collect demonstrations $\boldfrak{B}=\left\lbrace \bFq^d_l \right\rbrace^{L,D}_{l=1, d=1}$
			\item[--] Define the sampling time $\delta t$
			\item[--] Compute base trajectories $\boldfrak{A}=\left\lbrace \bFp^d_l \right\rbrace^{L,D}_{l=1, d=1}$ using \eqref{eq:integration}
			\item[--] Project to \ac{ts} using $\text{Log}_{\boldfrak{g}}(\cdot)$  (\eqref{eq:logSm} or \eqref{eq:logSPD}) to obtain  $\bm{\mathcal{A}}=\left\lbrace \bP^d_l \right\rbrace^{L,D}_{l=1, d=1}$, $\bm{\mathcal{B}}=\left\lbrace \bQ^d_l \right\rbrace^{L,D}_{l=1, d=1}$. (For \ac{spd} profiles, vectorize data using Mandel's notation.)
		\end{itemize}
		\State{Learn a diffeomorphism represented as a \ac{gmm}} 
		\begin{itemize}
			\item[--] Place a small Gaussian component at the origin of the \ac{ts} ($\pi_{K+1} = \overline{\pi}$, $\mathcal{N}(\bP, \bQ|\bm{0}, k_\mathcal{N}\bm{I})$) 
			\item[--] Place a small Gaussian component at the initial point ($\pi_{0} = \overline{\pi}$, $\mathcal{N}(\bP, \bQ|\overline{\bP}, k_\mathcal{N}\bm{I})$) 
		\end{itemize}
		\State{Generate Riemannian motion via \ac{gmr}}
		\begin{itemize}
			\item[--] Compute $\psi^{-1}$ from \eqref{eq:gmr_inv_diffeo}, $\bm{J}_{\psi}$  from \eqref{eq:gmr_jacobian}, and the velocity from \eqref{eq:diff_log_ds}. 
			\item[--] Project on the manifold using~\eqref{eq:integration}.
		\end{itemize} 
	\end{algorithmic}	
\end{algorithm}
{
\subsection{Learning in current and fixed TS}
\begin{figure}[t]
    \centering 
    \subfigure[Current TS]{\includegraphics[width=0.48\columnwidth,trim={2cm 2.5cm 1.5cm 2cm},clip]{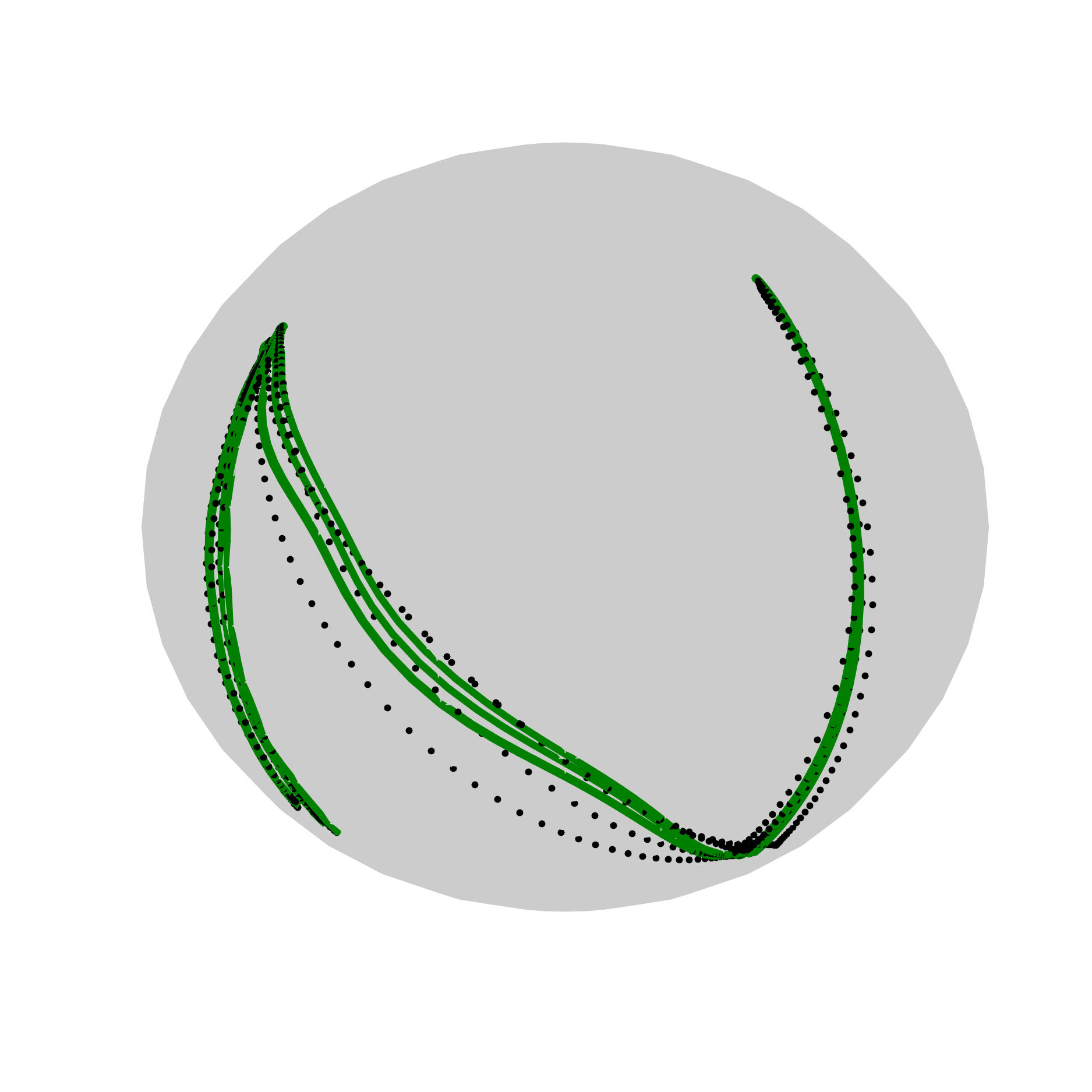}}
    %
    \subfigure[Lie algebra]{\includegraphics[width=0.48\columnwidth,trim={2cm 2.5cm 1.5cm 2cm},clip]{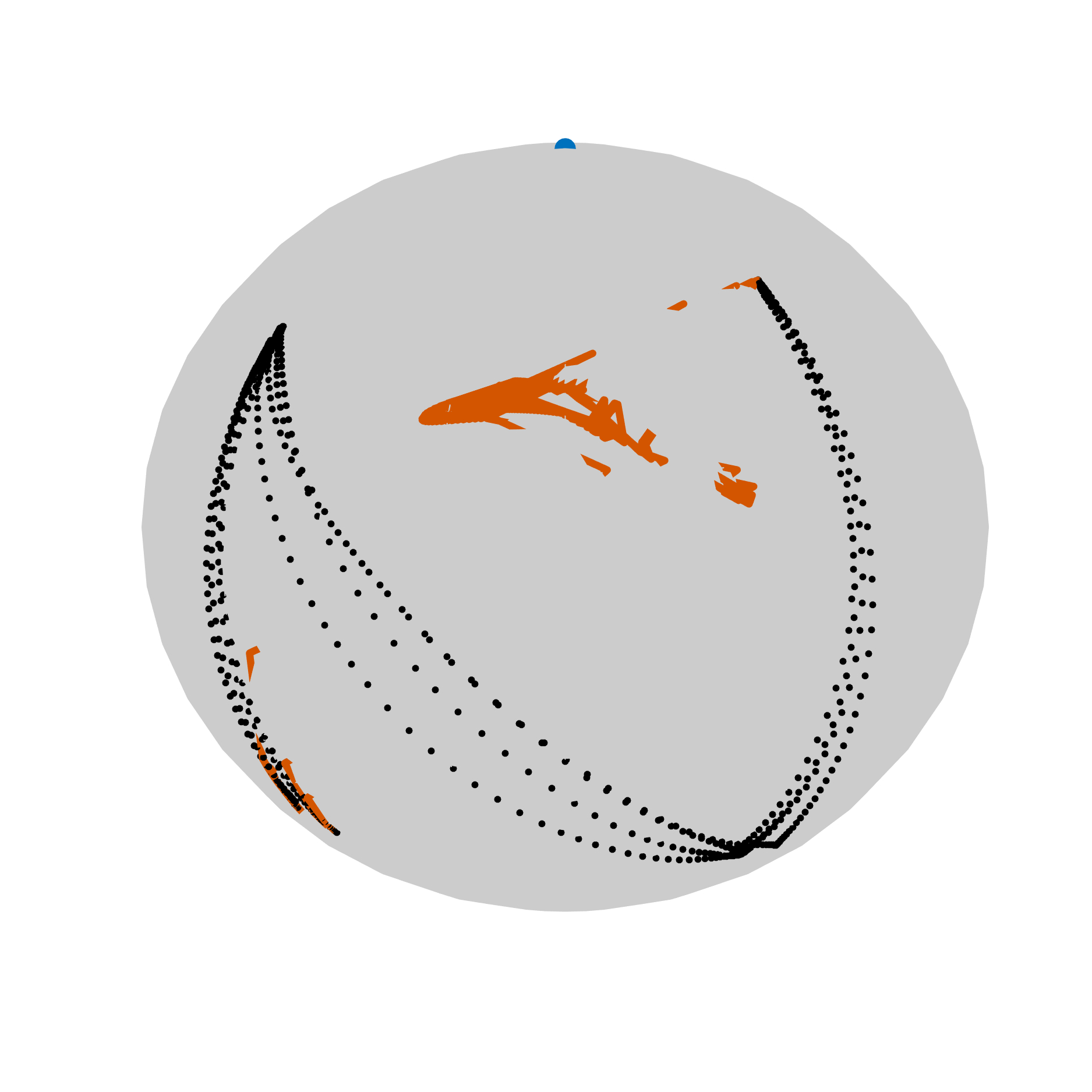}}
    \caption{Results obtained with \ac{ours} on the ``N'' shape.}
     \label{fig:N_shape}
\end{figure}
In this example, we show the benefits of learning manifold motions in the tangent space placed at the current point, called the \textit{current \ac{ts}} in this work, in contrast to projecting the entire entire trajectory in a unique (fixed) \ac{ts}. We use the ``N'' shape on $\mathcal{
S}^2$ provided in~\cite{calinon20gaussians} and shown in \figref{fig:N_shape} (black dots). The trajectory is designed to span both the north and south hemisphere, where the Lie algebra is known to introduce a non-negligible approximation~\cite{calinon20gaussians}.  

We follow the steps in \algoref{alg:ds_riem} using in one case the current \ac{ts} and the Lie algebra (\ie the \ac{ts} at the north pole) in the other. Qualitative results in \figref{fig:N_shape}(a) confirm that, using the current \ac{ts}, \ac{ours} can effectively encode the trajectory. The same result can be obtained using a \ac{ts} at the goal and parallel transporting the data at each step (see the assumption in Theorem~\ref{th:stab_log_ds}). However, this choice would increase the computational complexity due to the need of the parallel transport.  As expected, using the Lie algebra results in severe distortions (\figref{fig:N_shape}(b)). 
}

\section{Validation}
\label{sec:validation}
\begin{figure*}
    \centering 
    \vspace{-12pt}
    \includegraphics[trim={148 234 132 210},clip,width=0.12\textwidth]{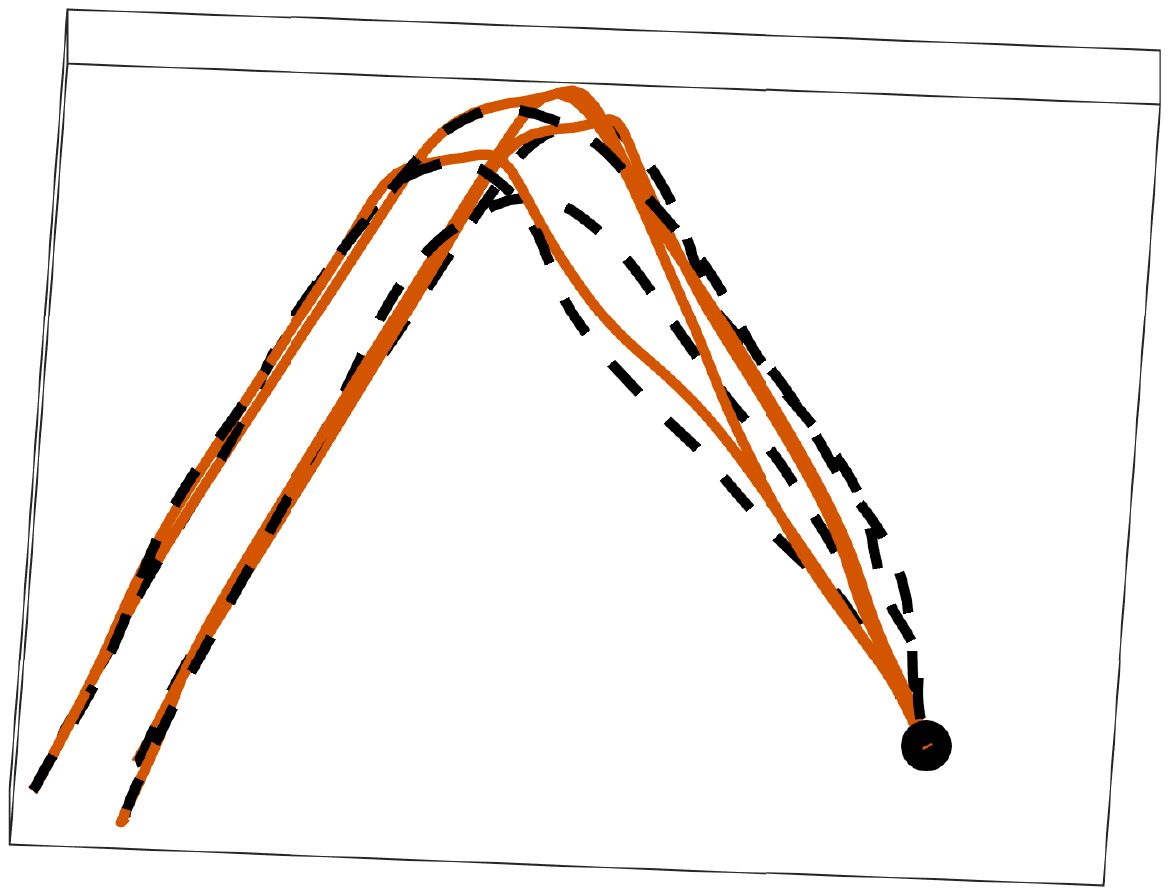}\hspace{-1mm}
    \vspace{-12pt}
    \includegraphics[trim={148 234 132 210},clip,width=0.12\textwidth]{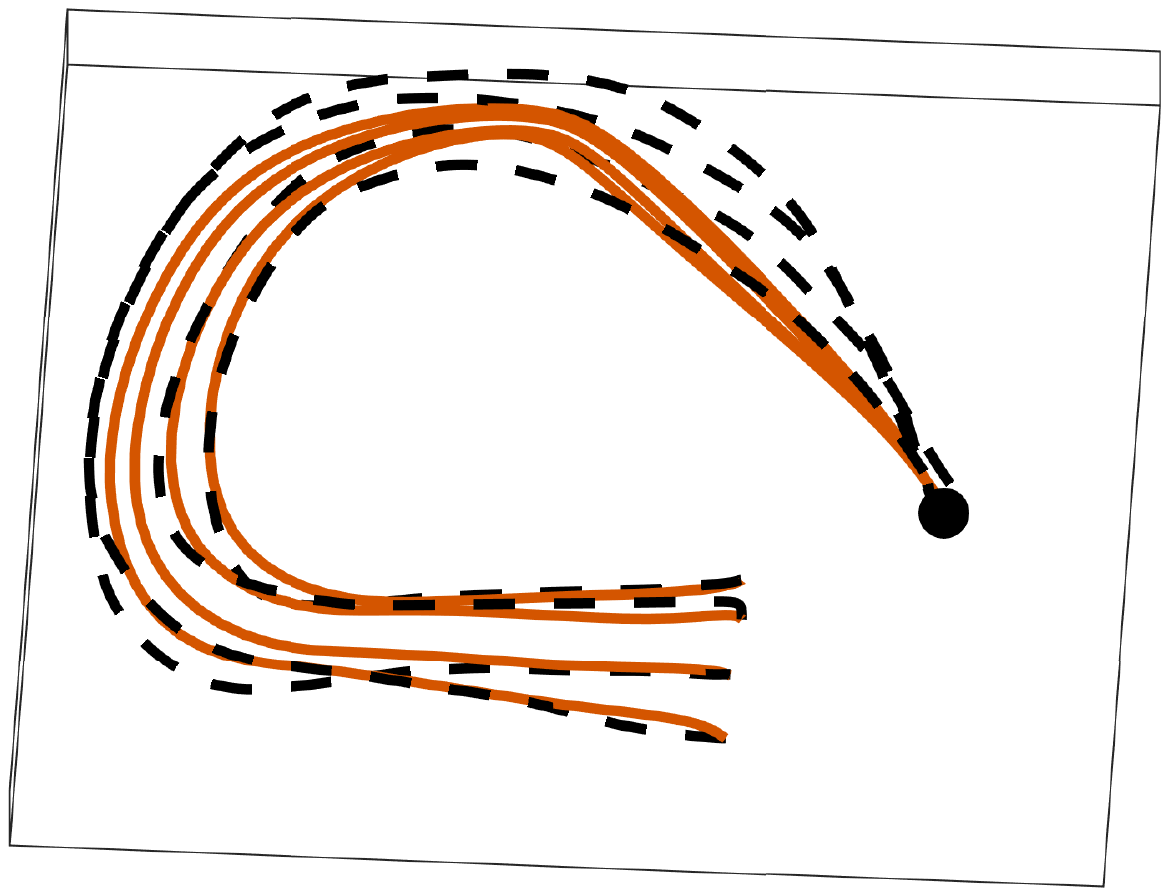}\hspace{-1mm}
    \includegraphics[trim={148 234 132 210},clip,width=0.12\textwidth]{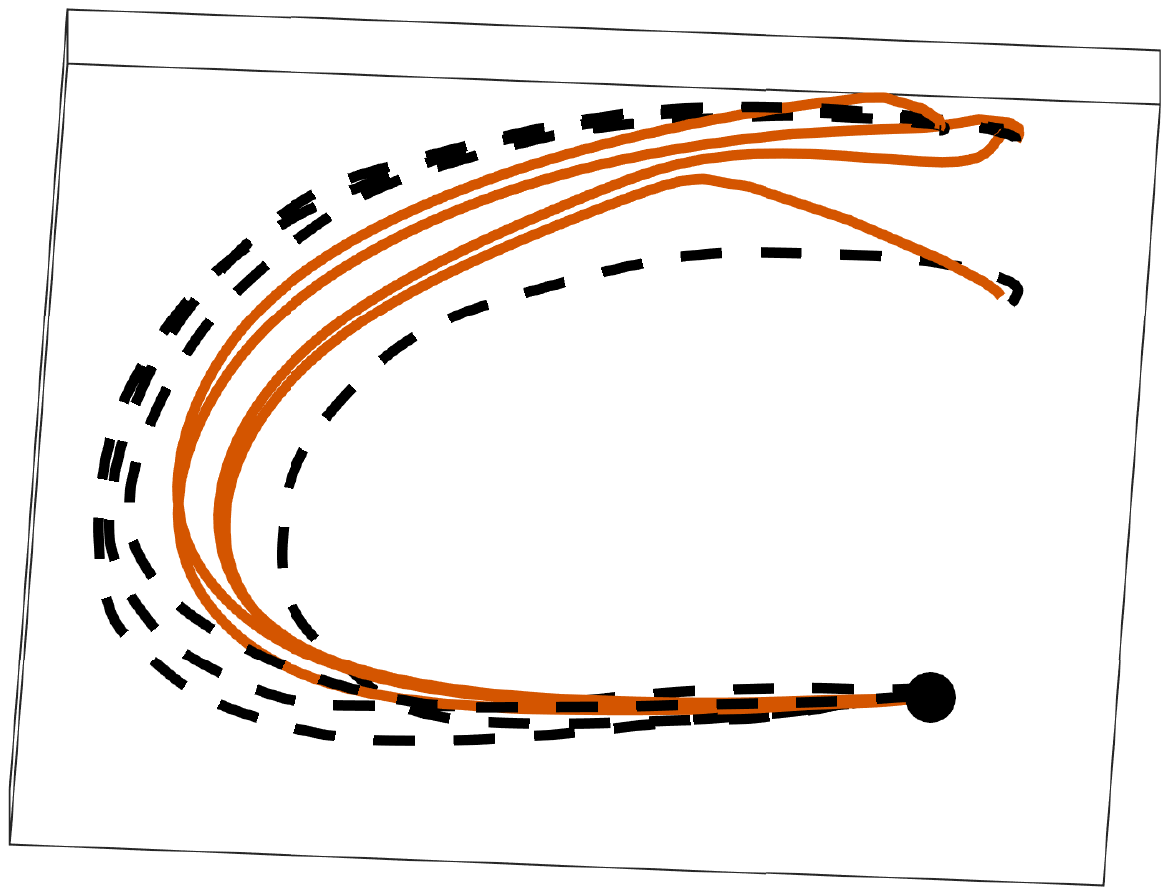}\hspace{-1mm}
    \includegraphics[trim={148 234 132 210},clip,width=0.12\textwidth]{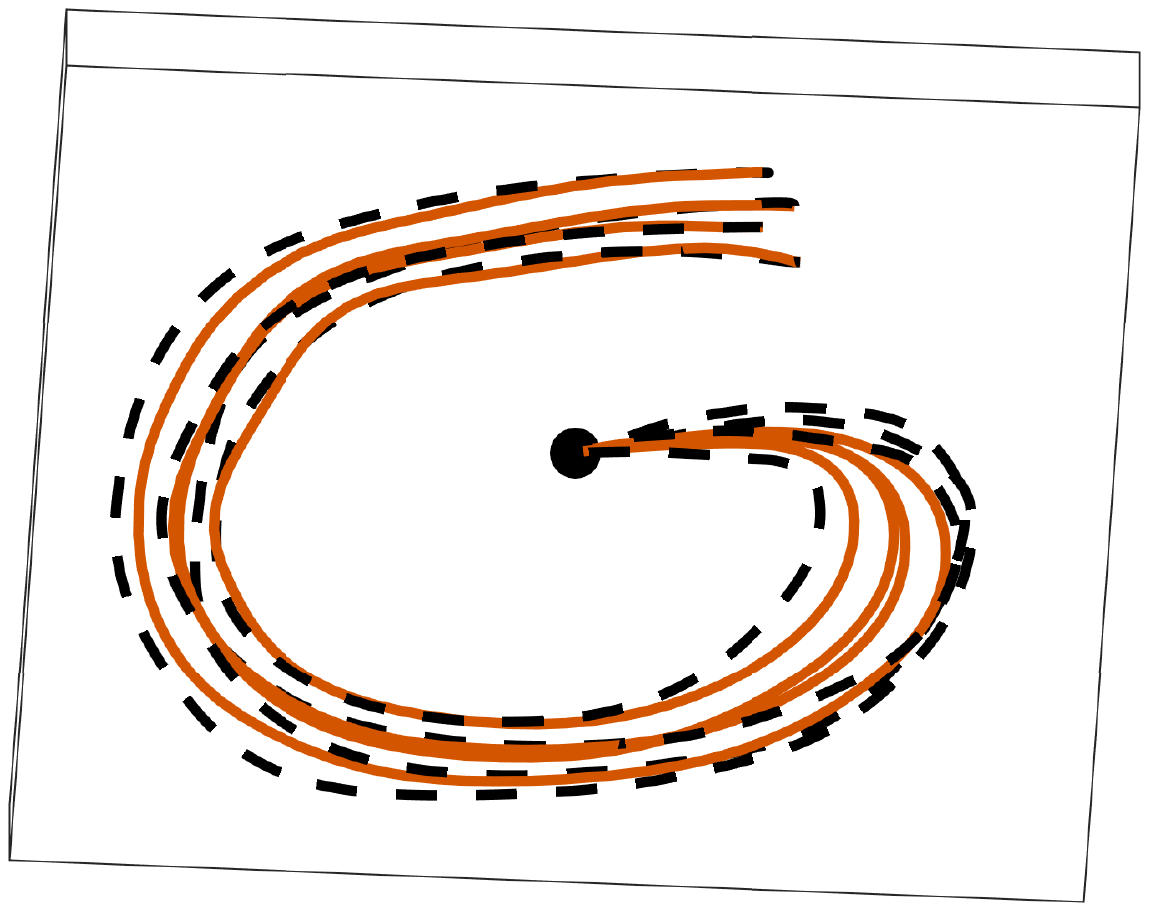}\hspace{-1mm}
    \includegraphics[trim={148 234 132 210},clip,width=0.12\textwidth]{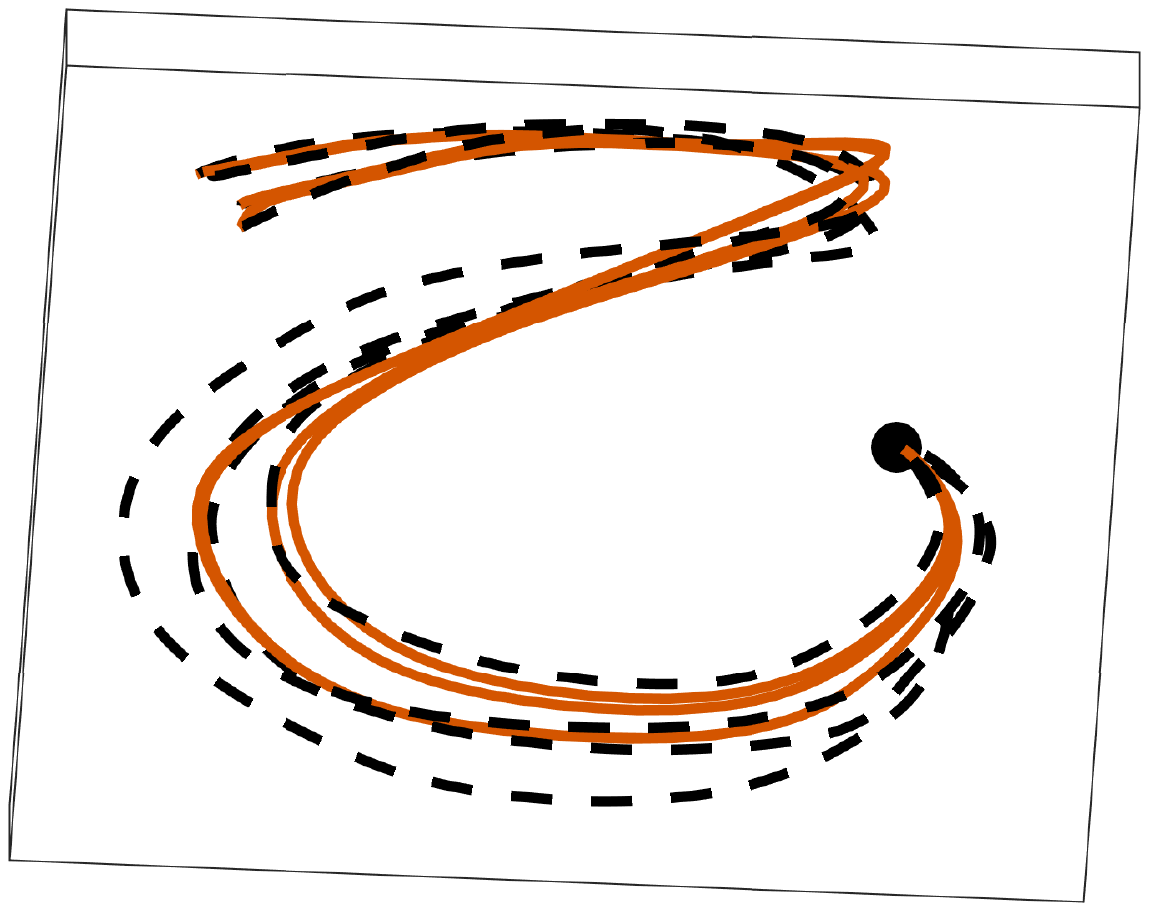}\hspace{-1mm}
    \includegraphics[trim={148 234 132 210},clip,width=0.12\textwidth]{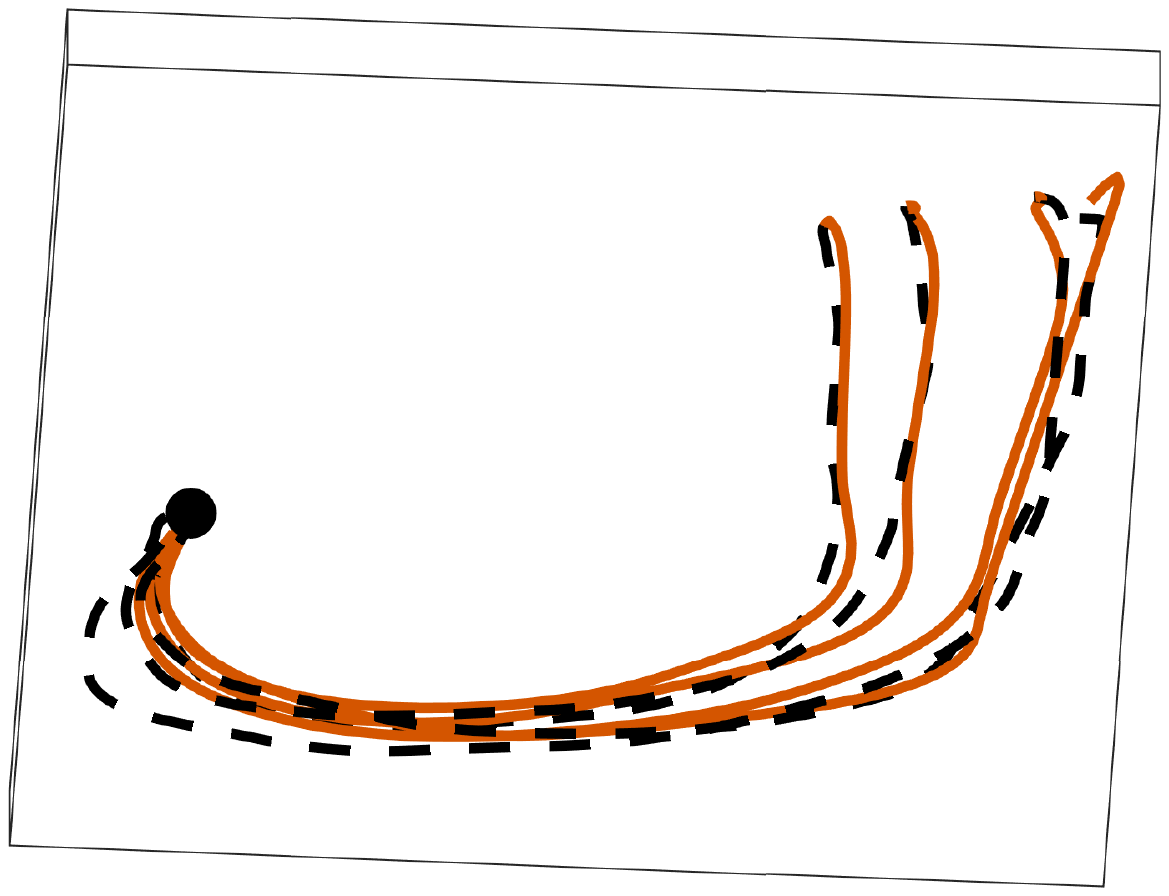}\hspace{-1mm}
    \includegraphics[trim={148 234 132 210},clip,width=0.12\textwidth]{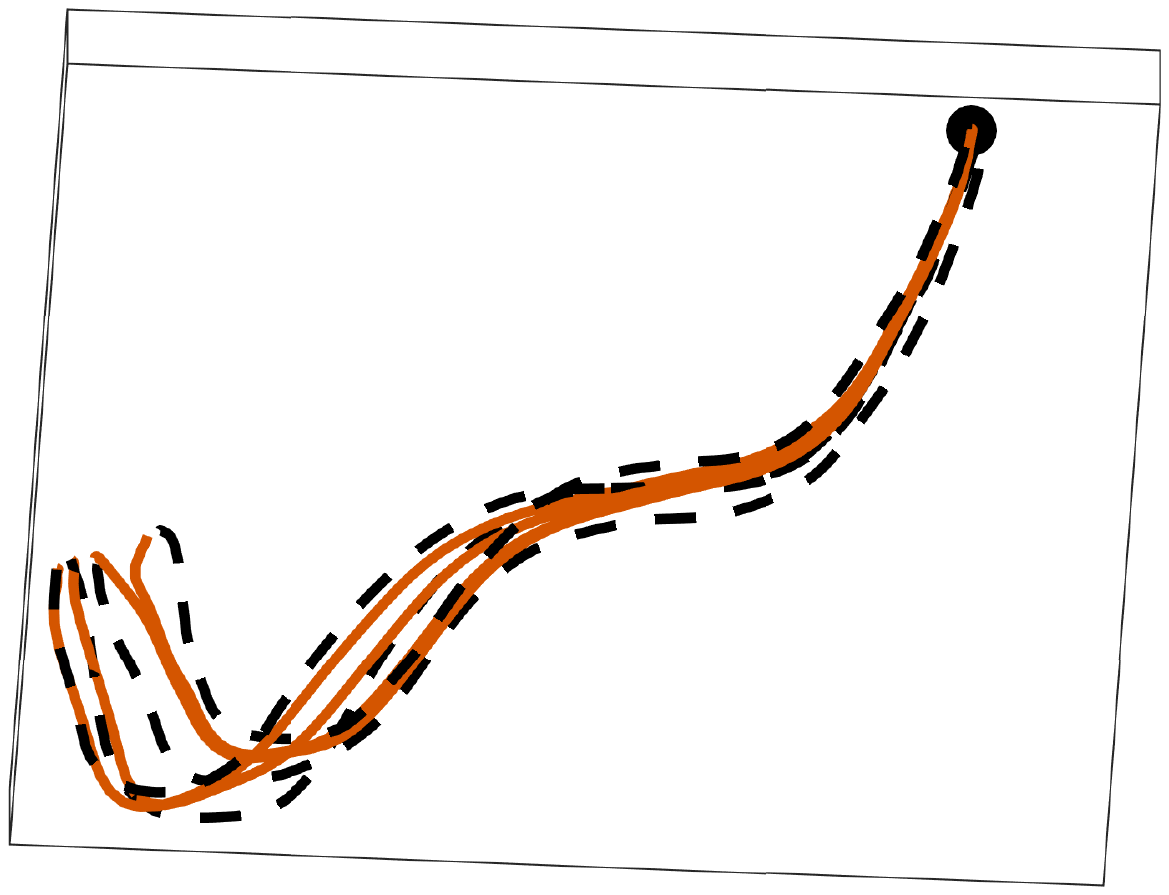}\hspace{-1mm}
    \includegraphics[trim={148 234 132 210},clip,width=0.12\textwidth]{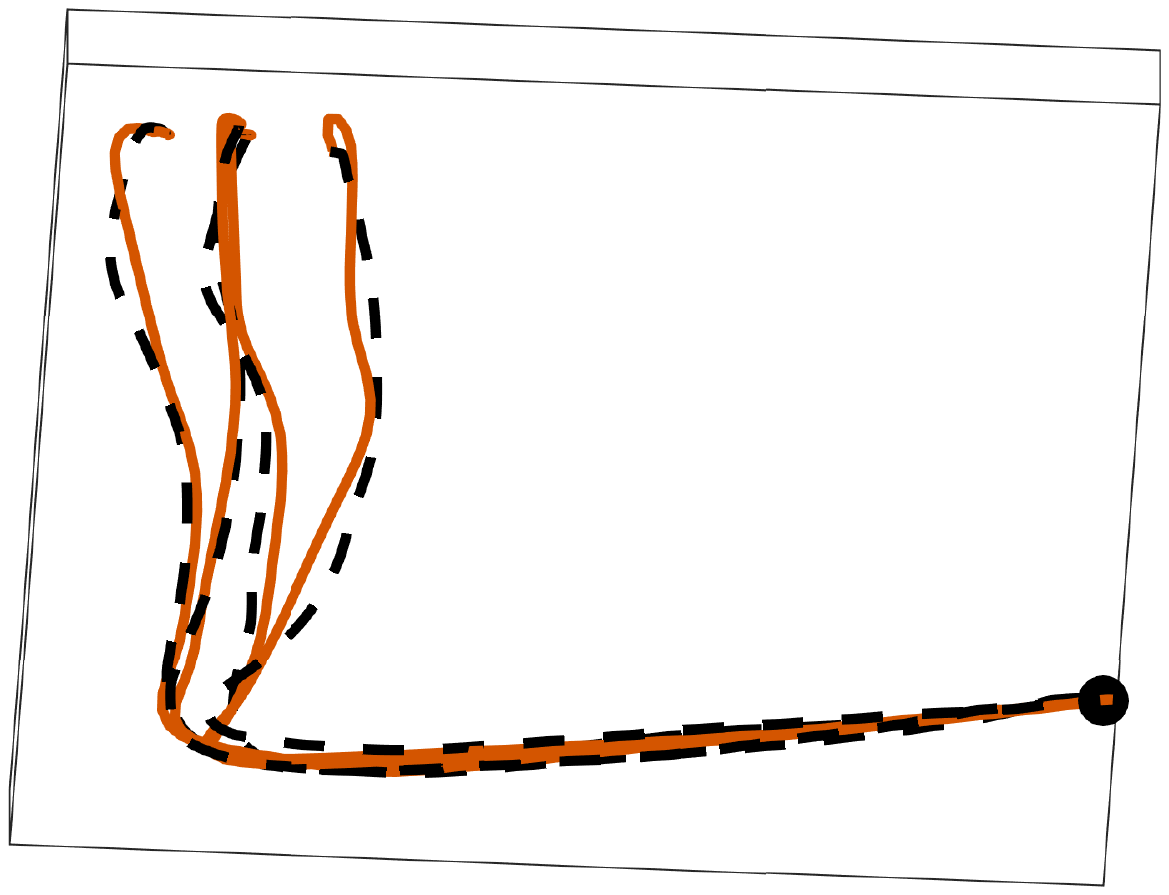}
    \vspace{-12pt}
    \includegraphics[trim={148 234 132 210},clip,width=0.12\textwidth]{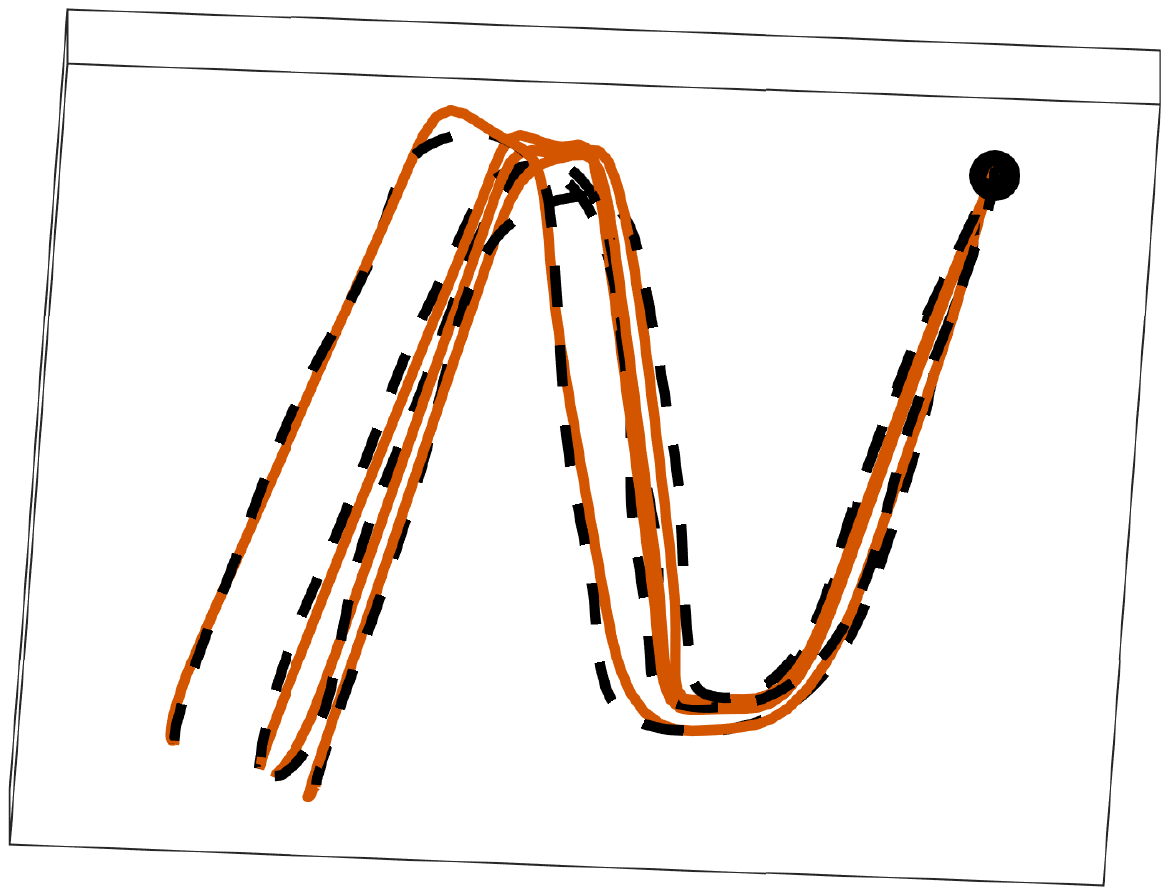}\hspace{-1mm}
    \includegraphics[trim={148 234 132 210},clip,width=0.12\textwidth]{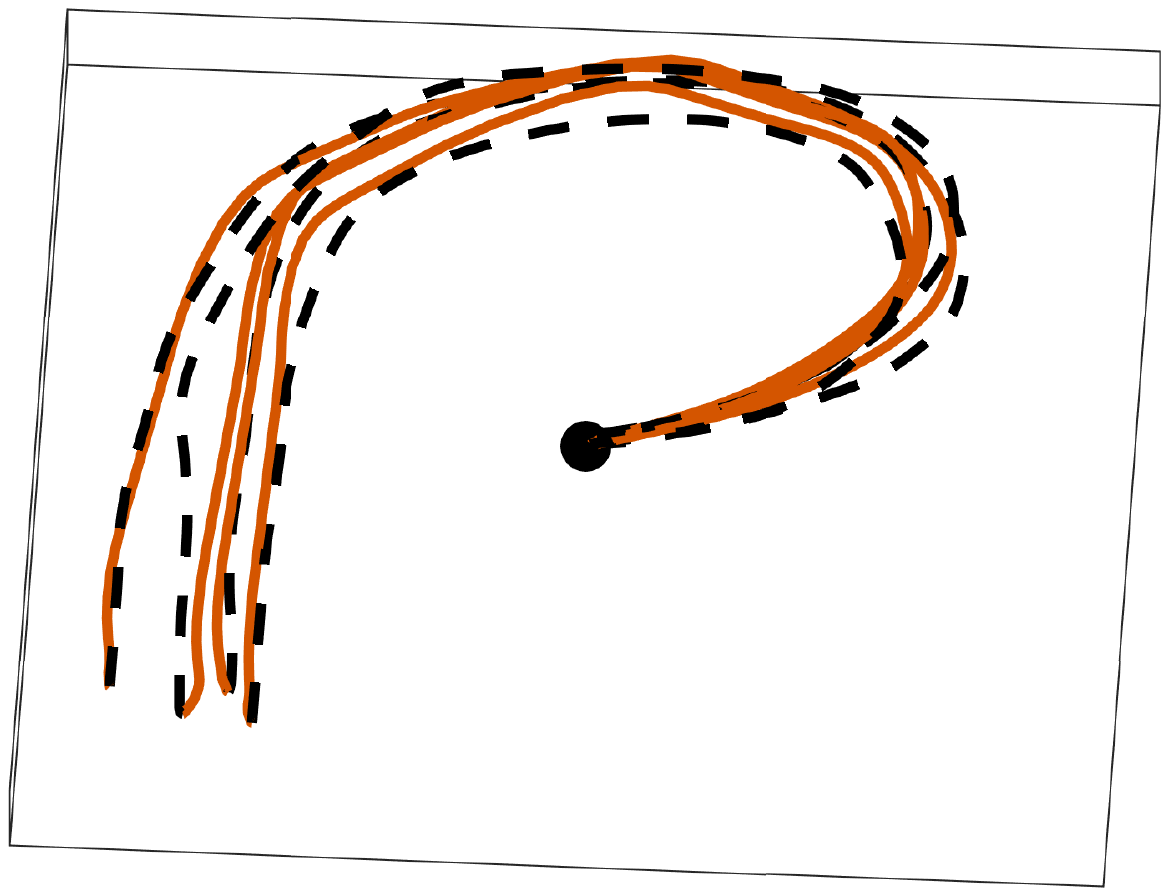}\hspace{-1mm}
    \includegraphics[trim={148 234 132 210},clip,width=0.12\textwidth]{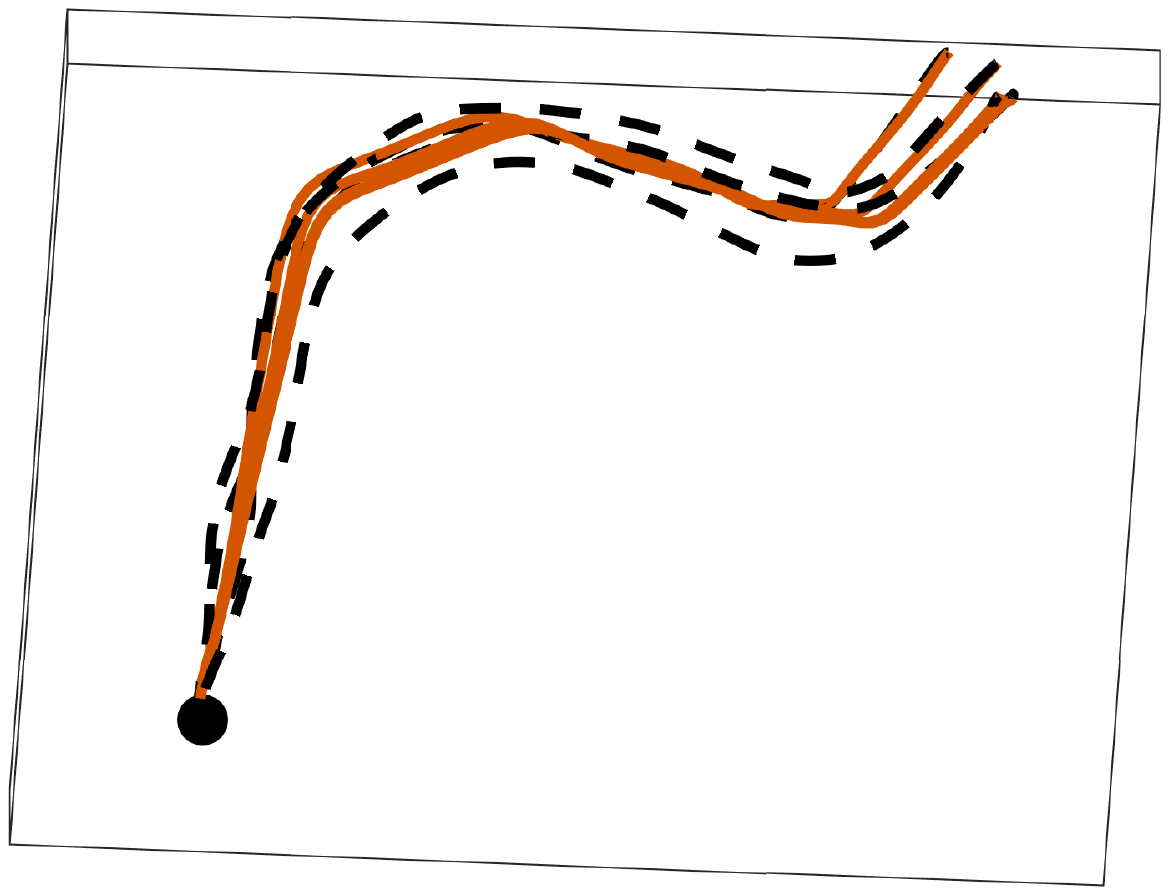}\hspace{-1mm}
    \includegraphics[trim={148 234 132 210},clip,width=0.12\textwidth]{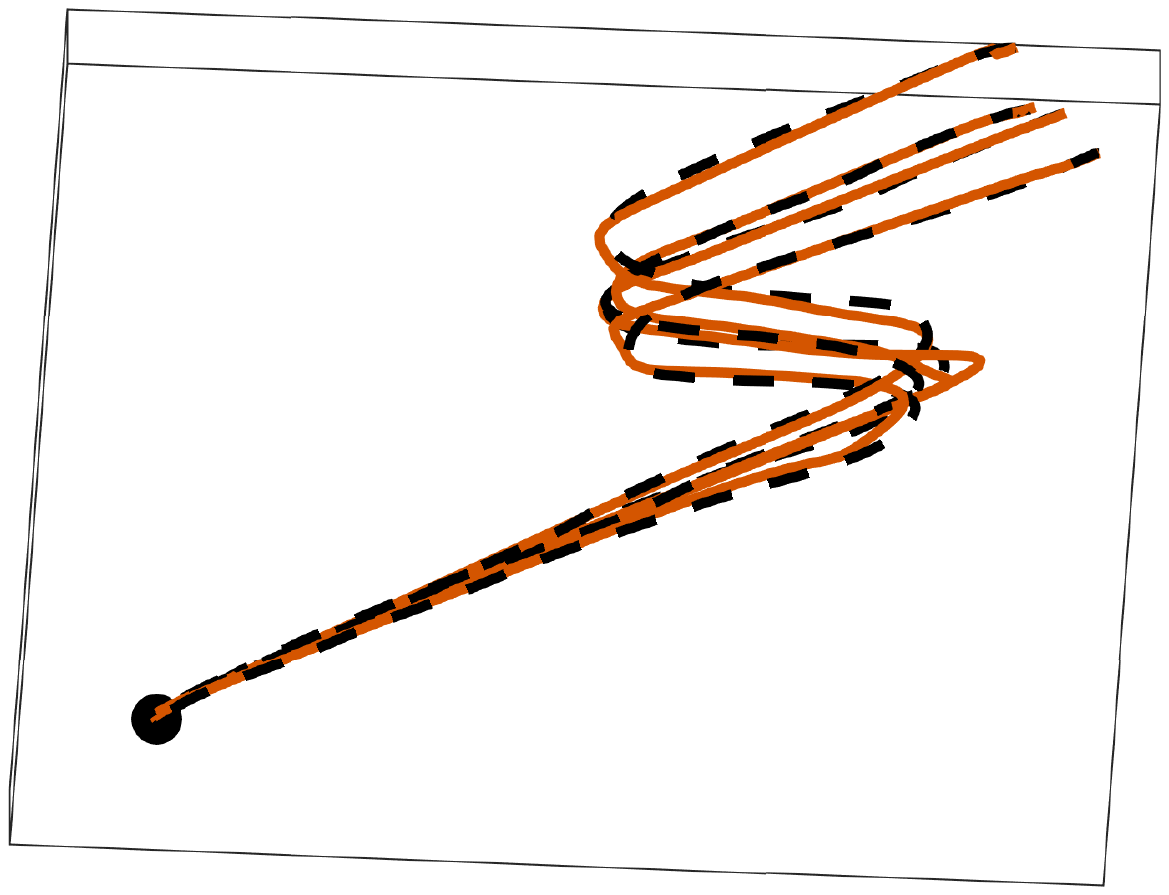}\hspace{-1mm}
    \includegraphics[trim={148 234 132 210},clip,width=0.12\textwidth]{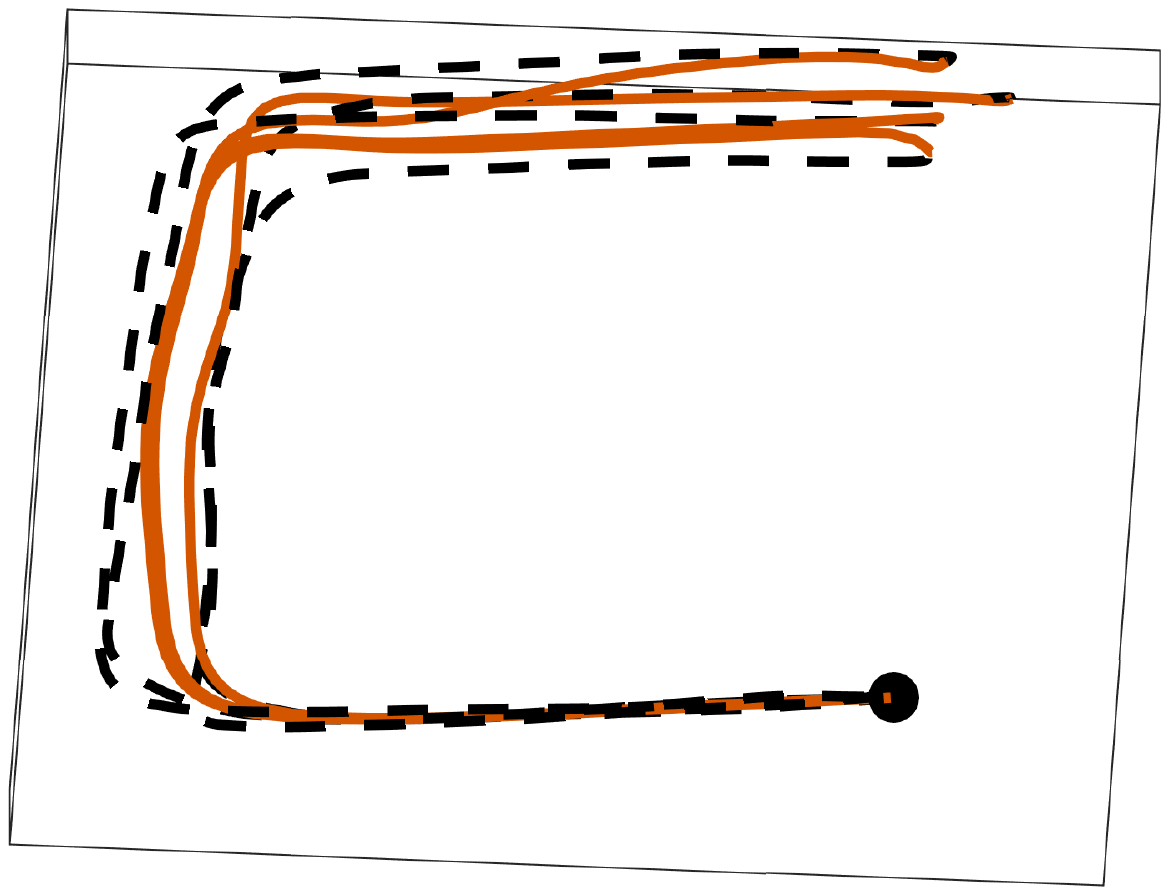}\hspace{-1mm}
    \includegraphics[trim={148 234 132 210},clip,width=0.12\textwidth]{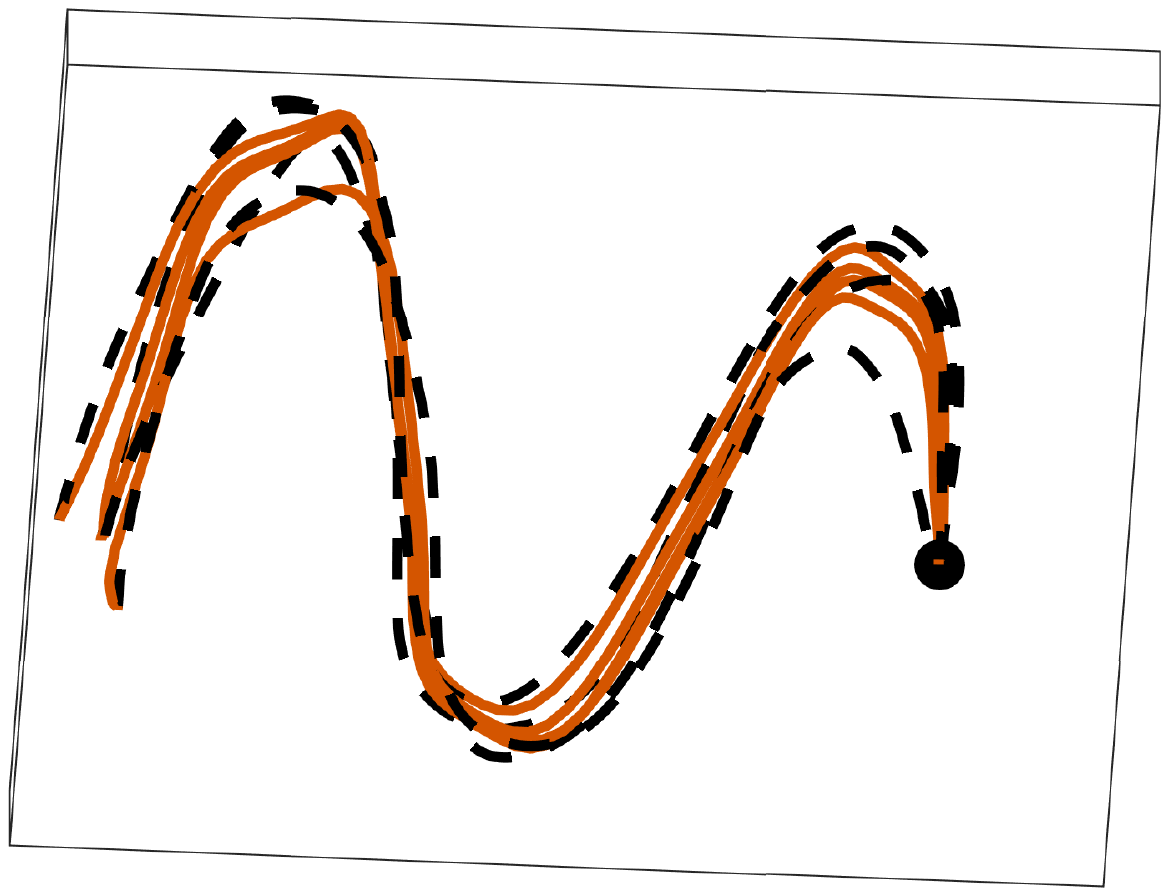}\hspace{-1mm}
    \includegraphics[trim={148 234 132 210},clip,width=0.12\textwidth]{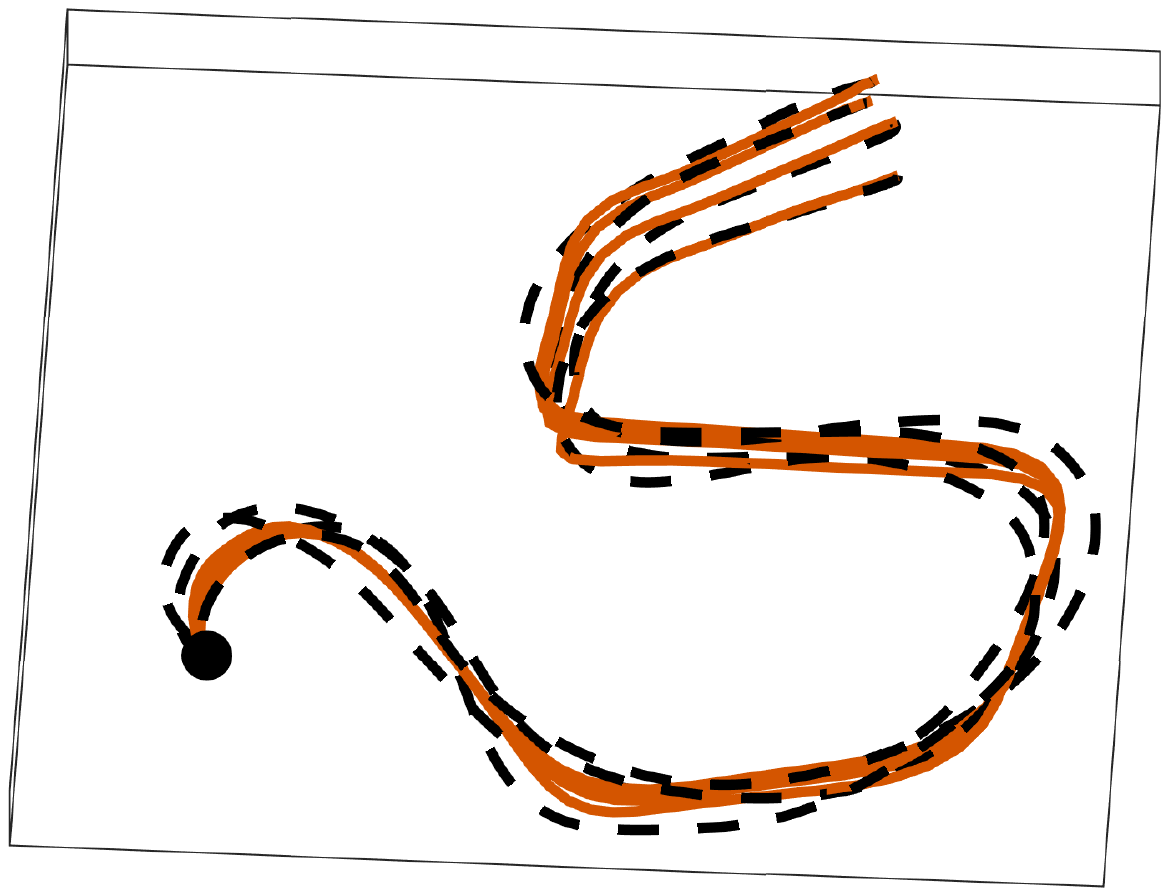}\hspace{-1mm}
    \includegraphics[trim={148 234 132 210},clip,width=0.12\textwidth]{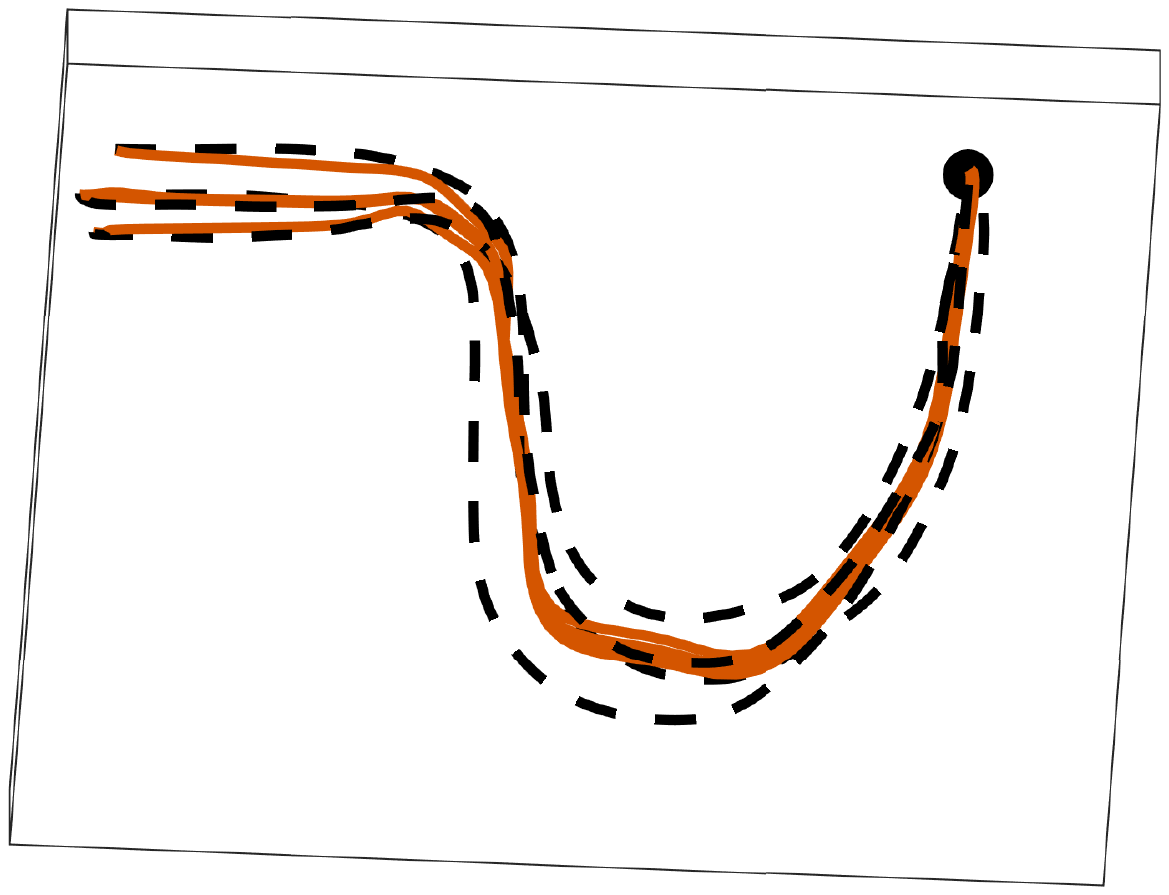}
    \vspace{-12pt}
    \includegraphics[trim={148 234 132 210},clip,width=0.12\textwidth]{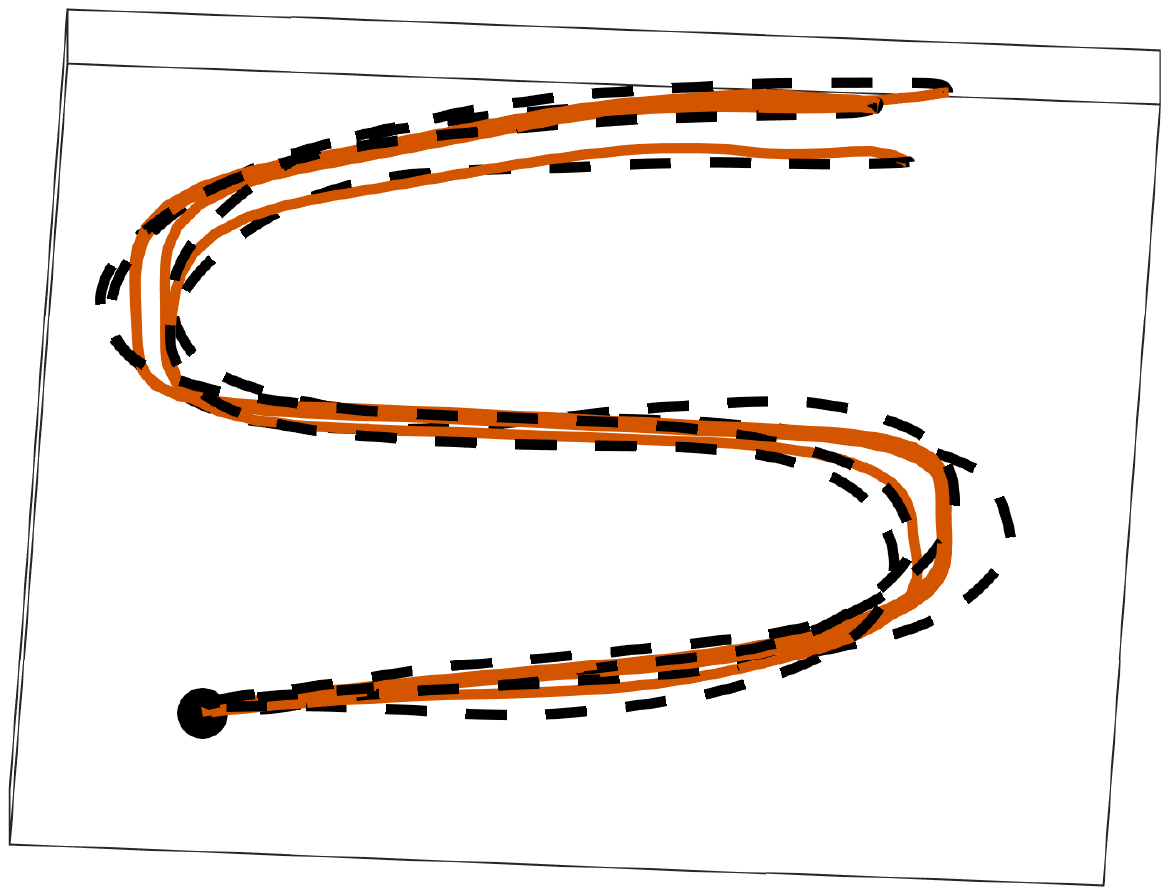}\hspace{-1mm}
    \includegraphics[trim={148 234 132 210},clip,width=0.12\textwidth]{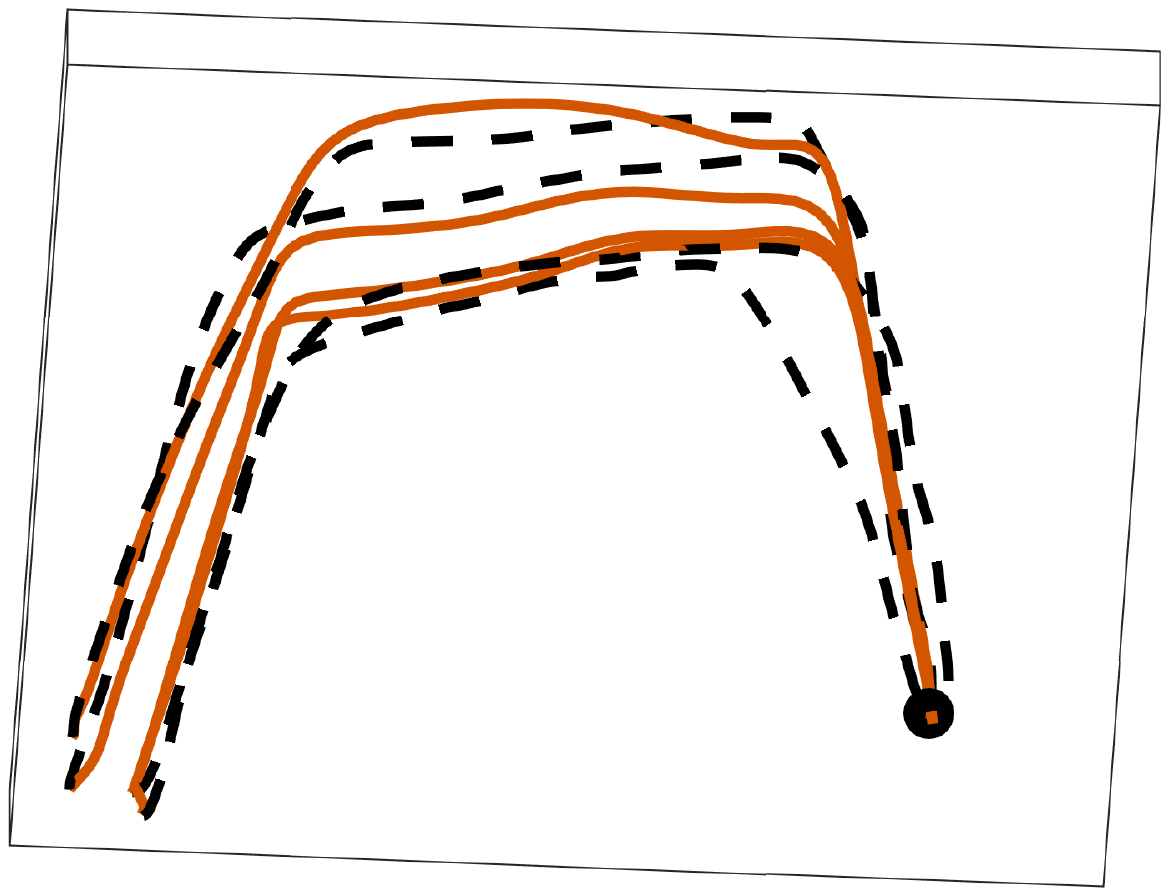}\hspace{-1mm}
    \includegraphics[trim={148 234 132 210},clip,width=0.12\textwidth]{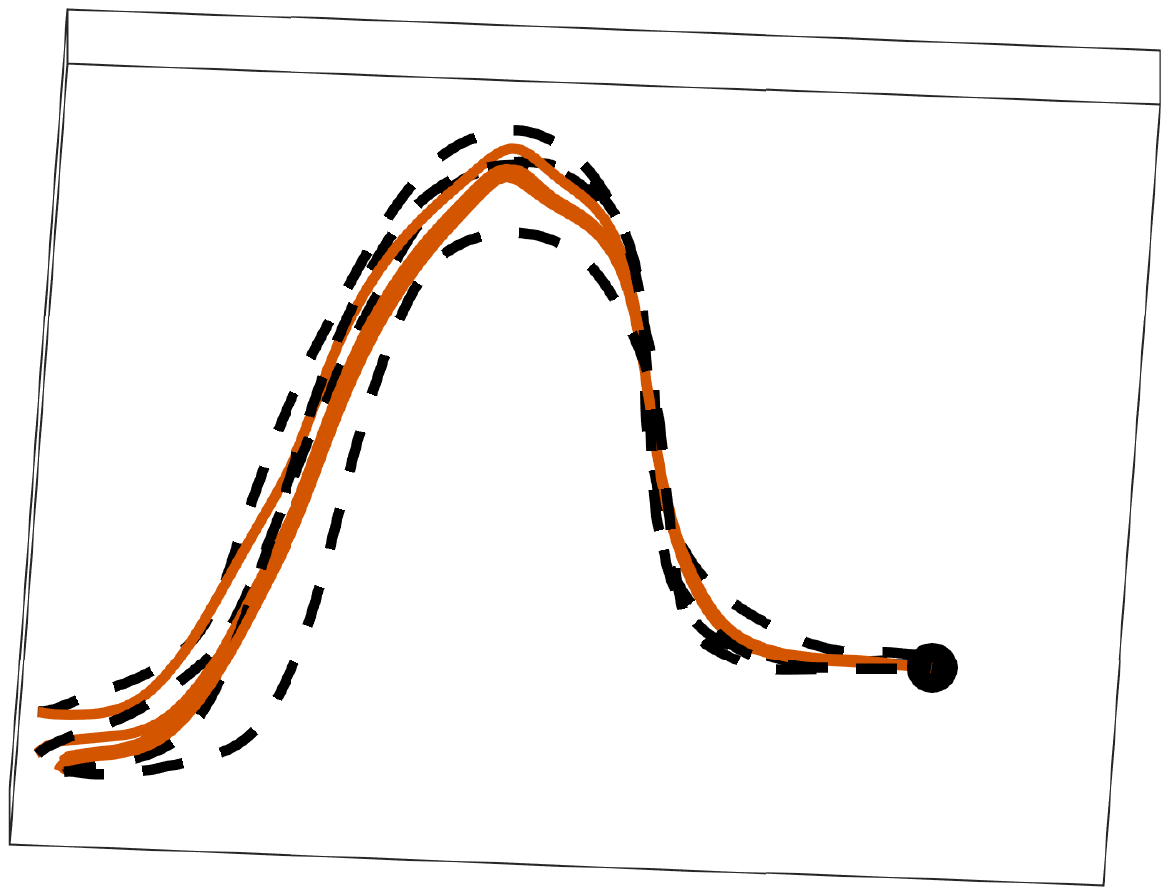}\hspace{-1mm}
    \includegraphics[trim={148 234 132 210},clip,width=0.12\textwidth]{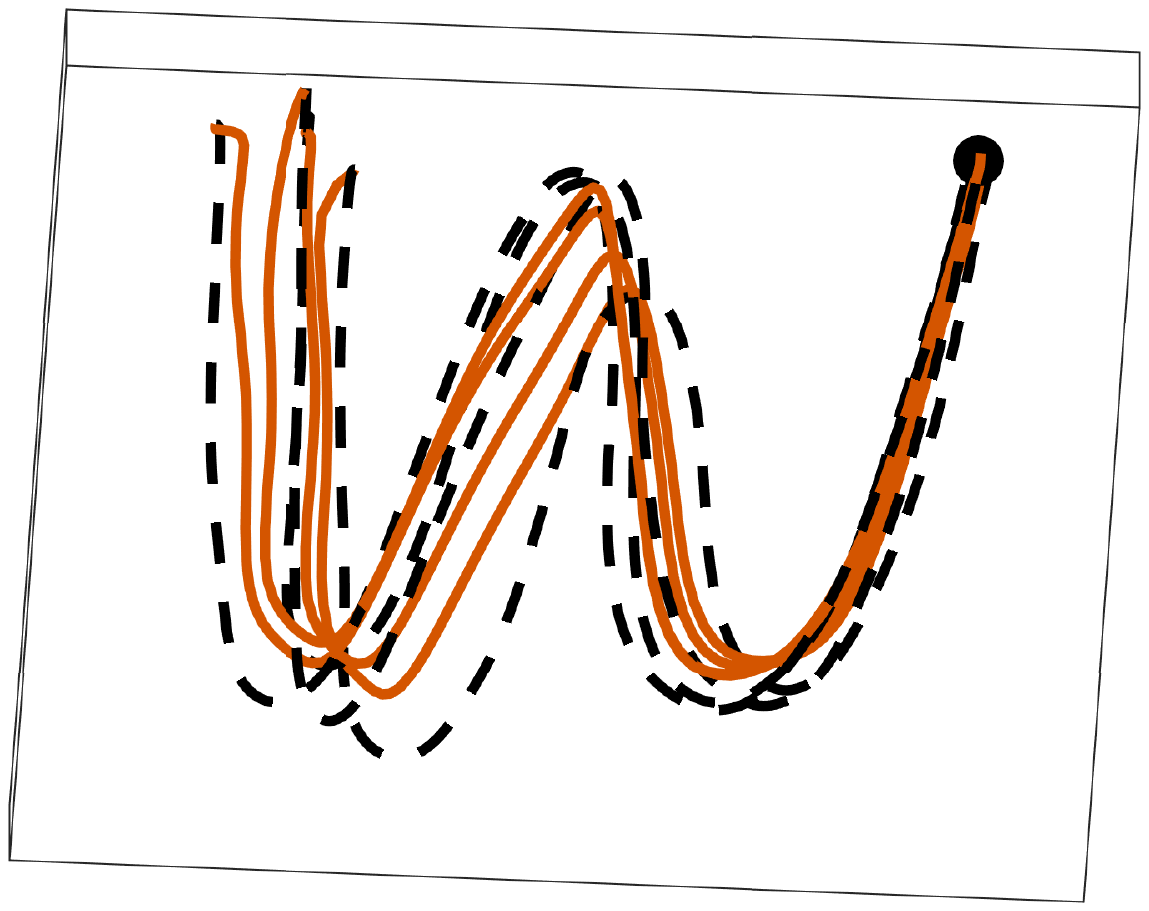}\hspace{-1mm}
    \includegraphics[trim={148 234 132 210},clip,width=0.12\textwidth]{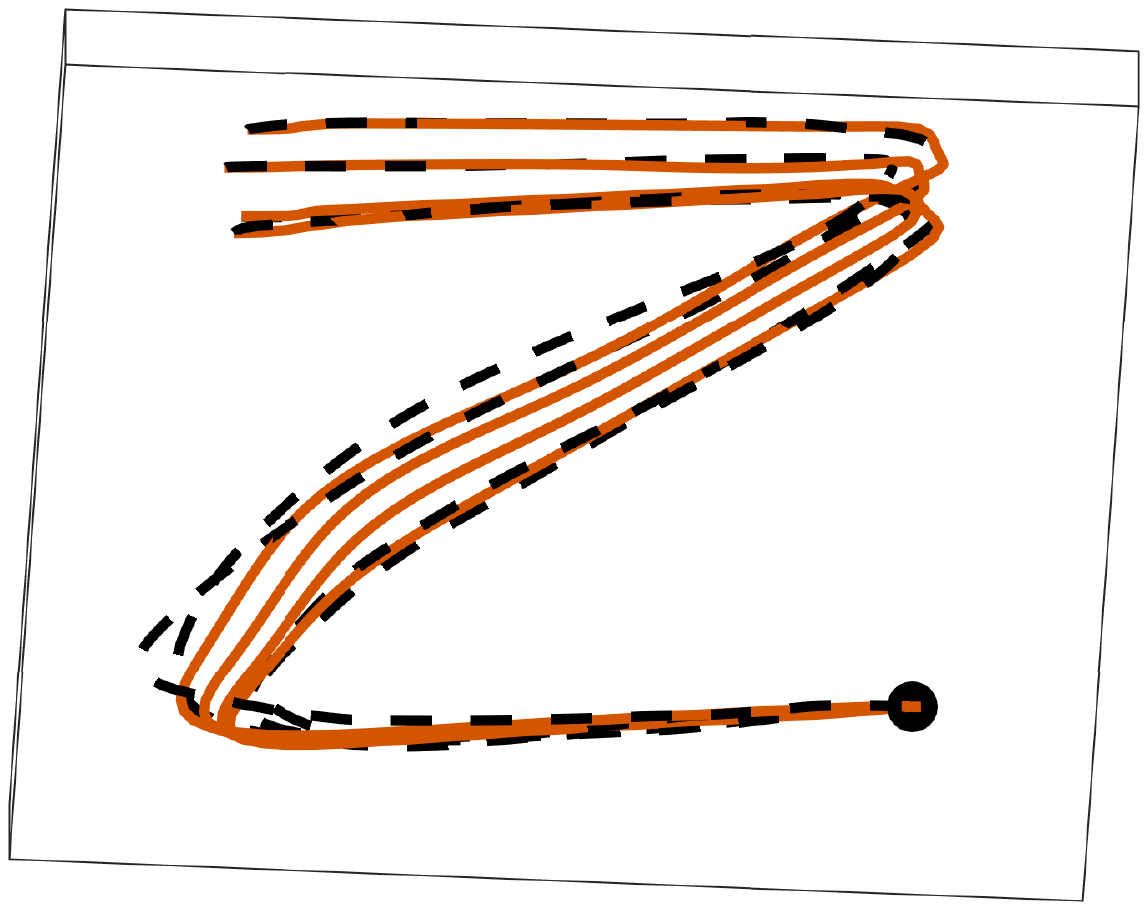}\hspace{-1mm}
    \includegraphics[trim={148 234 132 210},clip,width=0.12\textwidth]{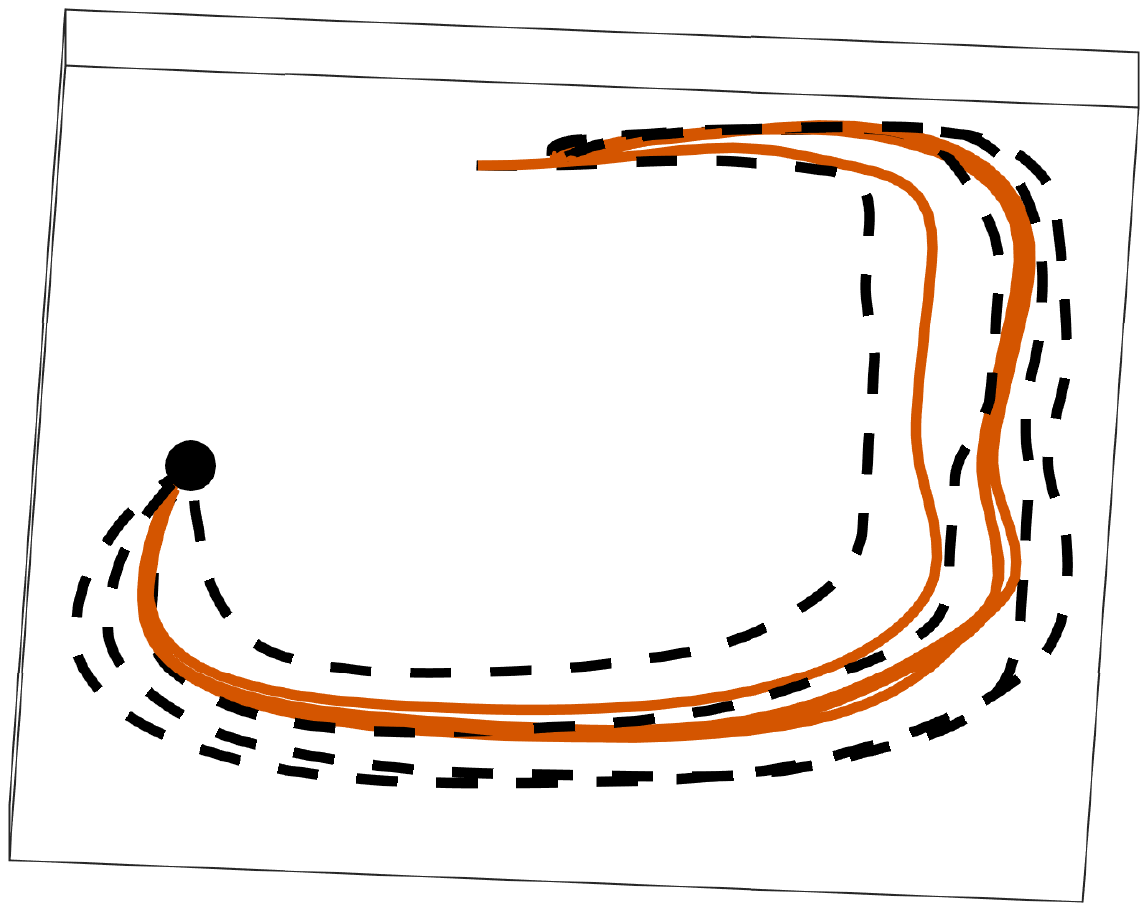}\hspace{-1mm}
    \includegraphics[trim={148 234 132 210},clip,width=0.12\textwidth]{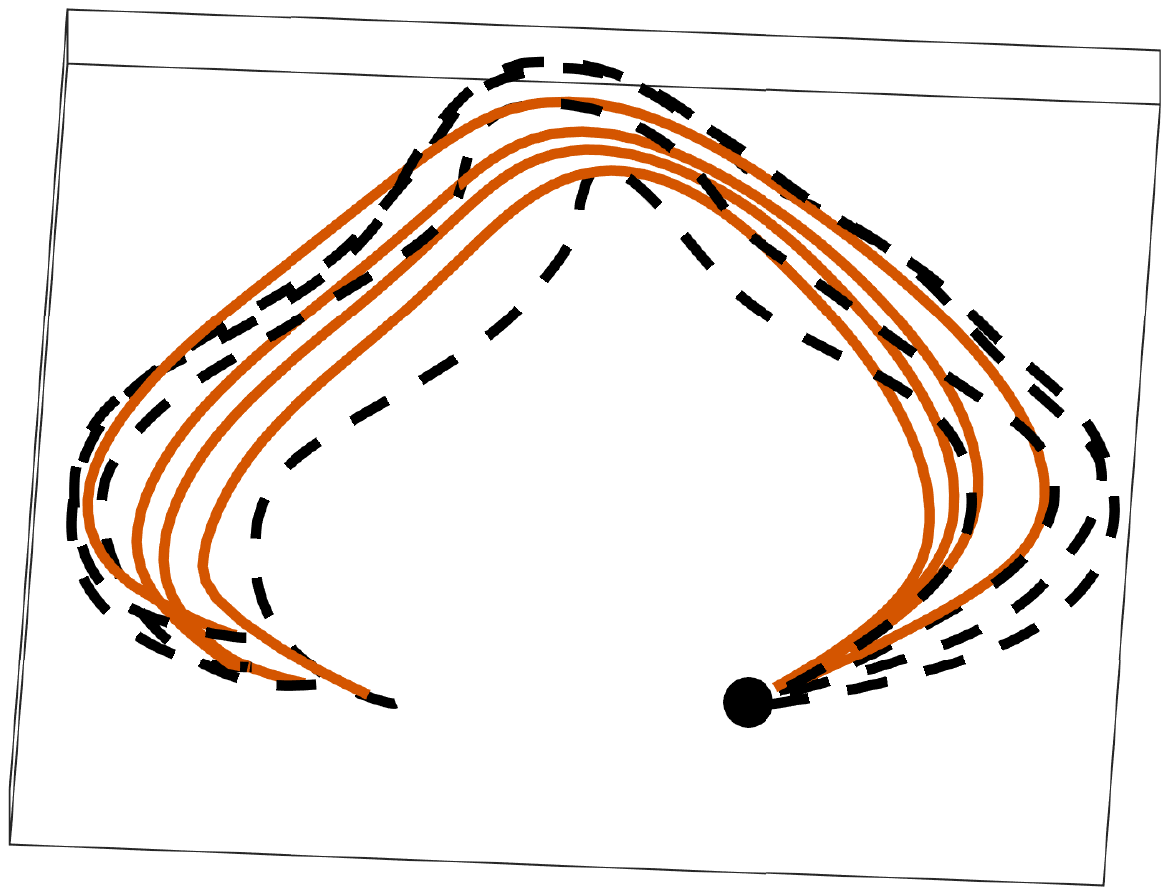}\hspace{-1mm}
    \includegraphics[trim={148 234 132 210},clip,width=0.12\textwidth]{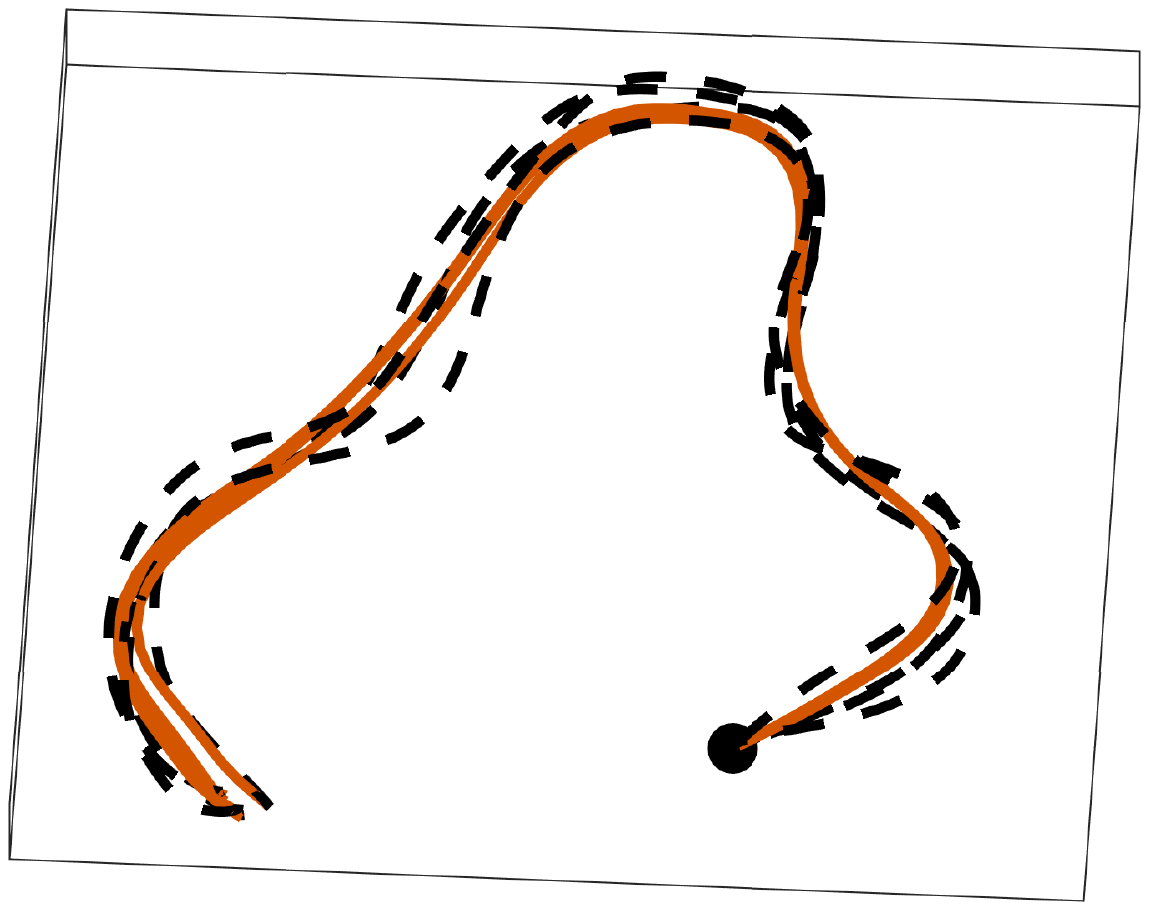}
    \includegraphics[trim={148 234 132 210},clip,width=0.12\textwidth]{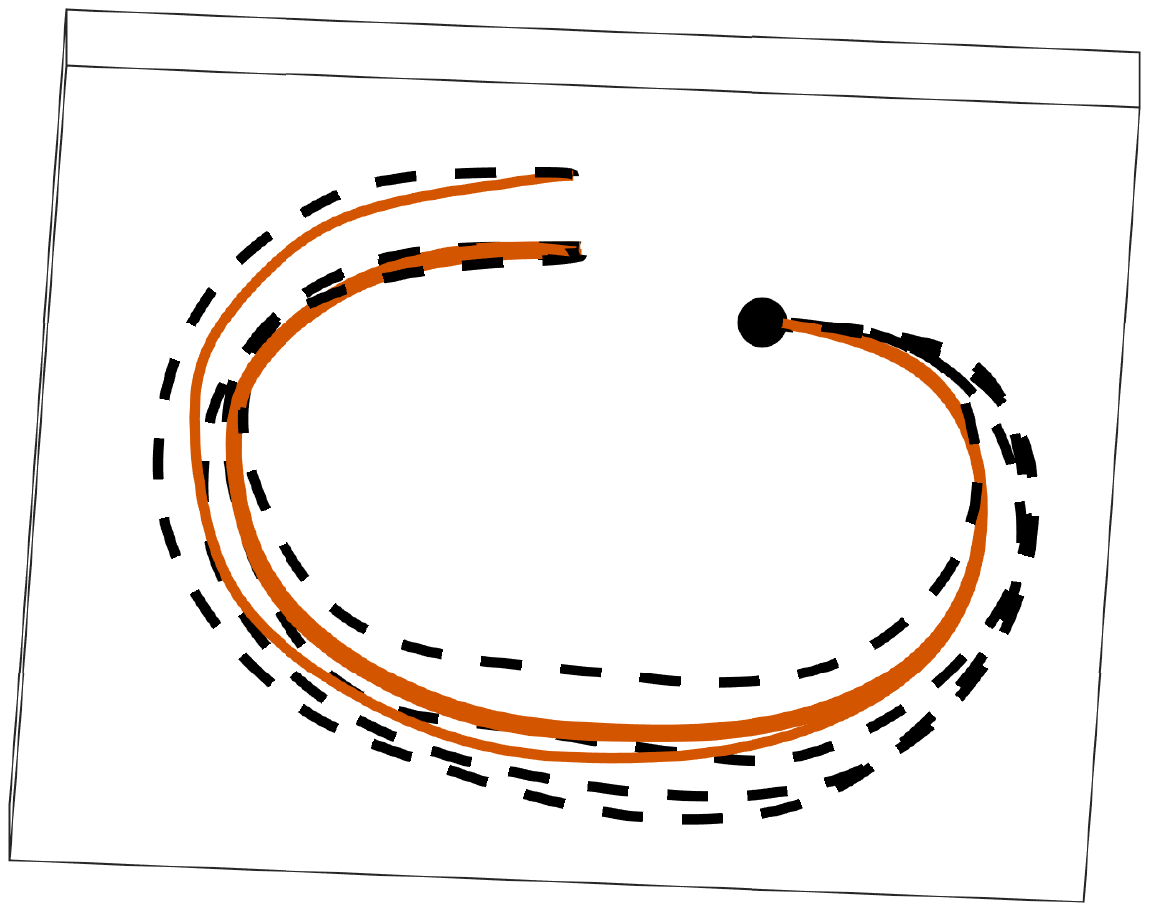}\hspace{-1mm}
    \includegraphics[trim={148 234 132 210},clip,width=0.12\textwidth]{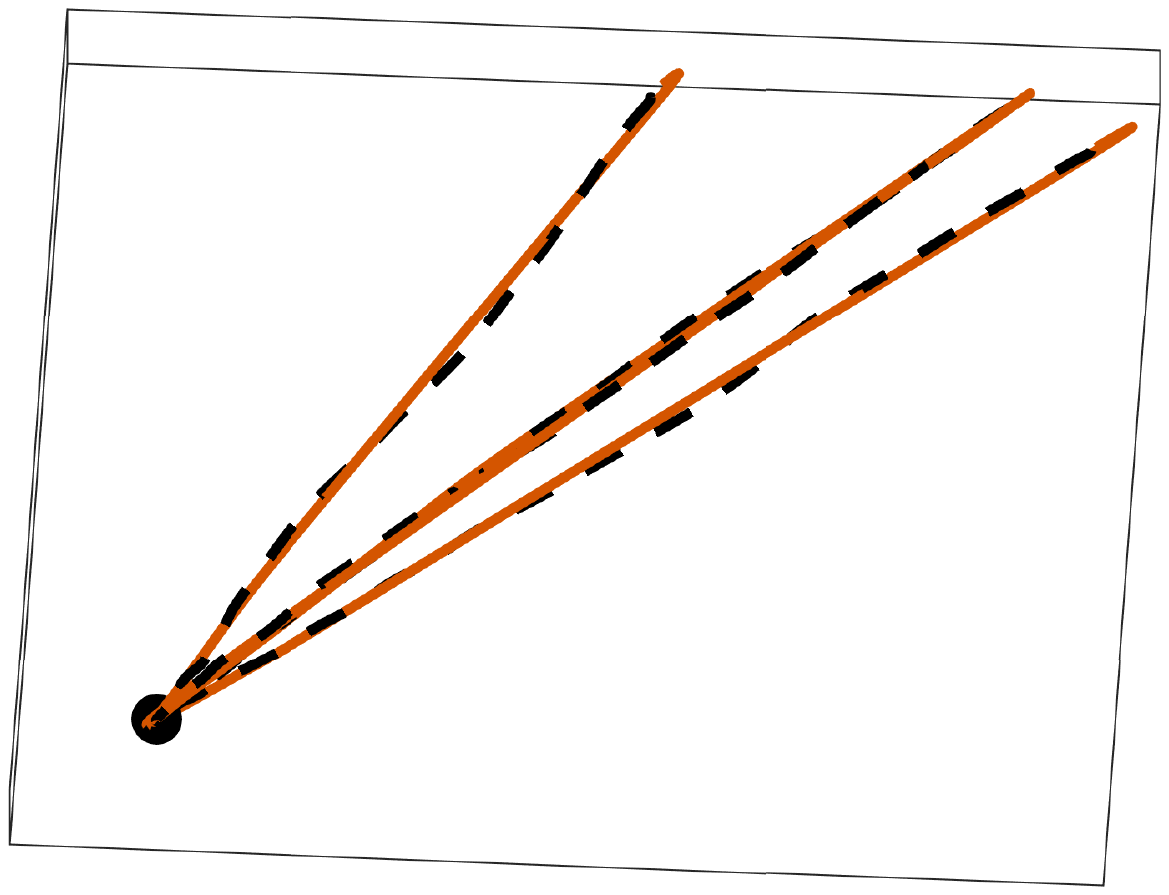}\hspace{-1mm}
    \includegraphics[trim={148 234 132 210},clip,width=0.12\textwidth]{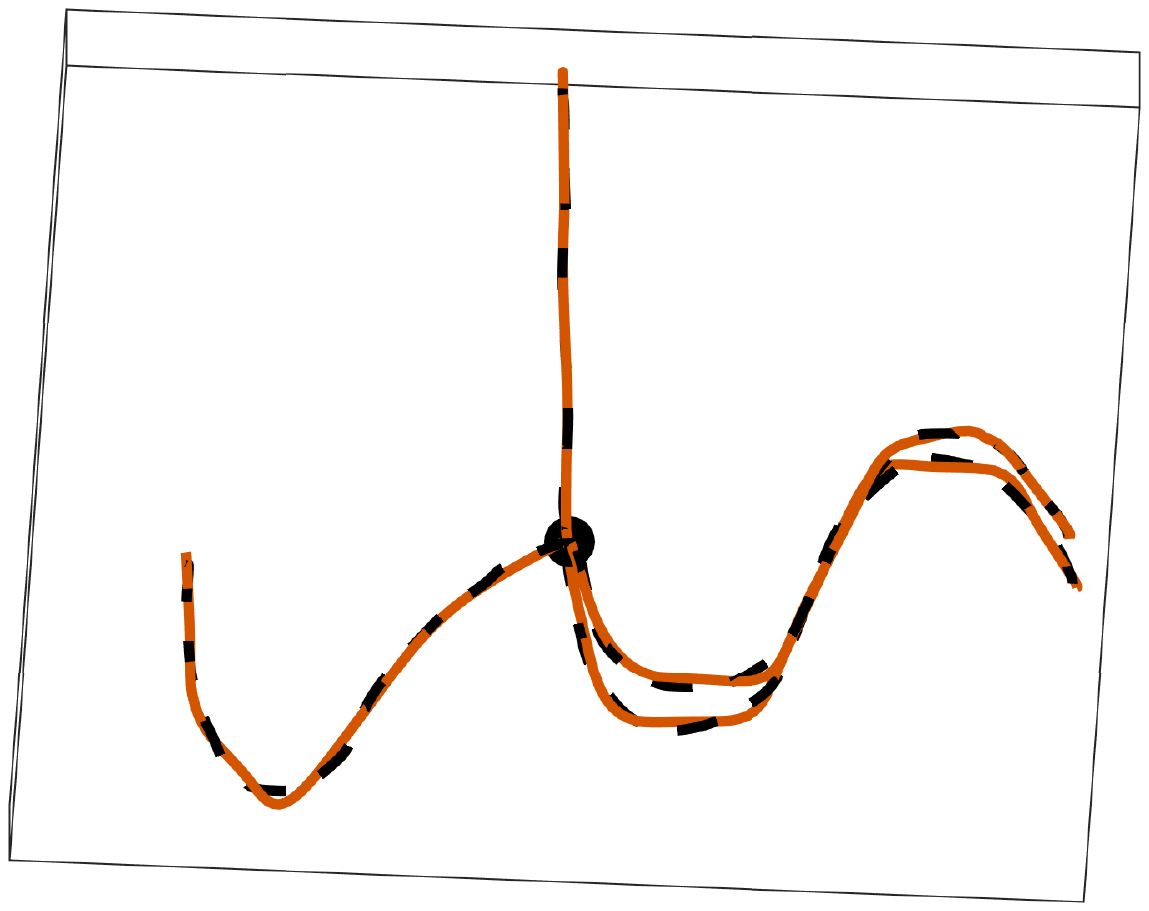}\hspace{-1mm}
    \includegraphics[trim={148 234 132 210},clip,width=0.12\textwidth]{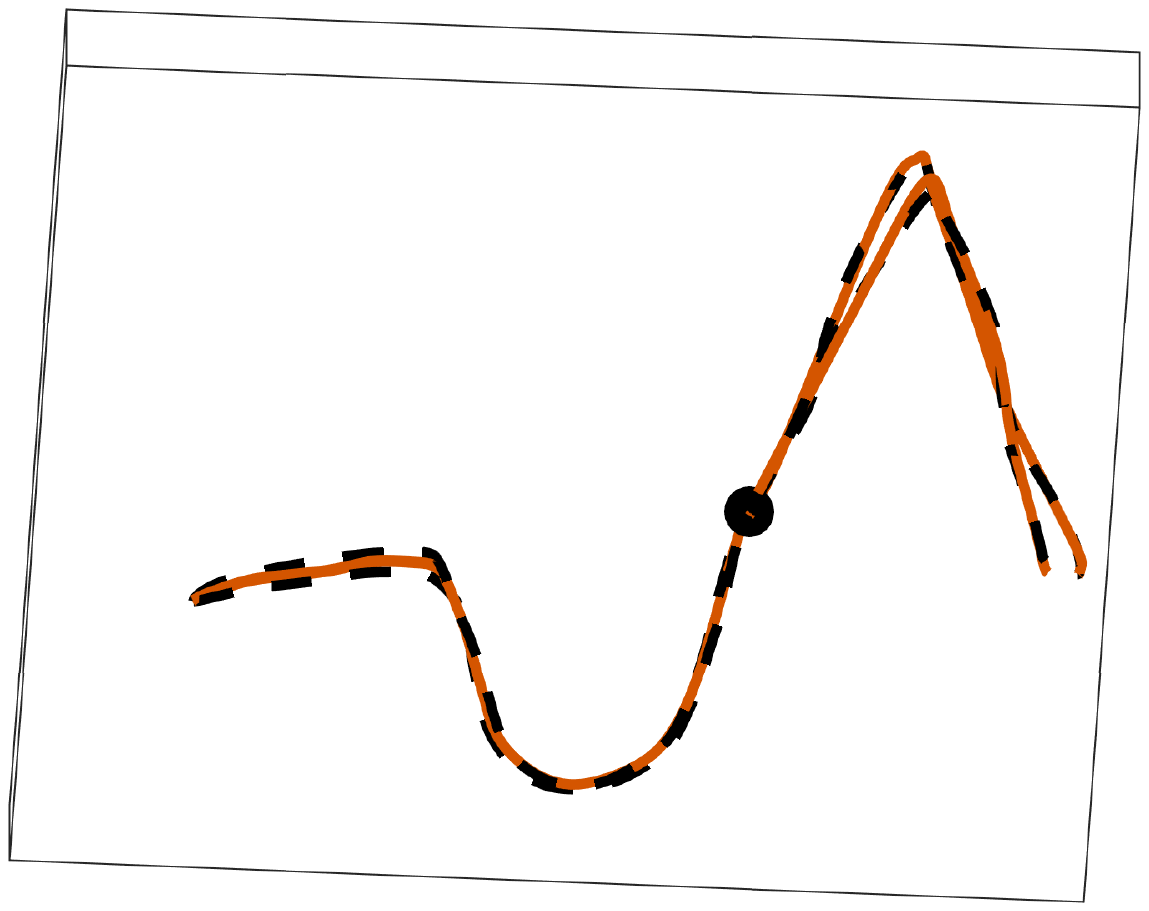}\hspace{-1mm}
    \includegraphics[trim={148 234 132 210},clip,width=0.12\textwidth]{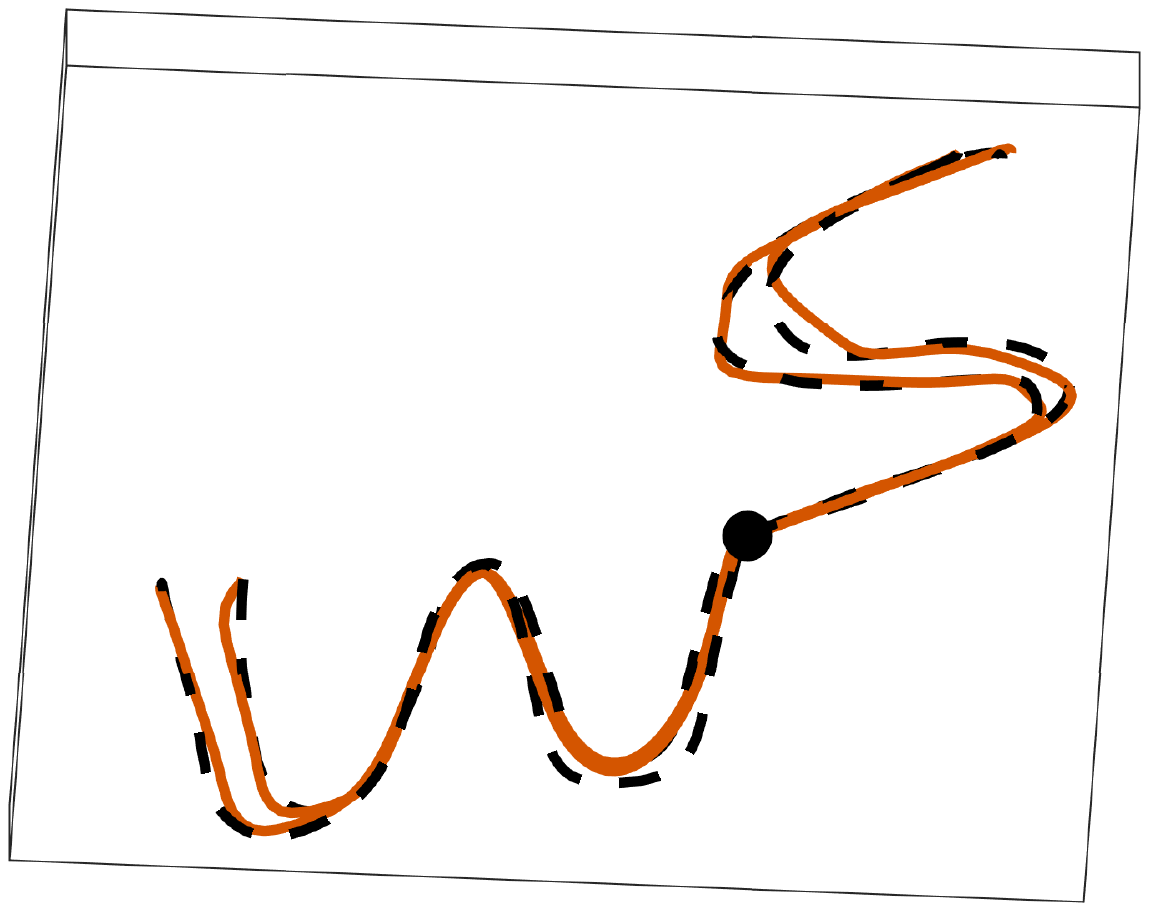}\hspace{-1mm}
    \includegraphics[trim={148 234 132 210},clip,width=0.12\textwidth]{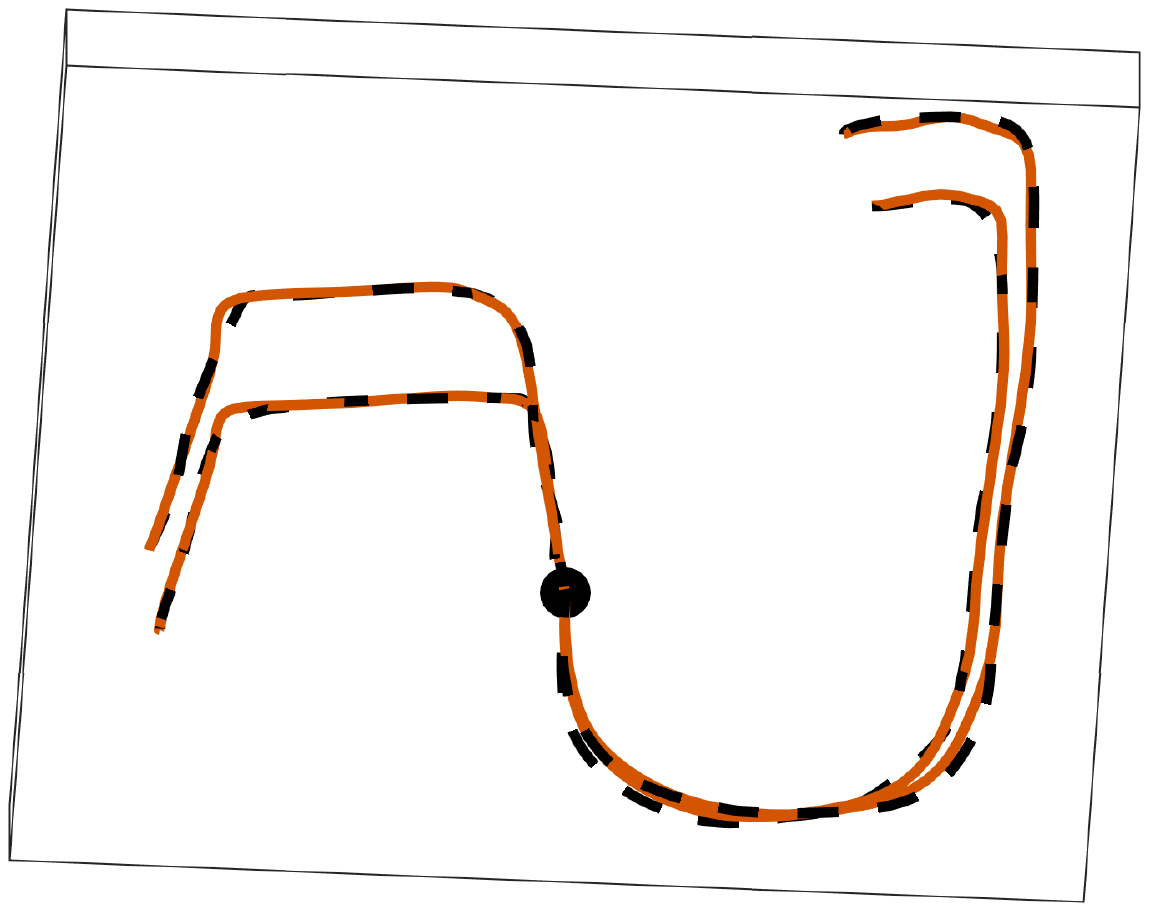}
    \caption{Qualitative results obtained on the Riemannian LASA dataset. Reproduced trajectories (brown solid lines) are obtained by applying the diffeomorphism learned with \ac{ours} on the \ac{ts} demonstrations (black dashed lines). {It is worth noticing that \ac{ts} data are in $3$D, but we choose a view angle that makes the plot similar to the original $2$D data.}}
     \label{fig:dataset_all}
\end{figure*}
\begin{figure}[t]
	\centering
	\def\svgwidth{\columnwidth}{\fontsize{8}{8}
		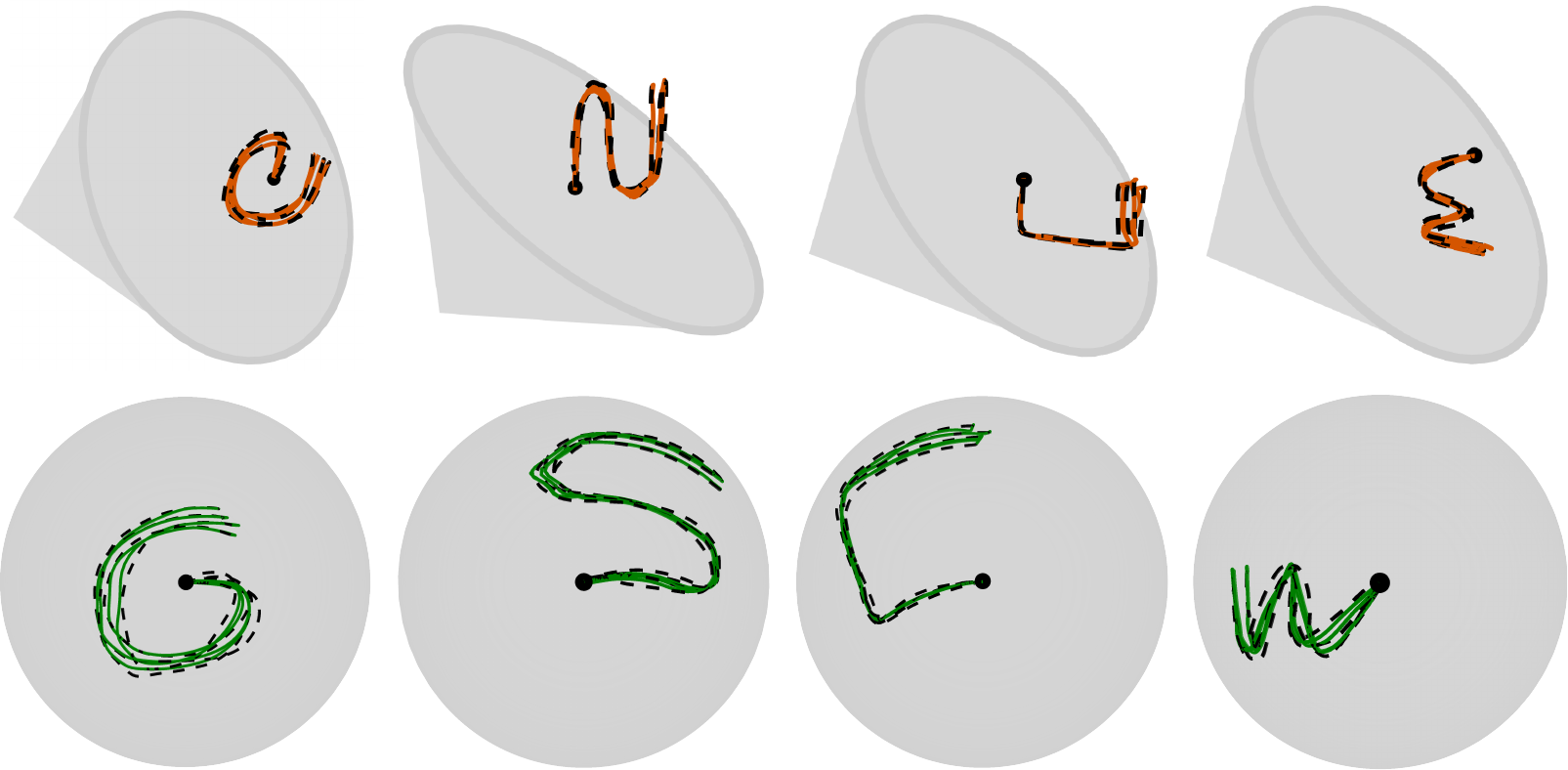}
	\caption{Four motion classes of the Riemannian LASA dataset. (Top) Demonstrations (black dashed lines) and trajectories generated by \ac{ours} (brown solid lines) evolving on the \ac{spd} cone. (Bottom) Demonstrations (black dashed lines) and trajectories generated by \ac{ours} (green solid lines) evolving on the \ac{uq} sphere.}
	\label{fig:fig_letters_manifold}
\end{figure}
In this section, we validate the proposed approach on a public benchmark---properly modified to represent trajectories evolving on \ac{uq} and \ac{spd} manifolds---and compare the results against \ac{fdm}~\cite{perrin2016fast}{, \ac{rgmm}~\cite{calinon20gaussians}, and \ac{eflow}~\cite{rana2020euclidean}. It is worth mentioning that \ac{fdm} has not been designed to work on Riemannian manifolds. However, the procedure described in \secref{sec:proposed} allows to exploit different approaches to fit a diffeomorphism between \acp{ts}.}

\subsection{The Riemannian LASA dataset}
 
In the \ac{lfd} literature, there is no available dataset to test \ac{ds}-based algorithm on Riemannian manifolds. Therefore, we have created a new one by modifying the popular benchmark---the LASA handwriting data-set~\cite{khansari2011learning}---to generate manifold (namely \ac{uq} and \ac{spd}) motions. The LASA handwriting
contains $30$ classes of $2$D Euclidean motions starting from different initial points and converging to the same goal $[0,0]\trsp$. Each motion is demonstrated $7$ times. A demonstration has exactly $1000$ samples and includes position, velocity, and acceleration profiles.

The key idea to generate Riemannian data form Euclidean points is to consider each demonstration as an observation of a motion in the \ac{ts} of a given Riemannian manifold. This allows us to use the exponential map to project the motion onto the manifold. As discussed in~\secref{sec:background}, the \ac{ts} is computed \wrt a point on the manifold. For converging motions, as the one generated by \ac{ours}, the \ac{ts} can be conveniently placed at the goal. We defined the goal as $\boldfrak{q}_g= 1 + [0,0,0]\trsp$ for \acp{uq} and as $\bm{G} = \text{diag}([100, 100])$ for \ac{spd} matrices, but other choices are possible. It is worth noticing that the described procedure is rather general and can be applied to Euclidean benchmarks different from the LASA dataset.

Data in the original LASA dataset are $2$D ($xy$-plane), but the \ac{ts} of \acp{uq} and $2\times 2$ \ac{spd} matrices are\footnote{The \ac{ts} of a \ac{spd} matrix is a symmetric matrix which can be vectorized. For $2\times 2$ \ac{spd} matrices the \ac{ts} has $3$ independent components.} $3$D. To add the third dimension, we consider the $7$ demonstrations of each motion class as a matrix $\bm{C}_i$ for $i=1,\ldots,30$ with $14$ rows and $1000$ columns. Out of each $\bm{C}_i$, we extract the $4$ matrices  $\bm{C}_{1,i} = \bm{C}_i[{0:2}, :]$, $\bm{C}_{2,i} = \bm{C}_i[{4:6}, :]$, $\bm{C}_{3,i} = \bm{C}_i[{8:10}, :]$, and $\bm{C}_{4,i} = \bm{C}_i[[12,13,0], :]$. As a result, we obtain $4$ demonstrations for each motion class, with the third component sampled for the demonstration along the $x$-axis. In this way, the third component is similar across different demonstrations of the same motion---as in a typical \ac{lfd} setting---and contains sufficient complexity. Finally, the $3$D trajectories contained in the matrices $\bm{C}_{1,i}$ to  $\bm{C}_{4,i}$ are then projected to the corresponding manifold using the exponential map.   For \ac{uq}, we scale the data between $[-1,1]$ before projecting them on the unit sphere. 

\subsection{Evaluation procedure}
{We use the Riemannian LASA dataset described in the previous section to compare \ac{ours} against two baselines and three state-of-the-art approaches. The baselines are built considering as base \ac{ds} the Euclidean dynamics in \eqref{eq:linear_ds} and a \ac{gmm}-based diffeomorphism in Euclidean space. The baseline for \acp{uq},  named DS+Normalize, performs an extra normalization step to fulfill manifold constraints. The baseline for \ac{spd} matrices,  named DS+Cholesky, exploits Cholesky decomposition and Mandel's notation to vectorize the matrix for training and re-build an \ac{spd} matrix from the generated vector. The other approaches included in the comparison are \ac{fdm}, \ac{rgmm}, and \ac{eflow}.}
The Riemannian LASA dataset contains $30$ classes. In this experiment, we consider a subset of $26$ individual motions that neglects the $4$  multi-modal motions. The multi-modal motions, obtained by combining $2$ or $3$ individual motions, contain different patterns in different areas of the state space. An approach that effectively learns from multiple demonstrations, like \ac{ours}, \ac{rgmm}, and \ac{eflow}, can encode such a variability. {This is qualitatively shown in the last four plots of \figref{fig:dataset_all}}. On the contrary, approaches that learn from a single demonstrations, like \ac{fdm} and \ac{dmp}, are expected to perform poorly. In order to have a fair comparison with \ac{fdm}, we neglect the $4$  multi-modal motions in our comparison. 
For each of the $26$ classes, we considered all the $4$ available demonstrations in \ac{ours} and only one (average) demonstration for \ac{fdm}. We down-sampled the trajectories to contain exactly $100$ points to significantly speed-up the testing procedures {and test the data-efficiency of each approach.}

{The two baselines, as well as \ac{rgmm} and \ac{ours}, have a single hyper-parameter $k$ that is the number of Gaussian components. For \ac{fdm}, instead, the hyper-parameter $k$ is the number of kernel functions used to fit the diffeomorphism. On the contrary, \ac{eflow} has a few open hyper-parameters including the network structure (number of layers, neurons per layer), the learning rate, and the number of epochs. Performing an exhaustive hyper-parameter search requires a GPU cluster and it is beyond the scope of this work. Hence, we keep fixed the structure of the network and the learning rate (provided by the author's implementation) and vary the number of epochs $k$.}  %
Table~\ref{tab:comparison_lasa_new} reports the value of $k$ used in this experiment.

The performance of each approach is measured considering the accuracy in reproducing the demonstrations contained in the data-set and the \ac{tt}. The accuracy is measured as the \ac{rmse} between each demonstration and the corresponding generated motion (first $100$ steps), \ie a trajectory generated starting from the same point on the manifold. Depending on the manifold (\ac{uq} or \ac{spd}), distances between points are computed considering the proper Riemannian distance (see \secref{sec:background}). The \ac{tt} is divided by the number of classes to represent the time needed to encode a single motion. 

\subsection{Results}
\label{sec:experiments}

\begin{table}[t]
    \centering
    \caption{Results for different learning approaches applied to the Riemannian LASA dataset. We report mean and standard deviation for \ac{rmse} and \ac{tt}.}
    \vspace{0.1cm}
    \label{tab:comparison_lasa_new}
    \resizebox{\columnwidth}{!}{  
 	\begin{tabular}{ccccc}
 	\toprule
 	 $\mathcal{M}$ & \sc{Approach}  & $k$ & RMSE &  TT [s] \\
 	\midrule
 	  & \ac{ours} (ours) &  $10$ & $0.029 \pm 0.019$ & $0.755 \pm 0.241$ \\
 	 & \ac{ds} + Normalize &  $10$ & $0.126 \pm 0.113$ & $1.621 \pm 0.707$ \\
 	 $\mathcal{S}^3$ & \ac{fdm}~\cite{perrin2016fast} &  $250$ &  $0.043 \pm 0.030$  &  $0.201 \pm 0.053$ \\
 	  & \ac{rgmm}~\cite{calinon20gaussians} &  $10$ & $0.036 \pm 0.016$ & $0.387 \pm 0.047$ \\
 	  & \ac{eflow}~\cite{rana2020euclidean} &  $1000$ & $0.141 \pm 0.128$ & $112.252 \pm 0.570$ \\
 	 \midrule
 	  &\ac{ours} (ours) &   $10$ & $0.029 \pm 0.019$ & $1.514 \pm 0.726$ \\
 	  &	 \ac{ds} + Cholesky &   $10$ & $0.121 \pm 0.043$ & $1.899 \pm 0.490$  \\
 	 $\mathcal{S}^2_{++}$ &\ac{fdm}~\cite{perrin2016fast} &  $250$ &  $0.042 \pm 0.029$  &  $0.221 \pm 0.023$ \\
 	 &\ac{rgmm}~\cite{calinon20gaussians} &  $10$ & $ 0.037 \pm 0.017 $ & $14.514 \pm 1.560$ \\
 	& \ac{eflow}~\cite{rana2020euclidean} &  $1000$ & $ 0.140 \pm 0.126 $ & $127.546 \pm 8.434$ \\
 	\midrule
	\multicolumn{5}{l}{\scriptsize Authors would like to thank N. Perrin and P. Schlehuber-Caissier for  providing the source code}\\
	\multicolumn{5}{l}{\scriptsize of the  \ac{fdm} approach in~\cite{perrin2016fast}.}\\
	\bottomrule
    \end{tabular}  
    }
\end{table}

The accuracy of \ac{ours} in learning the motions in the Riemannian LASA dataset is qualitatively shown in~\figref{fig:dataset_all}. We show the demonstrated (black dashed lines) and reproduced (brown solid lines) motions in the \ac{ts}. Recall that, in the  Riemannian LASA dataset, \ac{uq} and \ac{spd} motions share the same \ac{ts} up to a scaling factor. Therefore, we expect similar results in both the manifolds. This is also shown in~\figref{fig:fig_letters_manifold} where the learned trajectories for $4$ motion classes in the dataset are projected on the \ac{spd} (top row) and \ac{uq} (bottom row) manifold. As expected, the generated motion on the manifold follows accurately the demonstration. 
\begin{figure}[h]
    \centering 
    \subfigure[]{\includegraphics[width=0.48\columnwidth]{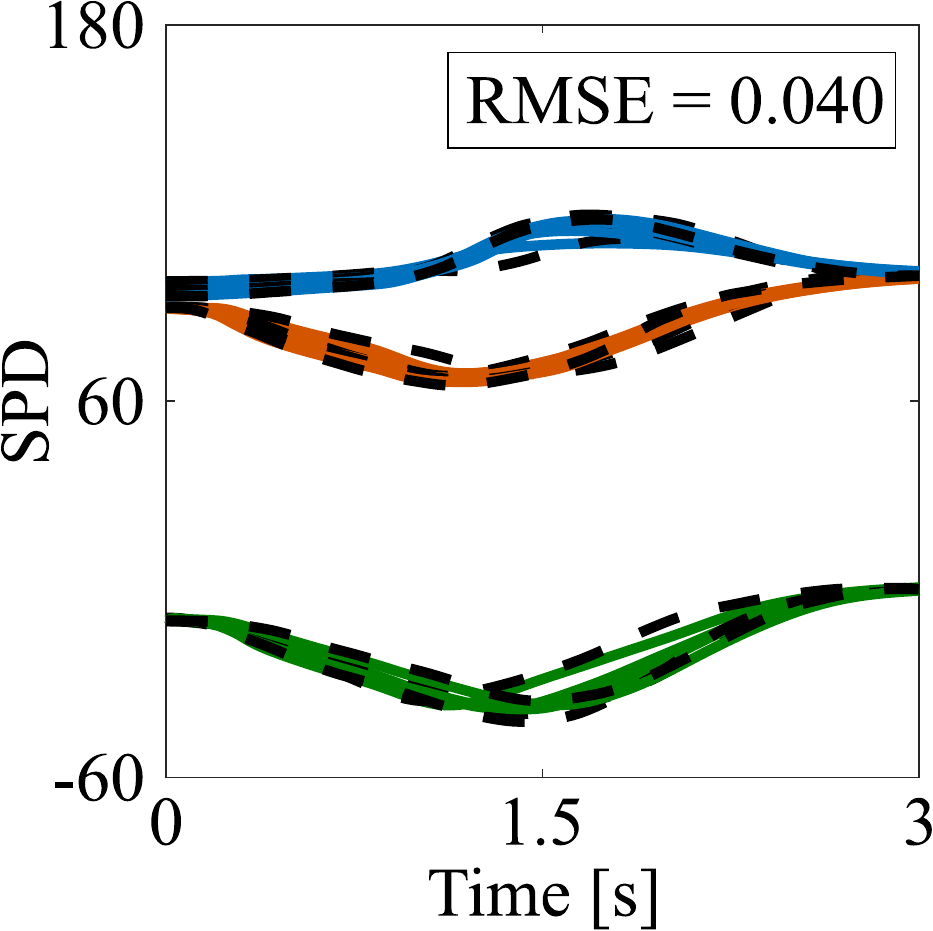}}
    \subfigure[]{\includegraphics[width=0.48\columnwidth]{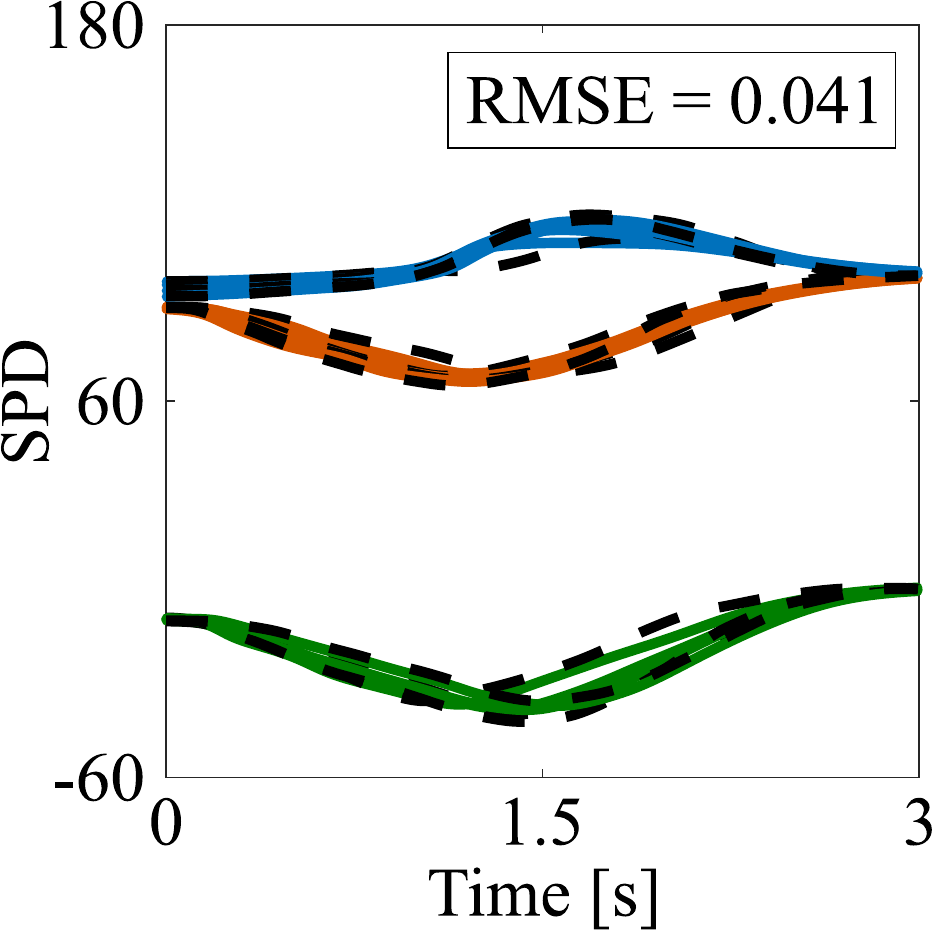}}
    
    \subfigure[]{\includegraphics[width=0.48\columnwidth]{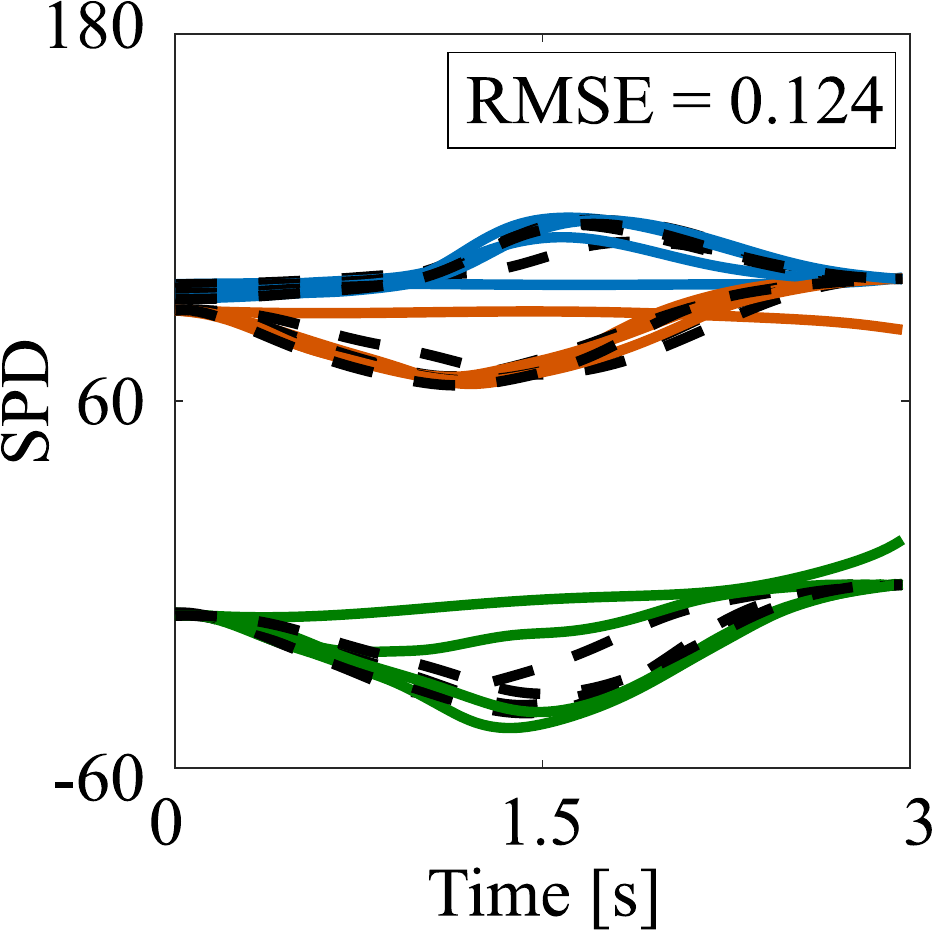}}
    \subfigure[]{\includegraphics[width=0.48\columnwidth]{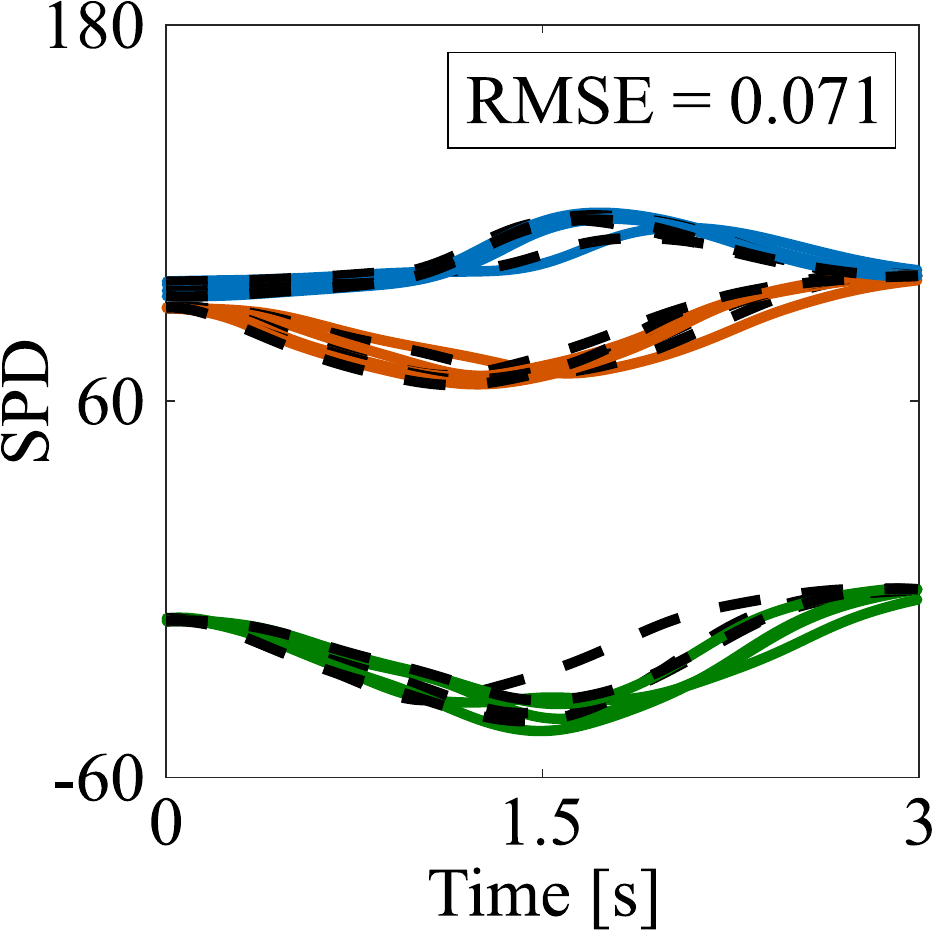}}
    \caption{{Results for the data-efficiency test. \ac{ours} is trained by sub-sampling each demonstration to $100$ points. (a) \ac{ours} reproduces accurately the sub-sampled demonstrations. (b) Adjusting the sampling time \ac{ours} reproduces accurately the original demonstrations ($1000$ points) without re-training. (c) \ac{eflow} is inaccurate when learning from sub-sampled demonstrations. (d) \ac{eflow} achieves the \ac{ours} accuracy when learning from the original demonstrations ($1000$ points).}}
     \label{fig:data_efficiency}
\end{figure}

The quantitative results of this evaluation are shown in~\tabref{tab:comparison_lasa_new}. \ac{ours} accurately represents manifold data and it outperforms \ac{fdm}. Indeed, \ac{ours} is $47\,$\% more accurate on \ac{uq} data and $30\,$\% more accurate on \ac{spd} data than \ac{fdm}. This is an expected result as \ac{fdm} learns from a single demonstration obtained in this case by averaging the $4$ demosntrations in each class. We expect a similar result by applying \ac{dmp}-based approaches~\cite{Ude2014Orientation, abudakka2020Geometry}.        

\begin{figure*}[t]
	\centering
	\def\svgwidth{\textwidth}{\fontsize{8}{8}
		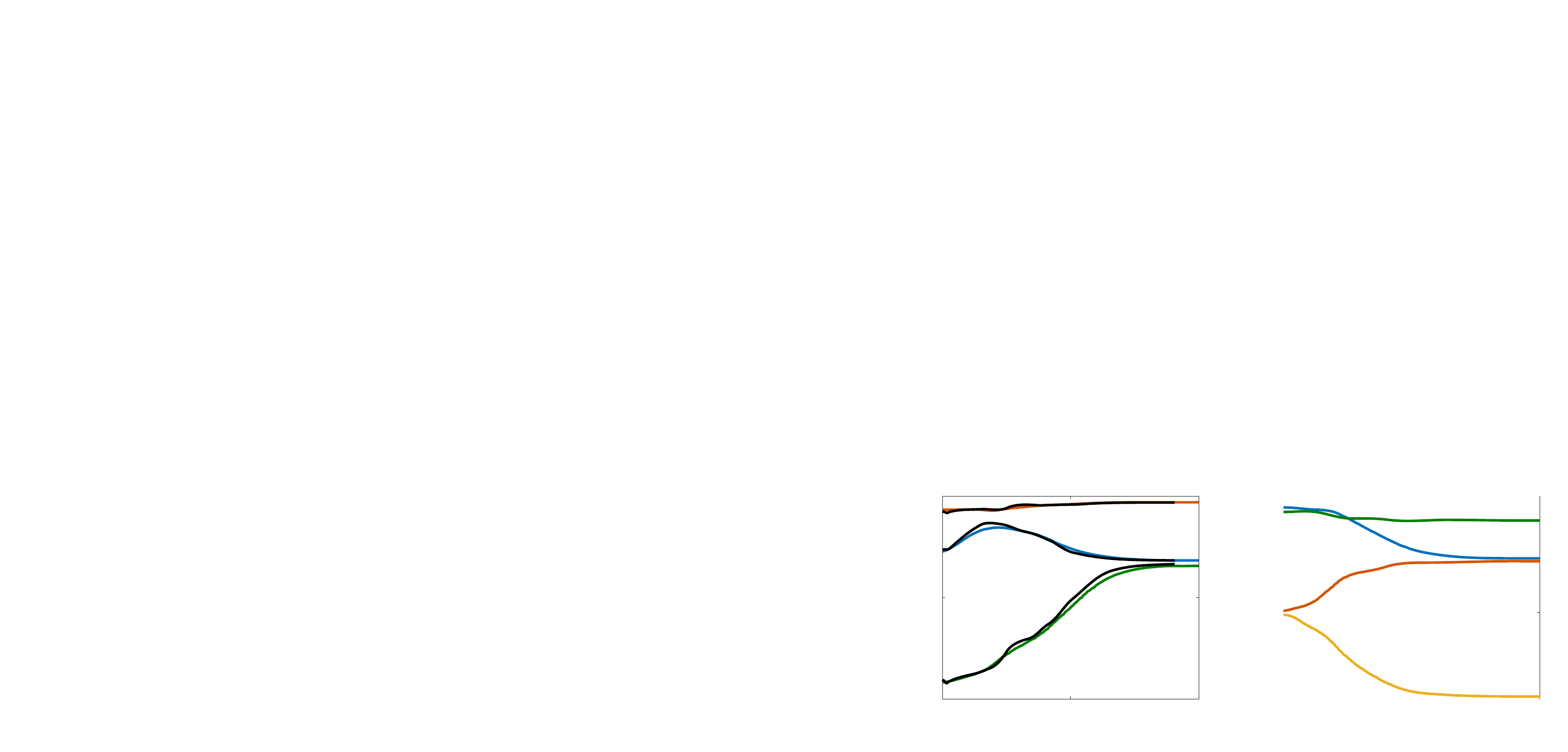}
	\caption{Results for the bottle stacking experiment. (Top) The robot reproduces the trajectory generated by \ac{ours} from known (demonstrated) starting and goal poses. (Middle) The robot stacks the bottle at a different position ($\bm{p}_{\text{g}} = [0.385, 0.143, 0.172]\trsp\,$m). (Bottom) The rack is rotated and the robot stacks the bottle at a different pose ($\bm{p}_{\text{g}} = [0.469, 0.200, 0.165]\trsp\,$m and $\boldfrak{q}_{\text{g}} = -0.58 + [0.37, 0.63, 0.35]\trsp$).}
	\label{fig:bottle_stack_results}
\end{figure*}

Regarding the training time, \ac{ours} learns a \ac{uq} motion ($4$D) on average in $0.755\,$s, while \ac{fdm} takes only $0.201\,$s. For \ac{spd} data ($2\times 2$ matrices), \ac{ours} needs  on average in $1.419\,$s the learn a motion, while \ac{fdm} takes only $0.325\,$s. This is also expected as \ac{fdm} uses only $1$ demonstration for training, resulting in $4$ times less data than \ac{ours}. To summarize, \ac{fdm} is learns very quickly and it is recommended in application where $<0.5\,$s training time is needed. However, most applications have not such a strict training time requirement but need accurate motions. In this case, \ac{ours} offers high accuracy with a reasonable training time. 
%
%
{On the contrary, \ac{eflow} needs about $900\,$s to fit a single motion with $1000$ points. This may be acceptable is some applications, but it is a clear limitation in smart manufacturing where both precision and usability play a key role. }

\section{Robot Experiments}
\label{sec:robot_experiment}
This section presents experiments\footnote{{A video of the experiments is available as supplementary material.}} with a $7$ \ac{dof} manipulator (\panda). The robot behavior is governed by the Cartesian impedance controller
\begin{equation}
\begin{split}
	\bm{F}_{p}&=\bm{K}_{p} \left(\bm{p}_d-\bm{p}\right) + \bm{D}_{p} \left(\dot{\bm{p}}_d-\dot{\bm{p}}\right), \\
	\bm{F}_{o}&=\bm{K}_{o}\, \LogQ_{\q_d}(\q) + \bm{D}_{o} \left(\bm{\omega}_d-\bm{\omega}\right)
	\end{split}
	\label{eq:robot_control}
\end{equation}
where the subscript $p$ stands for position, $o$ for orientation, and $d$ for desired. $\bm{p}_d$ and $\q_d$ are desired position and orientation (expressed as \ac{uq}) of the robot end-effector. $\bm{p}$ and $\q$ indicate the measured position and orientation of the end-effector. Desired and measured linear (angular) velocities are indicated as $\dot{\bm{p}}_d$ and $\dot{\bm{p}}$ (${\bm{\omega}}_d$ and ${\bm{\omega}}$). $\bm{K}_{p\slash o}$ and $\bm{D}_{p\slash o}$ are the robot stiffness and damping matrices expressed in the robot base frame. Given the stiffness matrices $\bm{K}_{p\slash o}$---computed as detailed later in this section---the damping matrices $\bm{D}_{p\slash o}$ are obtaiend by the double diagonalization design~\cite{albu2003cartesian}. Cartesian forces defined in~\eqref{eq:robot_control} are mapped into joint torques using the transpose of the manipulator Jacobian ($\bm{J}\trsp$), 
\begin{equation}
\bm{\tau}_d = \bm{J}\trsp	
\begin{bmatrix}
\bm{F}_{p} \\ \bm{F}_{o} 
\end{bmatrix},
\label{eq:jntImpd}
\end{equation}

\subsection{Bottle stacking}
\label{subsec:stack}
The task of stacking bottles in a rack requires continuous adjustment of position and orientation (see \figref{fig:bottle_stack_results}).	Apart from reaching a desired (stacking) pose, the robot should follow accurately the demonstrated trajectory to prevent hitting the rack. We provide $3$ kinesthetic demonstrations starting at different locations and converging to the stacking pose defined by $\bm{p}_{\text{g}} = [0.474, 0.185, 0.155]\trsp\,$m and $\boldfrak{q}_{\text{g}} = -0.520 + [0.542, 0.442, 0.491]\trsp$.  The demonstrations are of the form $\{\{ \bm{p}^{demo}_{l,d}, \boldfrak{q}^{demo}_{l,d} \}_{l=1}^{L}\}_{d=1}^{D}$ where $L$ is the total length of the demonstrations and $D=3$ is the number of demonstrations. Demonstrated positions and orientations are encoded into two stable \acp{ds} using \ac{ours}. We, empirically, use $10$ Gaussian components for each system. It is worth mentioning that, in order to fit position trajectories, we simply replace logarithmic and exponential maps in~\algoref{alg:ds_riem} with an identity map. The robot is controlled with the Cartesian impedance controller~\eqref{eq:robot_control} where  $\bm{p}_d$ and $\boldfrak{q}_d$ are generated with \ac{ours}. The stiffness matrices are kept to constant high values ($\bm{K}_{p}=\text{diag}([1000, 1000, 1000])\,$N/m and $\bm{K}_{o}=\text{diag}([150, 150, 150])\,$Nm/rad) in this task. 
The results of the learning procedure are shown in the top row of~\figref{fig:bottle_stack_results}.

One of the interesting properties of \ac{ds}-based trajectory generators is the possibility to converge to different goals. Changing the goal is possible also on Riemannian manifolds by following the approach we have presented in~\secref{subsec:proposed}. To demonstrate the robustness of \ac{ours} to goal switching we repeated the stacking task in different conditions. \Figref{fig:bottle_stack_results} (middle) shows the robot successfully stacking the bottle at a different position ($\bm{p}_{\text{g}} = [0.385, 0.143, 0.172]\trsp\,$m). \Figref{fig:bottle_stack_results} (bottom) shows the robot successfully performing the stacking task with a rotated rack, that implies a significant change in the stacking orientation ($\boldfrak{q}_{\text{g}} = -0.58 + [0.37, 0.63, 0.35]\trsp$) and a less pronounced change in the goal position ($\bm{p}_{\text{g}} = [0.469, 0.200, 0.165]\trsp\,$m).

The  results of this experiment show that \ac{ours} accurately encodes full pose trajectories while ensuring convergence to a given target (even if not explicitly demonstrated) and fulfilling the underlying geometric constraints (unit norm) in variable orientation data.

\subsection{Cooperative drilling}
\label{subsec:drill}
In this task, the robot picks a wooden plate from a container and moves it to a demonstrated pose where a human operator will drill it (see \figref{fig:first_img}). Therefore, the robot has to be stiff at the end of the motion to keep the desired pose during the interaction (drilling). During the motion, low impedance gains can be used to make the robot compliant. We provide $3$ kinesthetic demonstrations (see \figref{fig:drill_results}(a)) from different starting poses and converging to the same goal. The demonstrations are of the form $\{\{ \bm{p}^{demo}_{l,d}, \boldfrak{q}^{demo}_{l,d} \}_{l=1}^{L}\}_{d=1}^{D}$ where $L$ is the total length of the demonstrations and $D=3$ is the number of demonstrations.

As in the previous experiment, demonstrated positions and orientations are encoded into two stable \acp{ds} using \ac{ours}. We use $10$ Gaussian components for each system. The desired variable stiffness profiles are generated using the variability in the demonstrations as suggested in several previous work~\cite{calinon2010learning, silverio2019uncertainty, kronander2013learning}. More in details, we first notice that the Cartesian impedance controller~\eqref{eq:robot_control} assumes that position and orientation are decoupled. In other words, it assumes that positional errors only affect the force, while rotational errors only affect the torque. This allows us to learn independent linear and angular stiffness profiles. The idea is to compute a desired stiffness profile from the inverse of the covariance matrix.

\begin{figure}[t]
	\centering
	\def\svgwidth{\columnwidth}{\fontsize{7}{8}
		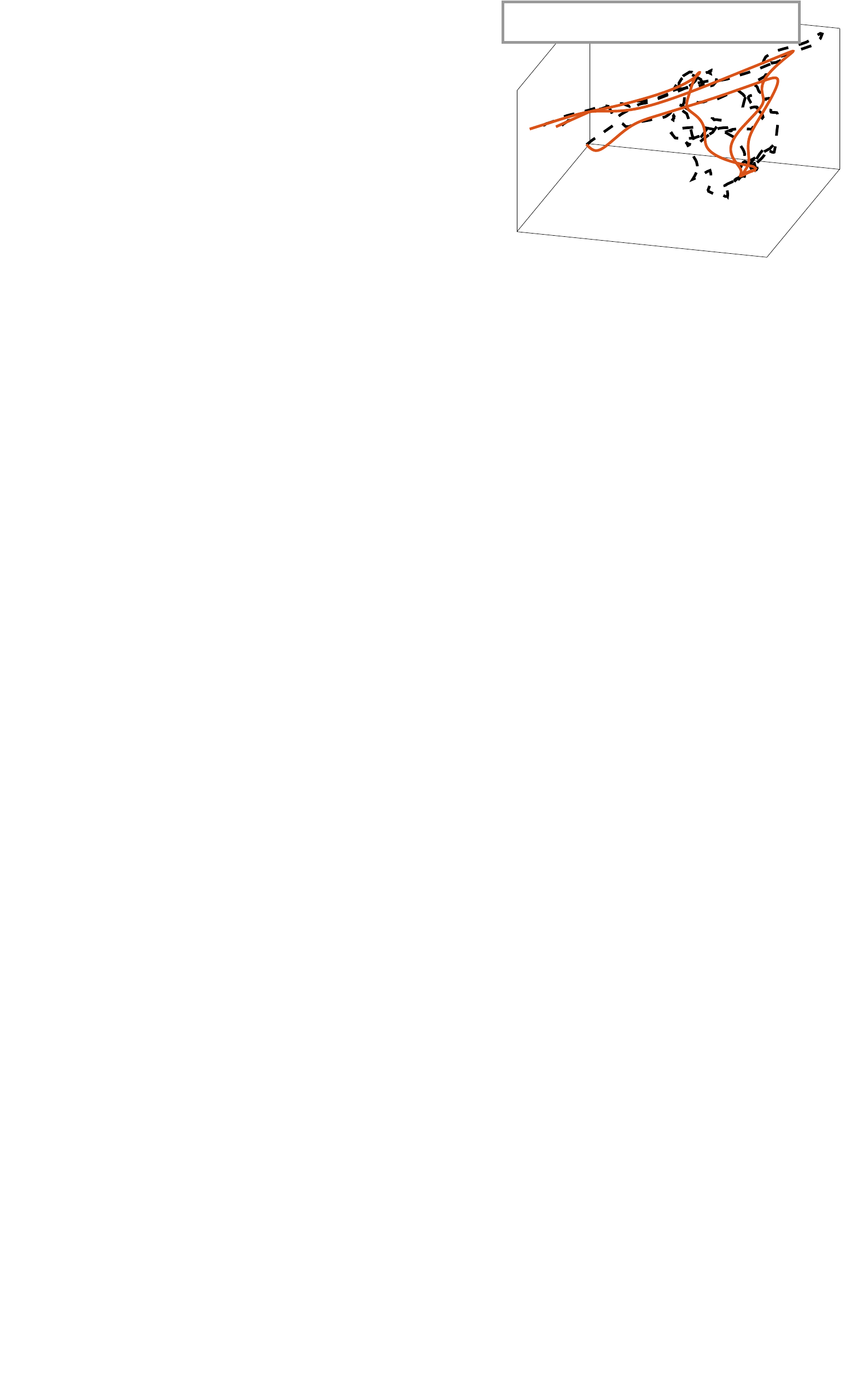}
	\caption{Results for the cooperative drilling experiment.}
	\label{fig:drill_results}
\end{figure}

For the linear stiffness matrix $\bm{K}_p$, we first  compute the covariance matrix for each of the $D$ demonstrated positions and for each time step $l=1,\ldots,L$ as
\begin{equation}
    \bm{\Sigma}_{p,l} = \frac{1}{D}\sum_{i=1}^{D} \left(\bm{p}^{demo}_{l,i} - \bm{\mu}_{p,l}\right) \left(\bm{p}^{demo}_{l,i} - \bm{\mu}_{p,l}\right)\trsp,
    \label{eq:lin_covariance}
\end{equation}
where the mean $\bm{\mu}_{p,l}$ is computed as
\begin{equation}
     \bm{\mu}_{p,l} = \frac{1}{D}\sum_{i=1}^{D} \bm{p}^{demo}_{l,i} .
    \label{eq:lin_covariance_mean}
\end{equation}
Then, we compute the eigenvalue decomposition of each $\bm{\Sigma}_{p,l} = \bm{V}_l\bm{\Lambda}_l\bm{V}_l\trsp$, where $\bm{V}_l$ is an orthogonal matrix and $\bm{\Lambda}_l = \text{diag}([\lambda_{1,l}, \lambda_{2,l}, \lambda_{3,l}])$. Since all the demonstrations end up in the same position, we know that the eigenvalues of $\bm{\Sigma}_{p,L}$ vanishes, \ie $\bm{\Lambda}_L = \bm{0}$ and $\bm{V}=\bm{I}$. Moreover, we want the stiffness to be maximum at $L$. Therefore, we compute the desired stiffness profile as
\begin{equation}
     \bm{K}^{demo}_{p,l} = \bm{V}_l \begin{bmatrix}
     \frac{1}{\lambda_{1,l}+\bar{k}_{p}} & 0 & 0\\ 0 & \frac{1}{\lambda_{2,l}+\bar{k}_{p}} & 0 \\ 0 & 0 & \frac{1}{\lambda_{3,l}+\bar{k}_{p}}
     \end{bmatrix}\bm{V}_l\trsp ,
    \label{eq:lin_stiff}
\end{equation}
where the maximum linear stiffness gain is set to $\bar{k}_{p} = 1000\,$N/m. As shown in~\figref{fig:drill_results}(b), the stiffness profile in~\eqref{eq:lin_stiff} converges to $\bm{K}_{p,L}=\text{diag}([\bar{k}_{p}, \bar{k}_{p}, \bar{k}_{p}])\,$N/m and varies according to the variability in the demonstrations. Note that existing approaches also impose a minimum value for the stiffness. This is straightforward to implement but it was not needed in the performed experiment as the minimum value of the stiffness computed by means of~\eqref{eq:lin_stiff} was already enough to track the desired trajectory.  

The angular stiffness matrix $\bm{K}_o$ is typically kept constant~\cite{calinon2010learning, silverio2019uncertainty} or diagonal~\cite{kronander2013learning} in related work. We propose instead to exploit the variance of the demonstrations in the tangent space of the \ac{uq} to derive a full stiffness profile. This is possible as the tangent space is locally Euclidean. The first step is to project the demonstrated orientations in the tangent space at the goal quaternion $\boldfrak{q}_{\text{g}}$, obtaining ${\{\{\bm{q}^{demo}_{l,d} = \text{Log}_{\boldfrak{q}_{\text{g}}}(\boldfrak{q}^{demo}_{l,d} )\}_{l=1}^{L}\}_{d=1}^{D}}$. We compute the covariance matrix of the tangent space demonstrations $\bm{q}^{demo}_{l,d}$ as
\begin{equation}
    \bm{\Sigma}_{q,l} = \frac{1}{D}\sum_{i=1}^{D} \left(\bm{q}^{demo}_{l,i} - \bm{\mu}_{q,l}\right) \left(\bm{q}^{demo}_{l,i} - \bm{\mu}_{q,l}\right)\trsp,
    \label{eq:ang_covariance}
\end{equation}
where the mean $\bm{\mu}_{q,l}= \frac{1}{D}\sum_{i=1}^{D} \bm{q}^{demo}_{l,i}$. As for the linear stiffness, we compute the eigenvalue decomposition of $\bm{\Sigma}_{o,l} = \bm{U}_l\bm{\Gamma}_l\bm{U}_l$, where $\bm{U}_l$ is an orthogonal matrix and $\bm{\Gamma}_l = \text{diag}([\gamma_{1,l}, \gamma_{2,l}, \gamma_{3,l}])$. Since all the tangent space data end up to zero---as the tangent space is placed at the goal--we know that the eigenvalues of $\bm{\Sigma}_{o,L}$ vanishes, \ie $\bm{\Gamma}_L = \bm{0}$ and $\bm{U}=\bm{I}$. Moreover, we want the stiffness to be maximum at $L$. Therefore, we compute the desired stiffness profile as
\begin{equation}
     \bm{K}^{demo}_{o,l} = \bm{U}_l \begin{bmatrix}
     \frac{1}{\gamma_{1,l}+\bar{k}_{o}} & 0 & 0\\ 0 & \frac{1}{\gamma_{2,l}+\bar{k}_{o}} & 0 \\ 0 & 0 & \frac{1}{\gamma_{3,l}+\bar{k}_{o}}
     \end{bmatrix}\bm{U}_l\trsp ,
    \label{eq:ang_stiff}
\end{equation}
where the maximum angular stiffness gain is set to $\bar{k}_{0} = 150\,$Nm/rad. As shown in~\figref{fig:drill_results}(b), the stiffness profile computed in~\eqref{eq:lin_stiff} converges to $\bm{K}_{o,L}=\text{diag}([\bar{k}_{o}, \bar{k}_{o}, \bar{k}_{o}])\,$Nm/rad and varies according to the variability in the demonstrations.

The generated linear and angular stiffness profiles are encoded into two stable \acp{ds} using \ac{ours}. We, empirically, use $15$ Gaussian components for each system. The results of the learning procedure,  shown in~\figref{fig:drill_results}(b), confirm that \ac{ours} accurately reproduces complex \ac{spd} profiles while ensuring convergence to a given goal. 

After the learning, the pose trajectory and stiffness profiles are used to control the robot (see~\figref{fig:drill_results}(c)). The robot picks the wooden plate from a (blue) container and reaches the drill pose. During the motion the robot is complaint which allows a safer response to possible external perturbation. The goal pose, instead, is reached with maximum stiffness. As shown in~\figref{fig:drill_results}(c), during the drilling task the maximum position deviation along the drilling direction ($x$-axis) is $3.1\,$cm, while the maximum orientation deviation about the $z$-axis (where the robot perceives the highest momentum) is $1.8\,$deg. This shows that the robot is capable to keep the goal pose, letting the human co-worker to drill the wooden plate.

\section{Conclusions}
\label{sec:concl}

In this paper, we presented \acf{ours}, an approach to learn stable \acp{ds} evolving on Riemannian manifolds. \ac{ours} builds on theoretical stability results, derived for dynamics evolving on Riemannian manifolds, to learn stable and accurate \ac{ds} representations of Riemannian data. Similar to its Euclidean counterparts, \ac{ours} learns a diffeomorphic transformation between a simple stable system and a set of complex demonstrations. The key difference \wrt Euclidean approaches is that  \ac{ours} uses tools from differential geometry to correctly represent complex manifold data, such as orientations and stiffness matrices, with their underlying geometric constraints, \eg unit norm for unit quaternion orientation and symmetry and positive definiteness for stiffness matrices. The proposed approach is firstly evaluated in simulation and compared with an existing approach, modified to deal with Riemannian data. Due to the lack of publicly available Riemannian datasets, we developed a procedure to augment a popular---and potentially any other---Euclidean benchmark with \ac{uq} and \ac{spd} profiles. Finally, in order to perform a thorough evaluation, we also conducted a set of experiments with a real robot performing bottle stacking and cooperative (with a human operator) drilling. Overall, the conducted evaluation shows that \ac{ours} represents a good compromise between accuracy and training time, and that it can be effectively adopted to generate complex robotic skills on manifolds.

\ac{ours} has been evaluated on orientation (\ac{uq}) and stiffness (\ac{spd}) profiles, but it may be extended to other Riemannian manifolds. Therefore, our future research will focus on investigating the possibility to learn stable \ac{ds} on diverse manifolds like Grassmann or hyperbolic. Grassmann manifolds elegantly encode orthogonal projections, while hyperbolic manifolds represent a continuous embedding of discrete structures with possible application to task and motion planning. These manifolds are widely unexploited in robotics and can potentially unlock new applications.


\appendix

\section{Jacobian of the mean of a GMR}\label{app:gmr_jacobian}
Recall that
\begin{equation}
\begin{split}
\psi(\bP) &= \sum_{k=1}^K h_k(\bP)\hat{\psi}(\bP), \\
\hat{\psi}(\bP) &= \bm{\mu}_k^{b} + \bm{\Sigma}_k^{ba}(\bm{\Sigma}_k^{aa})^{-1}(\bP - \bm{\mu}_k^{a}).
\label{eq:D_hat}
\end{split}
\end{equation}
Using the chain rule and~\eqref{eq:D_hat}, $J_\psi(\bP)$ writes as:
\begin{equation}
\begin{split}
\bm{J}_\psi(\bP) = \frac{\partial \psi(\bP)}{\partial \bP} = & \sum_{k=1}^K \frac{\partial h_k(\bP)}{\partial \bP} \hat{\psi}(\bP)\trsp \\ & + h_k(\bP)\frac{\partial \hat{\psi}(\bP)}{\partial \bP}.
\label{eq:J_chain_rule}
\end{split}
\end{equation}
Let us compute the two partial derivatives at the right side of~\eqref{eq:J_chain_rule} separately. Considering the expression of $\hat{\psi}(\bP)$ in~\eqref{eq:D_hat}, and applying the chain rule, it is easy to verify that
\begin{equation}
\frac{\partial \hat{\psi}(\bP)}{\partial \bP} = \bm{\Sigma}_k^{ab}(\bm{\Sigma}_k^{aa})^{-1}.
\label{eq:Jac_D_hat}
\end{equation}
Using the quotient rule, and setting $\hat{\gauss}_k = \gauss(\bP|\bm{\mu}_k^{a}, \bm{\Sigma}_k^{aa})$, the expression of $\frac{\partial h_k(\bP)}{\partial \bP}$ writes as
\begin{equation}
\begin{split}
    \frac{\partial h_k(\bP)}{\partial \bP} = & \frac{\pi_k\frac{\partial \hat{\gauss}_k}{\partial \bP}\sum_{i=1}^K \pi_i\hat{\gauss}_i}{\left(\sum_{i=1}^K \pi_i\hat{\gauss}_i\right)^2} \\ &-\frac{\pi_k \hat{\gauss}_k \sum_{i=1}^K \pi_i\frac{\partial \hat{\gauss}_i }{\partial \bP}}{\left(\sum_{i=1}^K \pi_i\hat{\gauss}_i\right)^2}  .
    \label{eq:h_division_rule}
\end{split}
\end{equation}
Recall that the derivative of a multivariate Gaussian distribution $\hat{\gauss}$ \wrt the input is given by~\cite{petersen2012matrix}
\begin{equation}
    \frac{\partial \hat{\gauss}}{\partial \bP} = -\hat{\gauss} \bm{\Sigma}^{-1}( \bP - \bm{\mu}). 
    \label{eq:grad_gauss}
\end{equation}
Using~\eqref{eq:grad_gauss} to compute the derivatives in~\eqref{eq:h_division_rule} we obtain:
\begin{equation}
\begin{split}
    &\frac{\partial h_k(\bP)}{\partial \bP} = \frac{-\pi_k\hat{\gauss}_k (\bm{\Sigma}_k^{aa})^{-1}(\bP - \bm{\mu}_k^a)\sum_{i=1}^K \pi_i\hat{\gauss}_i}{\left(\sum_{i=1}^K \pi_i\hat{\gauss}_i\right)^2} \\
    &+ \frac{\pi_k \hat{\gauss}_k \sum_{i=1}^K \pi_i \hat{\gauss}_i (\bm{\Sigma}_i^{aa})^{-1}(\bP - \bm{\mu}_i^a)}{\left(\sum_{i=1}^K \pi_i\hat{\gauss}_i\right)^2} \\
    &= \frac{\pi_k\hat{\gauss}_k}{\sum_{i=1}^K \pi_i\hat{\gauss}_i} \left( -(\bm{\Sigma}_k^{aa})^{-1}(\bP - \bm{\mu}_k^a) \frac{\sum_{i=1}^K \pi_i\hat{\gauss}_i}{\sum_{i=1}^K \pi_i\hat{\gauss}_i}\right.\\
    &+ \left. \frac{\sum_{i=1}^K \pi_i \hat{\gauss}_i(\bm{\Sigma}_i^{aa})^{-1}(\bP - \bm{\mu}_i^a)}{\sum_{i=1}^K \pi_i\hat{\gauss}_i}\right) \\
    &= h_k(\bP) \left( -(\bm{\Sigma}_k^{aa})^{-1}(\bP - \bm{\mu}_k^P)\right.\\
    &+ \left. \sum_{i=1}^K h_i(\bP)(\bm{\Sigma}_i^{aa})^{-1}(\bP - \bm{\mu}_i^a)\right).
\end{split}
\label{eq:Jac_h_k}
\end{equation}
By substituting~\eqref{eq:Jac_D_hat} and~\eqref{eq:Jac_h_k} into~\eqref{eq:J_chain_rule}, we obtain the sought expression of the Jacobian in~\eqref{eq:gmr_jacobian}.

\section*{Acknowledgements}
Part of the research presented in this work has been conducted when M. Saveriano was at the Department of Computer Science, University of Innsbruck, Innsbruck, Austria.

This work has been partially supported by the Austrian Research Foundation (Euregio IPN 86-N30, OLIVER) and by CHIST-ERA project IPALM (Academy of Finland decision 326304).

\bibliographystyle{elsarticle-num}
\bibliography{ref.bib}

\end{document}